\documentclass{article}

\usepackage[preprint]{neurips_2026}

\usepackage[utf8]{inputenc} 
\usepackage[T1]{fontenc}    
\usepackage{hyperref}       
\usepackage{url}            
\usepackage{booktabs}       
\usepackage{amsfonts}       
\usepackage{nicefrac}       
\usepackage{microtype}      
\usepackage{xcolor}         
\usepackage{microtype}
\usepackage{graphicx}
\usepackage{subcaption}
\usepackage{booktabs} 
\usepackage{multirow}
\usepackage{enumitem}
\usepackage{makecell}
\usepackage{empheq}
\usepackage{amsmath}
\usepackage{amssymb}
\usepackage{mathtools}
\usepackage{amsthm}
\usepackage{subcaption}
\usepackage{natbib}
\usepackage{algorithm}
\usepackage{algpseudocode}
\usepackage{caption}
\usepackage{wrapfig}

\usepackage{amsmath,amsfonts}

\def\eqref#1{equation~\ref{#1}}

\def\1{\bm{1}}

\DeclareMathAlphabet{\mathsfit}{\encodingdefault}{\sfdefault}{m}{sl}
\SetMathAlphabet{\mathsfit}{bold}{\encodingdefault}{\sfdefault}{bx}{n}

\usepackage{tikz}
\usepackage{pgfplots}
\usepackage{wrapfig}
\pgfplotsset{compat=1.18}
\usepackage[most]{tcolorbox}
\usepackage{tabularx}
\usepackage{array}
\usepackage{makecell}

\definecolor{algteal}{RGB}{35,115,115}
\definecolor{algmagenta}{RGB}{170,35,110}
\definecolor{algblue}{RGB}{35,70,160}
\definecolor{myorange}{RGB}{255,127,14}
\definecolor{myblue}{RGB}{0,174,239}

\usepackage{colortbl} 
\definecolor{oursrow}{RGB}{235,232,226} 

\newcommand{\alglabel}[1]{\textcolor{algteal}{\textbf{#1}}}
\newcommand{\algaction}[1]{\textcolor{algmagenta}{\textbf{#1}}}
\newcommand{\algmath}[1]{\textcolor{algblue}{$#1$}}

\usepackage[capitalize,noabbrev]{cleveref}
\theoremstyle{plain}
\newtheorem{theorem}{Theorem}[section]
\newtheorem{proposition}[theorem]{Proposition}
\newtheorem{lemma}[theorem]{Lemma}
\newtheorem{corollary}[theorem]{Corollary}
\theoremstyle{definition}

\newtheorem{assumption}[theorem]{Assumption}
\theoremstyle{remark}
\newtheorem{remark}[theorem]{Remark}
\algrenewcommand\algorithmicrequire{\textbf{Input:}}
\algrenewcommand\algorithmicensure{\textbf{Output:}}

\usepackage[textsize=tiny]{todonotes}

\title{Preconditioned Flow Matching}

\author{%
  Shadab Ahamed \\
  University of British Columbia,\\Vancouver, Canada\\
  \texttt{shadab.ahamed@hotmail.com} \\
  \And
  Eshed Gal \\
  University of British Columbia,\\Vancouver, Canada\\
  \texttt{eshedg@cs.ubc.ca} \\
  \And 
  Md Shahriar Rahim Siddiqui\\
  University of British Columbia,\\Vancouver, Canada\\
  \texttt{shahriar.siddiqui@ubc.ca} \\
  \And 
  Simon Ghyselincks \\
  University of British Columbia,\\Vancouver, Canada\\
  \texttt{sghyseli@cs.ubc.ca} \\
  \And 
  Moshe Eliasof \\
  University of Cambridge,\\Cambridge, United Kingdom\\
  \texttt{me532@cam.ac.uk} \\
  \And 
  Eldad Haber\\
  University of British Columbia,\\Vancouver, Canada\\
  \texttt{ehaber@eoas.ubc.ca} \\
}

\begin{document}

\maketitle
\begin{abstract}
Flow matching (FM) learns vector fields by regressing stochastic velocity targets along intermediate distributions $p_t$. 
We identify a geometric optimization bottleneck in this regression problem: when the covariance $\Sigma_t$ of $p_t$ is ill-conditioned, gradient-based training rapidly fits high-variance directions while making slow progress along low-variance ones. 
In an exactly solvable Gaussian setting, we prove that the excess risk is weighted by $\Sigma_t$, and that both gradient descent and stochastic gradient descent inherit condition-number-dependent convergence. 
We then extend the analysis to Gaussian mixtures, showing that multimodality does not average away this effect; instead, the slowest and worst-conditioned component can control optimization. 
Motivated by this analysis, we propose \emph{preconditioned flow matching}, a precondition-then-match framework that transforms the target distribution into a more isotropic representation, trains the main flow in the transformed space, and maps generated samples back through the inverse transformation. 
We show theoretically that preconditioning reshapes the intermediate FM path and improves its conditioning. 
Across controlled Gaussian and Gaussian-mixture experiments, latent MNIST and other high resolution image datasets up to $512{\times}512$ resolution, preconditioning improves path-conditioning diagnostics, low-eigenvalue recovery, FID, MMD, precision, and recall. 
Compute-matched baselines and preconditioner-quality ablations further show that the gains are not explained merely by additional preconditioner parameters, but by improved geometry of the downstream flow matching problem.
\end{abstract}

\section{Introduction}

Flow matching \citep{lipman2023flow} and score-based diffusion models \citep{song2020score,ho2020denoising} have become central tools for continuous-time generative modeling, achieving state-of-the-art performance across image \citep{caetano2025symmetrical_flow_matching}, audio \citep{guan2024lafma}, and 3D synthesis \citep{wang2025ctflow, danese2026flowlet}. In both paradigms, generation is framed as the process of transporting samples from a simple reference distribution to a complex data distribution by learning a time-dependent dynamical system. Despite their empirical success, these models exhibit a persistent and increasingly well-documented optimization phenomenon: \textcolor{myblue}{\emph{training loss often plateaus long before sample quality saturates}} \citep{lin2023diffusion, xu2026diagnosing}.

Empirically, models that appear nearly converged under their training objective can continue to yield substantial improvements in sample fidelity for many additional epochs. This behavior suggests that slow convergence is not primarily driven by model capacity, architecture, or data availability. Instead, it points to a misalignment between the optimization objective and the aspects of the learned dynamics most relevant to generation.

Existing mitigation strategies, such as carefully tuned noise schedules \citep{aranguri2025optimizing}, time-dependent loss reweighting \citep{billera2026time, diefenbacher2020dctrgan}, or probability flow distillation \citep{zhou2025score}, can improve sample quality in practice. However, these methods primarily modify the training schedule, loss weighting, or sampling procedure, rather than directly addressing the conditioning of the regression problems induced along the flow matching path. This distinction motivates our focus on the geometry of the intermediate distributions themselves. We discuss additional related work in \Cref{app:related_work}.

In this work, we show that a key contributor to slow convergence lies in the \emph{conditioning} of the regression problems induced by flow matching objectives. At each time step, training reduces to the estimation of a vector field from samples, drawn from an intermediate distribution along the transport path. When these distributions exhibit strong anisotropy, the resulting optimization problem becomes ill-conditioned: \textcolor{myblue}{\emph{gradient-based methods rapidly fit high-variance directions while making only marginal progress along low-variance ones}}.
\begin{wrapfigure}[20]{r}{0.45\columnwidth}
    \vspace{-0.1em}
    \centering
    \includegraphics[width=0.34\columnwidth]{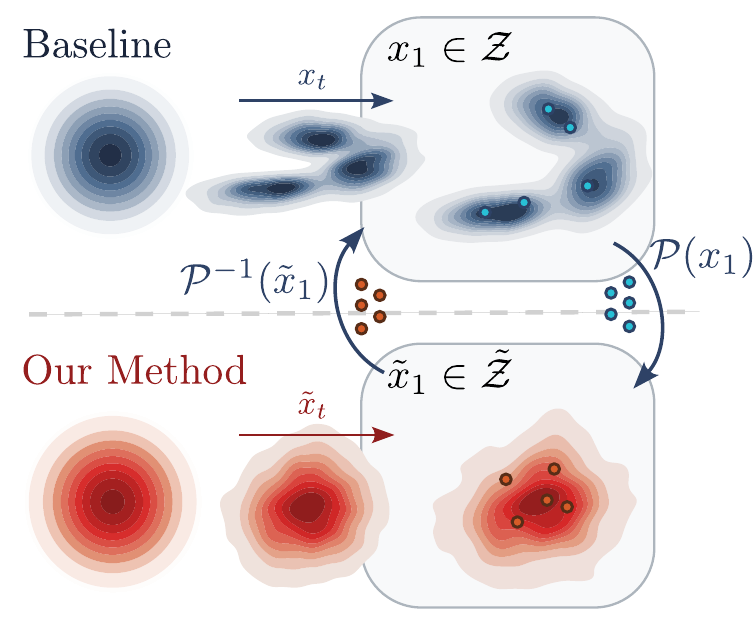}
    \caption{
        \textbf{Precondition-then-match.}
        \textbf{Top:} Standard flow matching learns through anisotropic intermediate distributions, which stagnates optimization.
        \textbf{Bottom:} A reversible preconditioner $\mathcal{P}$ maps the target to a more Gaussian-like space, improving the conditioning of the transport path. Samples are mapped back with $\mathcal{P}^{-1}$.
        }
    \label{fig:process_flow}
\end{wrapfigure}

To isolate and study this effect, we analyze flow matching in controlled settings where the exact transport dynamics are analytically tractable. We begin with a linear-Gaussian model problem, followed by a Gaussian-mixture extension that captures multimodality. In both cases, we show that optimization dynamics closely mirror those of gradient descent and stochastic gradient descent on ill-conditioned least-squares objectives, as commonly used in flow matching \citep{dao2023flow, lipman2023flow, lipman2024flow}. Crucially, these difficulties arise even when the model class is expressive enough to represent the exact velocity field, demonstrating that optimization, rather than approximation,  is the fundamental bottleneck in convergence.

Motivated by this understanding, we propose a mitigation approach for flow matching: \emph{preconditioning}. In its broader form, preconditioning refers to strategies that reshape the geometry of the learning problem without modifying the underlying generative process. This should be distinguished from standard latent generative modeling, where an
autoencoder or VAE is used primarily to obtain a compact representation for efficient generation. Nevertheless, such latent spaces need not be isotropic or well-conditioned for the subsequent flow matching regression problem. Preconditioning is therefore complementary to latent
flow matching and latent diffusion: it can be applied either in data space or on top of an existing latent representation to further improve the conditioning of the intermediate FM path. In our setting, this requires an additional preconditioning stage and associated design choices, \textit{\textcolor{myorange}{but the overhead is targeted at improving the geometry of the downstream flow matching problem rather than merely increasing model capacity}}.

\paragraph{Contributions.}
Our main contributions are: (i) A theoretical analysis of the optimization process in flow matching, revealing how data anisotropy governs the optimization speed. (ii) A principled precondition-then-match framework that improves the conditioning of the downstream FM regression problem while leaving the main flow architecture and sampling ODE unchanged. (iii) Identification and derivation of FM-specific consequences of this regression geometry: small FM loss can hide large velocity error in low-variance directions; scalar time-reweighting cannot remove fixed-time directional stiffness, and preconditioning changes the covariance of the FM path itself rather than merely rescaling the objective. (iv) We experiment with a number of datasets, from 2D points to MNIST and other image datasets, and demonstrate the novelty of our method.

\section{Flow Matching as Stochastic Regression}

Continuous-time generative models construct a transport from a simple reference distribution $p_0$ (typically Gaussian) to a target data distribution $p_1$. Rather than learning a direct map $x_0 \mapsto x_1$, flow matching introduces a family of intermediate states
\begin{equation}
\label{eq:sint}
x_t = s(t)x_1 + c(t)x_0,
\end{equation}
where $x_0 \sim p_0$, $x_1 \sim p_1$, and $s(0)=0$, $s(1)=1$, $c(0)=1$, $c(1)=0$. The goal is to learn a velocity field governing the evolution of the intermediate distributions $p_t$ over time. Flow matching (FM) learns the right-hand side of the transport ODE \citep{lipman2023flow} $dx_t / dt = v_\theta(x_t,t)$. Given the interpolation path in \Cref{eq:sint}, the target velocity is
$v_t^\star(x_0,x_1) = s'(t)x_1 + c'(t)x_0$. The standard conditional FM objective can be written as
\begin{equation}
\label{eq:cfm_expectation}
\mathcal{L}_{\mathrm{CFM}}(\theta)
=
\mathbb{E}_{t,x_0,x_1}
\Big[
\|v_\theta(x_t,t)-v_t^\star(x_0,x_1)\|^2
\Big],
\end{equation}
where $t \sim \mathrm{Unif}[0,1]$, $x_0 \sim p_0$, and $x_1 \sim p_1$.

\begin{proposition}[Conditional-mean characterization of CFM]
\label{prop:cfm_regression_decomposition}
For any measurable vector field $g:\mathbb{R}^d \times [0,1] \to \mathbb{R}^d$, define
\[
\mathcal{L}(g)
=
\mathbb{E}\big[\|g(x_t,t)-v_t^\star(x_0,x_1)\|^2\big],
\]
where the expectation is over $t \sim \mathrm{Unif}[0,1]$, $x_0 \sim p_0$, and $x_1 \sim p_1$. Then the population minimizer is
\begin{equation}
\label{eq:cfm_population_minimizer}
g^\star(x,t)
=
\underbrace{\mathbb{E}\big[v_t^\star(x_0,x_1)\mid x_t=x,\, t\big]}_{\text{conditional mean target}}.
\end{equation}
Moreover,
\begin{equation}
\label{eq:regression_decomposition}
\mathcal{L}(g)
=
\underbrace{\mathbb{E}\!\big[\|g(x_t,t)-g^\star(x_t,t)\|^2\big]}_{\text{optimization / approximation error}}
\;+\;
\underbrace{
\mathbb{E}\!\left[
\operatorname{Tr}\!\left(
\operatorname{Cov}\!\left(
v_t^\star(x_0,x_1)\mid x_t,t
\right)
\right)
\right]
}_{\text{irreducible stochastic-target error}}.
\end{equation}


\end{proposition}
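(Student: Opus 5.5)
The plan is the standard bias--variance (Pythagorean) decomposition for squared loss, carried out conditionally on the pair $(x_t,t)$. Write $Z=(x_t,t)$ and $V=v_t^\star(x_0,x_1)$. Before doing anything we must make the objects well defined: we assume $\mathbb{E}\|V\|^2<\infty$, which holds whenever $p_0,p_1$ have finite second moments and $s',c'$ are bounded on $[0,1]$. Under this assumption $g^\star(Z)=\mathbb{E}[V\mid Z]$ exists as a measurable function of $Z$ (Doob--Dynkin). It is also square integrable, since conditional Jensen gives $\|\mathbb{E}[V\mid Z]\|^2\le \mathbb{E}[\|V\|^2\mid Z]$. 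We only need to consider $g$ with $\mathbb{E}\|g(Z)\|^2<\infty$; otherwise $\mathcal{L}(g)=\infty$, and both sides of \eqref{eq:regression_decomposition} are infinite.

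The first step is to expand
\[
\|g(Z)-V\|^2=\|g(Z)-g^\star(Z)\|^2+2\langle g(Z)-g^\star(Z),\,g^\star(Z)-V\rangle+\|g^\star(Z)-V\|^2 .
\]
The second step is to show that the cross term has zero expectation. We condition on $Z$ and use the tower property. Since $g(Z)-g^\star(Z)$ is $\sigma(Z)$-measurable, it factors out of the conditional expectation, leaving $\langle g(Z)-g^\star(Z),\,\mathbb{E}[g^\star(Z)-V\mid Z]\rangle$. This vanishes because $\mathbb{E}[V\mid Z]=g^\star(Z)$. The cross term is integrable by Cauchy--Schwarz, since both factors are in $L^2$, so the tower step is justified.

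The third step is to identify the last term. Conditionally on $Z$ we have $\mathbb{E}[\|V-\mathbb{E}[V\mid Z]\|^2\mid Z]=\operatorname{Tr}\operatorname{Cov}(V\mid Z)$, because the trace of the conditional covariance is the sum of the coordinatewise conditional variances. Taking the outer expectation gives the irreducible term, and the identity \eqref{eq:regression_decomposition} follows. Minimality is then immediate. The second term of \eqref{eq:regression_decomposition} does not depend on $g$, and the first term is nonnegative and equals zero exactly when $g=g^\star$ almost everywhere with respect to the law of $(x_t,t)$. So $g^\star$ is the population minimizer, unique up to null sets. This proves \eqref{eq:cfm_population_minimizer}.

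The only real subtlety is measure-theoretic bookkeeping. We need integrability to justify the tower property, and we must interpret the conditioning on the joint variable $(x_t,t)$ with $t\sim\mathrm{Unif}[0,1]$ drawn independently of $(x_0,x_1)$, so that ``given $x_t=x,\,t$'' refers to a regular conditional distribution. The algebra itself is routine.
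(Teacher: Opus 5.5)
Your proposal is correct and follows essentially the same route as the paper. The paper also sets $X=(x_t,t)$ and $Y=v_t^\star(x_0,x_1)$, expands $\|g(X)-g^\star(X)+g^\star(X)-Y\|^2$, kills the cross term by conditioning on $X$, and identifies the remaining term as $\mathbb{E}[\operatorname{Tr}\operatorname{Cov}(Y\mid X)]$. Your integrability bookkeeping makes explicit assumptions the paper leaves implicit without changing the argument: finite second moment of $V$, square-integrability of $g^\star$ via conditional Jensen, and the $\mathcal{L}(g)=\infty$ case.
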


The trace-of-covariance term in \Cref{eq:regression_decomposition} is the \textit{irreducible stochastic-target error}. In particular, $\mathcal{L}(g)-\mathcal{L}(g^\star) = \mathbb{E}\!\big[\|g(x_t,t)-g^\star(x_t,t)\|^2\big]$ represents the \textit{excess risk} (proof in \Cref{app:cfm_regression_decomposition_proof}). \Cref{prop:cfm_regression_decomposition} shows that conditional FM is a stochastic-target regression problem: the target velocity $v_t^\star(x_0,x_1)$ is generally not a deterministic function of $(x_t,t)$, so the trace-of-covariance term in \Cref{eq:regression_decomposition} is an irreducible error floor, while the population minimizer remains the conditional mean velocity field. This establishes conditional FM as regression under the intermediate distributions $p_t$. In \Cref{gaussian_model_problem}, we make the role of data geometry explicit in analytically tractable Gaussian and Gaussian-mixture flow matching settings.

\section{Exact Conditioning Theory in Gaussian and Gaussian-Mixture FM}
\label{gaussian_model_problem}

We now make the role of data geometry explicit in analytically tractable FM settings. Our goal in this section is not to prove a global convergence theorem for arbitrary nonlinear networks, but rather to isolate the optimization mechanism induced by the intermediate distributions $p_t$ in settings where the population conditional FM target can be characterized exactly. We begin with a Gaussian model, where the target velocity is exactly linear, and then extend the analysis to a Gaussian mixture model, where multimodality sharpens the same conditioning bottleneck.

\subsection{Gaussian FM as an exactly solvable regression problem}

We consider the linear interpolation $x_t = (1-t)x_0 + t x_1$ with $t \in [0,1]$, $x_0 \sim \mathcal{N}(0,I), x_1 \sim \mathcal{N}(0,H)$, where $H \in \mathbb{R}^{d \times d}$ is symmetric positive definite and $x_0$ and $x_1$ are independent. Along the interpolation, the intermediate covariance transitions from $I$ to $H$ as $t$ goes from 0 to 1, so the regression geometry gradually inherits the anisotropy of the target distribution. 

We write the eigendecomposition, $H = U \Lambda U^\top$ with $\Lambda = \operatorname{diag}(\lambda_1,\dots,\lambda_d)$ and $\lambda_i>0$. We say that $H$ is ill-conditioned when the condition number 
\begin{equation}
\kappa(H)=\frac{\lambda_{\max}(H)}{\lambda_{\min}(H)}
\end{equation}
is large.

\begin{proposition}[Exact Gaussian CFM target and intermediate covariance]
\label{prop:gaussian_exact_target}
Under the linear interpolation, the intermediate state $x_t$ is Gaussian with covariance
\begin{equation}
\Sigma_t := \mathbb{E}[x_t x_t^\top] = (1-t)^2 I + t^2 H.
\label{eq:Sigma_t}
\end{equation}
Moreover, for the linear interpolation, the conditional FM population target is
\begin{equation}
g_t^\star(x_t)
:=
\mathbb{E}[x_1-x_0 \mid x_t]
=
A^\star(t)x_t,
\label{eq:gaussian_conditional_mean_target}
\end{equation}
where
\begin{equation}
A^\star(t)
=
\operatorname{Cov}(x_1-x_0,x_t)\,\Sigma_t^{-1}
=
\bigl(tH-(1-t)I\bigr)\bigl((1-t)^2 I + t^2 H\bigr)^{-1}.
\label{eq:A_star_t}
\end{equation}
\end{proposition}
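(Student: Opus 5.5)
The plan is to use joint Gaussianity and the standard Gaussian conditioning formula. Stack $z=(x_0,x_1)\sim\mathcal{N}\bigl(0,\operatorname{diag}(I,H)\bigr)$, which is Gaussian because $x_0$ and $x_1$ are independent Gaussians. Both $x_t=(1-t)x_0+tx_1$ and $y:=x_1-x_0$ are linear images of $z$. Hence, for each fixed $t$, the pair $(y,x_t)$ is jointly Gaussian with mean zero.

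First I would compute the second moments directly from independence, so that all cross terms $\mathbb{E}[x_0x_1^\top]$ vanish. This gives
\[
\Sigma_t=\mathbb{E}[x_tx_t^\top]=(1-t)^2\mathbb{E}[x_0x_0^\top]+t^2\mathbb{E}[x_1x_1^\top]=(1-t)^2I+t^2H,
\]
which is \Cref{eq:Sigma_t}. The same expansion gives the cross-covariance
\[
\operatorname{Cov}(y,x_t)=\mathbb{E}\bigl[(x_1-x_0)\bigl((1-t)x_0+tx_1\bigr)^\top\bigr]=tH-(1-t)I.
\]
Since the means are zero, covariances and second moments coincide. I would also note that $\Sigma_t$ is symmetric positive definite for every $t\in[0,1]$: it is a nonnegative combination of $I$ and $H\succ0$ with coefficients that do not vanish simultaneously. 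In the eigenbasis of $H$ its eigenvalues are $(1-t)^2+t^2\lambda_i>0$, so $\Sigma_t^{-1}$ exists.

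Next I would invoke the Gaussian conditioning identity. For jointly Gaussian zero-mean $(y,x)$ with $\operatorname{Cov}(x)\succ0$, one has $\mathbb{E}[y\mid x]=\operatorname{Cov}(y,x)\operatorname{Cov}(x)^{-1}x$. To keep the argument self-contained, I would prove this by defining $A=\operatorname{Cov}(y,x_t)\Sigma_t^{-1}$ and the residual $r=y-Ax_t$. The residual satisfies $\operatorname{Cov}(r,x_t)=\operatorname{Cov}(y,x_t)-A\Sigma_t=0$. Because $(r,x_t)$ is jointly Gaussian, being uncorrelated implies that $r$ is independent of $x_t$. Therefore $\mathbb{E}[r\mid x_t]=\mathbb{E}[r]=0$, which gives $\mathbb{E}[y\mid x_t]=Ax_t$. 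By \Cref{prop:cfm_regression_decomposition}, this conditional mean is exactly the population CFM target $g_t^\star$, because $v_t^\star=x_1-x_0$ for the linear interpolation. Substituting the moments yields \Cref{eq:A_star_t}.

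No step is a real obstacle; the only care needed is at the endpoints. At $t=0$ and $t=1$ the covariance $\Sigma_t$ equals $I$ or $H$, so it stays invertible and the formula remains valid. One could optionally remark that $tH-(1-t)I$ and $\Sigma_t$ commute, since both are polynomials in $H$. Consequently $A^\star(t)=U\,\operatorname{diag}\bigl((t\lambda_i-(1-t))/((1-t)^2+t^2\lambda_i)\bigr)U^\top$, and the order of the product is immaterial.
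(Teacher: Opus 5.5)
Your proposal is correct and follows the paper's proof essentially step for step. Like the paper, you expand $\Sigma_t$ and $\operatorname{Cov}(x_1-x_0,x_t)$ using independence and then apply the linear Gaussian conditioning formula. Your extra checks, namely that $\Sigma_t\succ 0$ for every $t\in[0,1]$ and that the conditioning identity holds because the residual $y-A^\star(t)x_t$ is uncorrelated with, hence independent of, $x_t$, fill in details the paper simply asserts.
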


\Cref{prop:gaussian_exact_target} shows that in the Gaussian case, the population conditional FM target is exactly linear in $x_t$ (proof in \Cref{app:gaussian_exact_target_proof}). We now quantify the regression geometry induced by this exact target.

\begin{lemma}[Exact excess-risk decomposition for linear predictors]
\label{lem:gaussian_excess_risk}
Fix $t \in [0,1]$ and define
\begin{equation}
\mathcal{L}_t(A)
=
\mathbb{E}\bigl[\|A x_t - (x_1-x_0)\|^2\bigr].
\label{eq:LtA}
\end{equation}
Then the unique minimizer is $A^\star(t)$ from \Cref{eq:A_star_t}, and for every $A \in \mathbb{R}^{d \times d}$,
\begin{equation}
\label{eq:excess_risk_trace}
\mathcal{L}_t(A)
=
\underbrace{\mathcal{L}_t(A^\star(t))}_{\text{minimum achievable risk}}
+
\underbrace{\operatorname{Tr}\!\Big((A-A^\star(t))\Sigma_t(A-A^\star(t))^\top\Big)}_{\text{geometry-controlled excess risk}}.
\end{equation}
Equivalently,
\begin{equation}
\label{eq:gaussian_risk_decomp}
\mathcal{L}_t(A)
=
\underbrace{\mathbb{E}\!\left[
\operatorname{Tr}\!\left(
\operatorname{Cov}(x_1-x_0\mid x_t)
\right)
\right]}_{\text{irreducible stochastic-label error}}
+
\underbrace{\operatorname{Tr}\!\Big((A-A^\star(t))\Sigma_t(A-A^\star(t))^\top\Big)}_{\text{geometry-controlled excess risk}}.
\end{equation}
In particular,
\begin{equation}
\label{eq:gaussian_excess_risk_only}
\mathcal{L}_t(A)-\mathcal{L}_t(A^\star(t))
=
\underbrace{\operatorname{Tr}\!\Big((A-A^\star(t))\Sigma_t(A-A^\star(t))^\top\Big)}_{\text{excess risk weighted by }\Sigma_t}.
\end{equation}
\end{lemma}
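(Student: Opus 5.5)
The plan is to expand the quadratic risk around $A^\star(t)$ and show the cross term vanishes by the normal equations. Write $y := x_1 - x_0$ and $\Delta := A - A^\star(t)$. Then
\[
\mathcal{L}_t(A) = \mathbb{E}\bigl[\|(A^\star(t)x_t - y) + \Delta x_t\|^2\bigr]
= \mathcal{L}_t(A^\star(t)) + 2\,\mathbb{E}\bigl[(A^\star(t)x_t - y)^\top \Delta x_t\bigr] + \mathbb{E}\bigl[\|\Delta x_t\|^2\bigr].
\]

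\textbf{Quadratic term.} Since $x_t$ has mean zero and second moment $\Sigma_t$ by \Cref{prop:gaussian_exact_target}, we get $\mathbb{E}\|\Delta x_t\|^2 = \operatorname{Tr}(\Delta \mathbb{E}[x_tx_t^\top]\Delta^\top) = \operatorname{Tr}(\Delta\Sigma_t\Delta^\top)$.

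\textbf{Cross term.} Write it as $2\operatorname{Tr}\bigl(\Delta\,\mathbb{E}[x_t(A^\star(t)x_t - y)^\top]\bigr)$. The inner matrix is $\Sigma_t A^\star(t)^\top - \operatorname{Cov}(x_t, y)$, using that all variables are centered. It vanishes because $A^\star(t) = \operatorname{Cov}(y, x_t)\Sigma_t^{-1}$ from \Cref{eq:A_star_t} and $\Sigma_t$ is symmetric. This is the only step needing care: checking that the explicit formula in \Cref{eq:A_star_t} indeed equals $\operatorname{Cov}(y,x_t)\Sigma_t^{-1}$. Expanding $\mathbb{E}[(x_1-x_0)((1-t)x_0+tx_1)^\top]$ with independence gives $tH-(1-t)I$, which commutes with $\Sigma_t$ since both are polynomials in $H$.

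\textbf{Uniqueness.} This yields \Cref{eq:excess_risk_trace} and \Cref{eq:gaussian_excess_risk_only}. Uniqueness of the minimizer follows because $\Sigma_t = (1-t)^2I + t^2H \succ 0$ for every $t\in[0,1]$: at least one of $(1-t)^2$ and $t^2$ is positive, and $H \succ 0$. Hence $\operatorname{Tr}(\Delta\Sigma_t\Delta^\top) = \|\Delta\Sigma_t^{1/2}\|_F^2 > 0$ whenever $\Delta \neq 0$.

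\textbf{Equivalent form.} For \Cref{eq:gaussian_risk_decomp}, identify $\mathcal{L}_t(A^\star(t))$ with the irreducible term. By \Cref{prop:gaussian_exact_target}, $A^\star(t)x_t = \mathbb{E}[y\mid x_t]$, so $\mathcal{L}_t(A^\star(t)) = \mathbb{E}\|y - \mathbb{E}[y\mid x_t]\|^2 = \mathbb{E}[\operatorname{Tr}\operatorname{Cov}(y\mid x_t)]$. This is exactly \Cref{prop:cfm_regression_decomposition} specialized to a fixed $t$ with $g = g^\star$. Alternatively, one can apply that proposition's decomposition directly with $g(x) = Ax$, since $\mathbb{E}\|Ax_t - A^\star(t)x_t\|^2$ is the same trace term computed above.
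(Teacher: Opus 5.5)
Your proof is correct, but it eliminates the cross term by a different mechanism than the paper. The paper plugs $g(x)=Ax$ into \Cref{prop:cfm_regression_decomposition}. There the cross term vanishes by conditional-mean orthogonality, $\mathbb{E}[y-A^\star(t)x_t\mid x_t]=0$, which already requires the Gaussian identity $A^\star(t)x_t=\mathbb{E}[y\mid x_t]$ from \Cref{prop:gaussian_exact_target}. That route yields \Cref{eq:gaussian_risk_decomp} first, and \Cref{eq:excess_risk_trace} then follows by evaluating at $A^\star(t)$ and subtracting.

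You instead expand around $A^\star(t)$ and use only the normal equations $\mathbb{E}[x_t(A^\star(t)x_t-y)^\top]=\Sigma_tA^\star(t)^\top-C_t^\top=0$, i.e.\ orthogonality of the residual to \emph{linear} functions of $x_t$. This buys generality. Your derivation of \Cref{eq:excess_risk_trace} and \Cref{eq:gaussian_excess_risk_only} uses only second moments. It therefore holds for any $(x_0,x_1)$ whose second-moment matrices are $\Sigma_t$ and $C_t$, with $A^\star(t)=C_t\Sigma_t^{-1}$ the best linear predictor. An example is linear predictors in the Gaussian-mixture setting, where the conditional mean is nonlinear. Gaussianity enters only in your final step, identifying $\mathcal{L}_t(A^\star(t))$ with $\mathbb{E}[\operatorname{Tr}\operatorname{Cov}(y\mid x_t)]$.

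The paper's route is shorter given the earlier propositions, but it relies on the Gaussian conditional-mean fact for all three identities.

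A minor point: you do not need the commutation of $tH-(1-t)I$ with $\Sigma_t$. The identity $\Sigma_tA^\star(t)^\top=C_t^\top$ follows from $A^\star(t)=C_t\Sigma_t^{-1}$ and the symmetry of $\Sigma_t$ alone.
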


\textbf{Consequence.} \Cref{lem:gaussian_excess_risk} identifies the exact population geometry of Gaussian flow matching: after removing the irreducible stochastic-target variance, the optimization term is a quadratic form governed by $\Sigma_t$ (proof in \Cref{app:gaussian_excess_risk_proof}). The role of data geometry therefore enters directly through the spectrum of $\Sigma_t$.


\begin{tcolorbox}[
colback=blue!4,
colframe=blue!35!black,
boxrule=0.35pt,
arc=1pt,
left=4pt,right=4pt,top=4pt,bottom=4pt,
boxsep=0pt,
before skip=3pt,
after skip=3pt
]
\footnotesize
\textbf{Key observation:}
Even when the population conditional FM target is exactly representable, optimization can be arbitrarily slow due solely to the ill-conditioning of the intermediate covariance $\Sigma_t$, which suppresses learning along low-variance directions.
\end{tcolorbox}

\subsection{Conditioning controls optimization in Gaussian FM}

Since $\Sigma_t$ and $H$ share the same eigenvectors, the eigenvalues of $\Sigma_t$ are $\sigma_i(t) = (1-t)^2 + t^2 \lambda_i$ with $i=1,\dots,d$.
Hence,
\begin{equation}
\kappa(\Sigma_t)
=
\frac{\max_i \sigma_i(t)}{\min_i \sigma_i(t)}
=
\frac{(1-t)^2 + t^2 \lambda_{\max}}{(1-t)^2 + t^2 \lambda_{\min}}.
\label{eq:kappa_sigma_t}
\end{equation}
For $t=0$, $\Sigma_t=I$ and the problem is perfectly conditioned. For $t=1$, $\Sigma_t=H$, so the regression inherits the full anisotropy of the target distribution. For intermediate times, $\kappa(\Sigma_t)$ interpolates between these two regimes.

\begin{theorem}[Gradient-descent convergence is controlled by $\Sigma_t$]
\label{thm:gd_gaussian}
Fix $t \in [0,1]$ and consider the quadratic population loss $\mathcal{L}_t(A)$ where $A \in \mathbb{R}^{d\times d}$ is the optimization variable, $x_t=(1-t)x_0+t x_1$, and $\Sigma_t=\mathbb{E}[x_t x_t^\top]$. Let the cross-covariance be
\begin{equation}
C_t := \operatorname{Cov}(x_1-x_0,x_t)=tH-(1-t)I,
\label{eq:C_t_theorem}
\end{equation}
and let $A^\star(t)$ denote the unique minimizer of $\mathcal{L}_t$, given by $A^\star(t)\Sigma_t = C_t$. Consider full-batch gradient descent applied to $\mathcal{L}_t(A)$ with step size $\eta>0$:
\begin{equation}
A_{k+1}=A_k-2\eta\bigl(A_k\Sigma_t-C_t\bigr).
\label{eq:gd_update_A_refined}
\end{equation}
Define the error as $E_k := A_k-A^\star(t)$. Then the error evolves according to
\begin{equation}
E_{k+1}=E_k\bigl(I-2\eta\Sigma_t\bigr).
\label{eq:error_recursion_gd_refined}
\end{equation}

Consequently, if $0<\eta<1/\sigma_{\max}(t)$, where $\sigma_{\max}(t)$ and $\sigma_{\min}(t)$ denote the largest and smallest eigenvalues of $\Sigma_t$, then gradient descent converges linearly to $A^\star(t)$. With the optimal fixed step size $\eta^\star = 1/(\sigma_{\max}(t)+\sigma_{\min}(t))$, the number of iterations required to reduce the error by a factor $\varepsilon \in (0,1)$ satisfies
\begin{equation}
k = \mathcal{O}\!\Big(\kappa(\Sigma_t)\log(1/\varepsilon)\Big).
\label{eq:gd_iteration_complexity_refined}
\end{equation}
\end{theorem}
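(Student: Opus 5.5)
The plan is to reduce everything to the spectral decomposition of $\Sigma_t$, which by \Cref{prop:gaussian_exact_target} and the discussion preceding \Cref{eq:kappa_sigma_t} is $\Sigma_t = U\operatorname{diag}(\sigma_1(t),\dots,\sigma_d(t))U^\top$ with $\sigma_i(t)=(1-t)^2+t^2\lambda_i>0$.

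\textbf{Step 1: the gradient.} Expanding \Cref{eq:LtA} and using $\mathbb{E}[(x_1-x_0)x_t^\top]=C_t$ gives $\nabla_A\mathcal{L}_t(A)=2(A\Sigma_t-C_t)$. This justifies the update \Cref{eq:gd_update_A_refined}, and setting the gradient to zero recovers $A^\star(t)\Sigma_t=C_t$, consistent with \Cref{lem:gaussian_excess_risk}.

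\textbf{Step 2: the error recursion.} Substituting $C_t=A^\star(t)\Sigma_t$ into the update and subtracting $A^\star(t)$ from both sides yields \Cref{eq:error_recursion_gd_refined} directly. Iterating gives
\[
E_k=E_0(I-2\eta\Sigma_t)^k .
\]
Right-multiplying by $U$ diagonalizes the recursion: the $i$-th column of $E_kU$ is scaled by $(1-2\eta\sigma_i(t))^k$. Hence
\[
\|E_k\|_F\le \rho^k\|E_0\|_F, \qquad \rho=\max_i|1-2\eta\sigma_i(t)|.
\]
The same factor, squared, controls the excess risk in \Cref{eq:gaussian_excess_risk_only}. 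For $0<\eta<1/\sigma_{\max}(t)$ we have $2\eta\sigma_i\in(0,2)$ for every $i$, so $\rho<1$ and convergence is linear.

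\textbf{Step 3: the optimal rate.} The main care is needed here, because the statement's $\eta^\star=1/(\sigma_{\max}+\sigma_{\min})$ corresponds to the update with the factor $2$ absorbed. With the factor $2$, the minimizer of $\max_i|1-2\eta\sigma_i|$ over $\eta$ equalizes the extremes, $2\eta(\sigma_{\max}+\sigma_{\min})=2$. This gives exactly $\eta^\star=1/(\sigma_{\max}+\sigma_{\min})$ and
\[
\rho^\star=\frac{\sigma_{\max}-\sigma_{\min}}{\sigma_{\max}+\sigma_{\min}}=\frac{\kappa-1}{\kappa+1},
\]
so the stated step size is consistent. I would also check that $\eta^\star<1/\sigma_{\max}$, which holds since $\sigma_{\min}>0$.

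\textbf{Step 4: iteration count.} Solving $\rho^{\star k}\le\varepsilon$ requires $k\ge \log(1/\varepsilon)/\log(1/\rho^\star)$. Using $\log\frac{\kappa+1}{\kappa-1}\ge \frac{2}{\kappa+1}$ (from $\log(1+x)\ge x/(1+x)$ with $x=2/(\kappa-1)$), this gives
\[
k=\left\lceil\tfrac{\kappa+1}{2}\log(1/\varepsilon)\right\rceil=\mathcal{O}\big(\kappa(\Sigma_t)\log(1/\varepsilon)\big).
\]
The case $\kappa=1$ is trivial, since then $\rho^\star=0$ and convergence is immediate. This establishes \Cref{eq:gd_iteration_complexity_refined}, with $\kappa(\Sigma_t)$ given explicitly by \Cref{eq:kappa_sigma_t}.
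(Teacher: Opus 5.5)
Your proposal is correct and follows the paper's proof essentially step for step: the gradient $2(A\Sigma_t-C_t)$, the recursion $E_{k+1}=E_k(I-2\eta\Sigma_t)$, the eigendecomposition giving mode-wise factors $|1-2\eta\sigma_i(t)|$, endpoint balancing to get $\eta^\star$ and $\rho^\star=(\kappa-1)/(\kappa+1)$, and the bound $\log\frac{\kappa+1}{\kappa-1}\ge\frac{2}{\kappa+1}$, which is the paper's $-\log(1-a)\ge a$ with $a=2/(\kappa+1)$ written differently. Your separate handling of $\kappa(\Sigma_t)=1$ (e.g.\ $t=0$, where $\rho^\star=0$) is a small improvement, since the paper's claim that $\rho^\star\in(0,1)$ fails in that case.
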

Proof of \Cref{thm:gd_gaussian} is presented in \Cref{app:gd_gaussian_proof}.

\begin{corollary}[Low-variance directions are the Gaussian FM bottleneck]
\label{cor:low_variance_bottleneck}
Under the setting of \Cref{thm:gd_gaussian}, directions corresponding to small eigenvalues $\sigma_i(t)$ converge most slowly under gradient descent. In particular, if $H$ is ill-conditioned and $t$ is not close to zero, then optimization becomes dominated by the low-variance directions of the intermediate distribution (proof in \Cref{app:low_variance_bottleneck_proof}). 
\end{corollary}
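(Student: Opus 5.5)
The plan is to diagonalize the error recursion of \Cref{thm:gd_gaussian} in the shared eigenbasis of $H$ and $\Sigma_t$, and then read off per-direction contraction factors. Write $H=U\Lambda U^\top$. Since $\Sigma_t=(1-t)^2I+t^2H$, we have $\Sigma_t=U\,\mathrm{diag}(\sigma_1(t),\dots,\sigma_d(t))\,U^\top$ with $\sigma_i(t)=(1-t)^2+t^2\lambda_i$. Define the rotated error $\tilde E_k:=E_kU$. Multiplying \Cref{eq:error_recursion_gd_refined} on the right by $U$ gives $\tilde E_{k+1}=\tilde E_k\,\mathrm{diag}(1-2\eta\sigma_i(t))$. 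The $i$-th column of $\tilde E_k$, i.e.\ the error component acting on eigendirection $u_i$, therefore evolves independently:
\begin{equation*}
\tilde E_k e_i=(1-2\eta\sigma_i(t))^k\,\tilde E_0 e_i .
\end{equation*}
By \Cref{lem:gaussian_excess_risk}, the excess risk decomposes as $\sum_i \sigma_i(t)\,\|\tilde E_k e_i\|^2$, so each direction contributes separately with rate $(1-2\eta\sigma_i(t))^{2k}$.

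Next I compare the rates. For any admissible step size $0<\eta<1/\sigma_{\max}(t)$, each factor $1-2\eta\sigma_i(t)$ lies in $(-1,1)$. The map $\sigma\mapsto|1-2\eta\sigma|$ is decreasing on $(0,1/(2\eta)]$, so smaller $\sigma_i$ gives a contraction factor closer to $1$ there. This is the main obstacle: for $\sigma_i>1/(2\eta)$ the factor becomes negative, and its modulus can exceed that of mid-range directions. I handle this by noting that the low end is always the bottleneck, since $1-2\eta\sigma_{\min}\ge |1-2\eta\sigma_i|$ for every $i$ whenever $\eta\le 1/(\sigma_{\max}+\sigma_{\min})$, which includes $\eta^\star$. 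The case $\eta\le 1/(2\sigma_{\max})$ is monotone outright. Under $\eta=\eta^\star$, the slowest direction contracts by $1-2\sigma_{\min}/(\sigma_{\max}+\sigma_{\min})=(\kappa-1)/(\kappa+1)$. The number of iterations it needs, about $\log(1/\varepsilon)/\log\frac{\kappa+1}{\kappa-1}\approx \tfrac{\kappa}{2}\log(1/\varepsilon)$, therefore matches the bound of \Cref{thm:gd_gaussian}. Directions with $\sigma_i\approx\sigma_{\max}$, by contrast, converge in $O(\log(1/\varepsilon))$ steps. Optimization is thus dominated by the low-variance directions.

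Finally, I quantify when this is severe using \Cref{eq:kappa_sigma_t}. For $t$ bounded away from $0$, say $t\ge t_0>0$, the ratio $\kappa(\Sigma_t)=\frac{(1-t)^2+t^2\lambda_{\max}}{(1-t)^2+t^2\lambda_{\min}}$ grows with $\lambda_{\max}$. A short derivative check shows it is increasing in $t$ when $\lambda_{\min}\le 1\le\lambda_{\max}$, and in general it tends to $\kappa(H)$ as $t\to1$. It is bounded below by $\frac{(1-t_0)^2+t_0^2\lambda_{\max}}{(1-t_0)^2+t_0^2\lambda_{\min}}$, at least in the monotone case, which is large when $\kappa(H)$ is large. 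Near $t=0$ the ratio is $\approx1$, which explains the exclusion of small $t$ in the statement. The claim therefore follows qualitatively; the only delicate point is the step-size caveat handled above.
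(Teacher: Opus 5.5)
Your route is the paper's own: diagonalize $E_{k+1}=E_k(I-2\eta\Sigma_t)$ in the shared eigenbasis, read off the factors $1-2\eta\sigma_i(t)$, and note that $\sigma_i(t)=(1-t)^2+t^2\lambda_i$ inherits the spread of $H$. Your step-size caveat improves on the paper. The paper's proof simply asserts that the $\sigma_{\min}$ mode is slowest, which fails for admissible $\eta\in\bigl(1/(\sigma_{\max}+\sigma_{\min}),\,1/\sigma_{\max}\bigr)$, where the top mode is slowest.

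However, the step that carries the ``dominated by low-variance directions'' conclusion fails as written. You fix $\eta=\eta^\star$ and claim that directions with $\sigma_i\approx\sigma_{\max}$ converge in $O(\log(1/\varepsilon))$ steps. But $\eta^\star$ is defined by balancing the two endpoint factors, so $1-2\eta^\star\sigma_{\max}(t)=-(\kappa(\Sigma_t)-1)/(\kappa(\Sigma_t)+1)$, which has exactly the same modulus as the $\sigma_{\min}$ factor. Several things follow at $\eta^\star$:
\begin{itemize}
\item The top mode is tied for slowest; it merely oscillates.
\item Your inequality $1-2\eta\sigma_{\min}\ge|1-2\eta\sigma_i|$ holds with equality for $i=\max$.
\item The excess risk weights modes by $\sigma_i(t)$, so for comparable initial errors the top mode actually dominates the residual loss.
\end{itemize}
You have to draw the fast/slow contrast at a step strictly below $\eta^\star$, where the inequality is strict. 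For example, at $\eta=1/(2\sigma_{\max}(t))$ every factor $1-\sigma_i(t)/\sigma_{\max}(t)$ is nonnegative and monotone in $\sigma_i$. The top mode is annihilated in one step. The bottom mode contracts by $1-1/\kappa(\Sigma_t)$ and needs about $\kappa(\Sigma_t)\log(1/\varepsilon)$ iterations, the same order as in \Cref{thm:gd_gaussian}.

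In your last paragraph the monotonicity hedge is unnecessary. With $r=t/(1-t)$ one has $\kappa(\Sigma_t)=(1+r^2\lambda_{\max})/(1+r^2\lambda_{\min})$. Its derivative in $r^2$ is $(\lambda_{\max}-\lambda_{\min})/(1+r^2\lambda_{\min})^2\ge 0$, so it increases in $t$ for every spectrum.

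On the other hand, ``large when $\kappa(H)$ is large'' at a fixed $t_0$ is an overclaim, and the paper's proof shares it. Since $\kappa(\Sigma_t)\le\min\{1+r^2\lambda_{\max},\,\kappa(H)\}$, taking $\lambda_{\max}=1$ and $\lambda_{\min}\to 0$ gives $\kappa(H)\to\infty$ while $\kappa(\Sigma_{1/2})\le 2$. Together with the matching lower bound $\kappa(\Sigma_t)\ge\tfrac12\min\{r^2\lambda_{\max},\,\kappa(H)\}$, severe stiffness requires both $\kappa(H)$ and $t^2\lambda_{\max}/(1-t)^2$ to be large. When $\lambda_{\max}=O(1)$, this means $t$ must be near $1$, not merely away from $0$. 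The purely qualitative ordering (the $\sigma_{\min}$ mode is strictly slowest whenever $\eta<\eta^\star$) holds at every $t$, independently of this.
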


We next summarize the corresponding stochastic-gradient behavior.


\begin{proposition}[Mode-wise SGD mean dynamics]
\label{cor:sgd_gaussian}
Consider unbiased single-sample stochastic gradient descent applied to \Cref{eq:LtA} at a fixed time $t$. When the error is projected onto the eigenbasis of $\Sigma_t$, each scalar mode satisfies, in conditional expectation,
\[
\mathbb{E}[e_{i,m+1}\mid e_{i,m}]
=
(1-2\eta\sigma_i(t))e_{i,m}.
\]
Thus the mean SGD dynamics inherit the same mode-dependent contraction as full-batch GD: directions with smaller $\sigma_i(t)$ contract more slowly. The constant-step stationary variance depends on the covariance of the stochastic gradient noise and does not, in general, scale universally as $1/\sigma_i(t)$ (proof in \Cref{app:sgd_gaussian_proof}).
\end{proposition}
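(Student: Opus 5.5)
We set up single-sample SGD on \Cref{eq:LtA} at fixed $t$. Given a fresh independent pair $(x_0,x_1)$ with $x_t=(1-t)x_0+tx_1$ and $y=x_1-x_0$, the stochastic gradient of $\|Ax_t-y\|^2$ is $G_m = 2(A_m x_t - y)x_t^\top$. The update is therefore $A_{m+1}=A_m-2\eta(A_m x_t x_t^\top - y x_t^\top)$. Since $\mathbb{E}[x_tx_t^\top]=\Sigma_t$ and $\mathbb{E}[yx_t^\top]=C_t$ (as in \Cref{thm:gd_gaussian}), the conditional expectation of the gradient given $A_m$ is exactly the full-batch gradient $2(A_m\Sigma_t-C_t)$. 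This confirms unbiasedness.

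Next we write the error $E_m=A_m-A^\star(t)$ and use $A^\star(t)\Sigma_t=C_t$. This gives
\[
E_{m+1}=E_m\bigl(I-2\eta x_tx_t^\top\bigr)+2\eta\,\xi_m x_t^\top, \qquad \xi_m:=y-A^\star(t)x_t .
\]
Here $\xi_m$ is the regression residual, and $\mathbb{E}[\xi_m x_t^\top]=C_t-A^\star(t)\Sigma_t=0$. Taking conditional expectation given $E_m$, using independence of the fresh sample from $E_m$, yields
\[
\mathbb{E}[E_{m+1}\mid E_m]=E_m(I-2\eta\Sigma_t),
\]
which is the SGD analogue of \Cref{eq:error_recursion_gd_refined}.

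We then project onto the eigenbasis. Because $\Sigma_t=U\operatorname{diag}(\sigma_i(t))U^\top$ with $U$ the eigenvectors of $H$, right-multiplying by $u_i$ isolates each mode. Take $e_{i,m}:=E_m u_i$ (a vector; its scalar entries behave identically, or one may project further onto any fixed left direction). Then
\[
\mathbb{E}[e_{i,m+1}\mid E_m]=E_m(I-2\eta\Sigma_t)u_i=(1-2\eta\sigma_i(t))e_{i,m}.
\]
The main subtlety is conditioning only on $e_{i,m}$ rather than all of $E_m$. The claim still holds with conditioning on $e_{i,m}$ because the mean map is linear and diagonal in this basis, so the tower property gives the same expression. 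We will also note that the stochastic map $x_tx_t^\top$ does couple modes, but these couplings have zero mean off the diagonal. This follows because $\Sigma_t$ is diagonal in $U$, so $\mathbb{E}[u_j^\top x_tx_t^\top u_i]=0$ for $j\neq i$.

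For the last remark on stationary variance, we compute the second moment of the noise term. Write $x_tx_t^\top=\Sigma_t+N$ with $N$ zero-mean. The fluctuation $-2\eta E_mN+2\eta\xi_mx_t^\top$ has covariance depending on fourth moments of Gaussian $x_t$ and on $\operatorname{Cov}(y\mid x_t)$. The resulting stationary variance in mode $i$ takes the form
\[
\frac{\eta\,(\text{noise variance in mode } i)}{\sigma_i(t)(1-\eta\sigma_i(t))}.
\]
Since the numerator itself scales with $\sigma_i(t)$ (for example, $\mathbb{E}[\|\xi\|^2 (u_i^\top x_t)^2]$), no universal $1/\sigma_i$ law holds. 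We expect this qualitative point to need only a brief argument; the main step remains the clean mean recursion above.
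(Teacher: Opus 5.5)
Your proposal is correct and follows essentially the same route as the paper: form the single-sample error recursion, use the normal equation $A^\star(t)\Sigma_t=C_t$ so the residual term vanishes in conditional expectation, and right-multiply by the eigenvectors of $\Sigma_t$ to obtain the mode-wise factor $1-2\eta\sigma_i(t)$. Your tower-property step (justifying conditioning on $e_{i,m}$ alone) and your sketch of the stationary-variance remark go slightly beyond the paper, which conditions on $E_m$ throughout and does not argue the variance claim; that sketch holds as an implicit identity, since the multiplicative noise depends on $E_m$, and its numerator carries a factor $\sigma_i(t)$.
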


\begin{remark}[Scope of the optimizer analysis]
\label{rem:optimizer_scope}
\Cref{thm:gd_gaussian,cor:sgd_gaussian} are exact population-level results for Gaussian flow matching under linear predictors. They should be interpreted as optimizer-level diagnostics for first-order training rather than as a global theorem for every adaptive optimizer or arbitrary nonlinear network. Their role is to isolate the conditioning mechanism that later motivates preconditioning.
\end{remark}

\subsection{Gaussian-mixture FM: multimodality worsens the bottleneck}

We now extend the same analysis to a Gaussian mixture target,
\begin{equation}
x_1 \sim \sum_{k=1}^K \pi_k \,\mathcal{N}(0,H_k),
\qquad
\pi_k>0,
\qquad
\sum_{k=1}^K \pi_k = 1,
\label{eq:gmm_target}
\end{equation}
with $x_0 \sim \mathcal{N}(0,I), x_t = (1-t)x_0 + t x_1$. The Gaussian-mixture setting is useful because it introduces multimodality while remaining analytically structured. It therefore provides a stylized model for datasets with multiple semantic clusters or modes.

\begin{assumption}[Perfect component assignment]
\label{ass:gmm_perfect_gating}
For the purpose of isolating conditioning effects, we assume that the mixture responsibilities are known exactly, so that optimization decomposes componentwise.
\end{assumption}

\begin{proposition}[Exact conditional-mean target in Gaussian-mixture FM]
\label{prop:gmm_conditional_mean}
Under \Cref{eq:gmm_target}, let
\begin{equation}
w_k(x_t,t) := \mathbb{P}(x_1 \text{ belongs to component } k \mid x_t,t)
\end{equation}
denote the posterior component weights. Then the conditional FM population target is
\begin{equation}
g_t^\star(x_t)
=
\mathbb{E}[x_1-x_0 \mid x_t]
=
\sum_{k=1}^K w_k(x_t,t)\,A_k^\star(t)\,x_t,
\label{eq:gmm_population_target}
\end{equation}
\end{proposition}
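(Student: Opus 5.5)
The plan is to condition on the latent component label and use the tower property. Introduce $Z\in\{1,\dots,K\}$ with $\mathbb{P}(Z=k)=\pi_k$ and $x_1\mid Z=k\sim\mathcal{N}(0,H_k)$, with $Z$ independent of $x_0$. Then the event ``$x_1$ belongs to component $k$'' is exactly $\{Z=k\}$, so $w_k(x_t,t)=\mathbb{P}(Z=k\mid x_t)$. By the tower property,
\[
\mathbb{E}[x_1-x_0\mid x_t]=\sum_{k=1}^K \mathbb{P}(Z=k\mid x_t)\,\mathbb{E}[x_1-x_0\mid x_t,Z=k].
\]
It therefore suffices to identify each inner conditional expectation.

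Conditional on $Z=k$, the pair $(x_0,x_1)$ is jointly Gaussian and independent, with $x_0\sim\mathcal{N}(0,I)$ and $x_1\sim\mathcal{N}(0,H_k)$. This is exactly the setting of \Cref{prop:gaussian_exact_target} with $H$ replaced by $H_k$. Applying that proposition conditionally gives
\[
\mathbb{E}[x_1-x_0\mid x_t,Z=k]=A_k^\star(t)x_t,
\]
where
\[
A_k^\star(t)=\bigl(tH_k-(1-t)I\bigr)\bigl((1-t)^2I+t^2H_k\bigr)^{-1}.
\]
This is the natural definition of the symbol $A_k^\star(t)$ left implicit in the statement. The inverse exists for all $t\in[0,1]$ because $(1-t)^2I+t^2H_k$ is positive definite ($H_k\succ0$, and the two coefficients never vanish simultaneously). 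Substituting into the tower identity gives \Cref{eq:gmm_population_target}.

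For completeness, the weights can be written explicitly by Bayes' rule:
\[
w_k(x_t,t)=\frac{\pi_k\,\mathcal{N}(x_t;0,\Sigma_{t,k})}{\sum_j \pi_j\,\mathcal{N}(x_t;0,\Sigma_{t,j})},
\qquad
\Sigma_{t,k}=(1-t)^2I+t^2H_k,
\]
where $\mathcal{N}(x_t;0,\Sigma_{t,k})$ denotes the conditional Gaussian density of $x_t$ given $Z=k$. These are strictly positive, well-defined densities, so the weights make sense for every $x_t$.

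The main point requiring care is rigor in conditioning on a mixed discrete/continuous $\sigma$-algebra: justifying that $\mathbb{E}[\,\cdot\mid x_t,Z=k]$ is the Gaussian regression from \Cref{prop:gaussian_exact_target}. I would handle it by verifying the defining property directly. For any bounded measurable $\phi$, I would show
\[
\mathbb{E}\bigl[(x_1-x_0)\phi(x_t)\bigr]=\mathbb{E}\Bigl[\sum_k w_k(x_t)A_k^\star(t)x_t\,\phi(x_t)\Bigr].
\]
To do so, expand the left side as $\sum_k\pi_k\,\mathbb{E}_k[(x_1-x_0)\phi(x_t)]$, use the Gaussian conditional mean under each component, and then rewrite $\pi_k\,\mathcal{N}(x_t;0,\Sigma_{t,k})=w_k(x_t)\,p_t(x_t)$. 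This is routine once the Bayes formula is in place.
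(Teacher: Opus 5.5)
Your proposal is correct and follows essentially the same route as the paper: introduce the latent label $Z$, expand $\mathbb{E}[x_1-x_0\mid x_t]$ by the law of total expectation over $\{Z=k\}$, and invoke \Cref{prop:gaussian_exact_target} with $H$ replaced by $H_k$ to identify each inner conditional mean as $A_k^\star(t)x_t$. Your explicit Bayes formula for $w_k$ and your direct check of the defining property of conditional expectation add rigor that the paper omits, but the argument is the same.
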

\textit{where} $A_k^\star(t)=\bigl(tH_k-(1-t)I\bigr)\bigl((1-t)^2 I + t^2 H_k\bigr)^{-1}$ (proof in \Cref{app:gmm_conditional_mean_proof}). Under \Cref{ass:gmm_perfect_gating}, the problem decomposes into $K$ componentwise regression subproblems with covariances $\Sigma_{t,k} = (1-t)^2 I + t^2 H_k$. If $\lambda_{k,i}$ denotes the $i^\text{th}$ eigenvalue of $H_k$, then the corresponding eigenvalues of $\Sigma_{t,k}$ are $\sigma_{k,i}(t) = (1-t)^2 + t^2 \lambda_{k,i}$ .

\begin{theorem}[Worst-component bottleneck in Gaussian-mixture FM]
\label{thm:gmm_bottleneck}
Under \Cref{ass:gmm_perfect_gating}, gradient-based optimization of the componentwise Gaussian-mixture FM problem is governed by the slowest componentwise subproblem. In particular, both full-batch GD and the mean dynamics of SGD are controlled by the slowest componentwise eigendirection, $\min_{k,i}\sigma_{k,i}(t)$
that is, by the smallest eigenvalue across all mixture components and directions. Consequently, even if most components are well-conditioned, a single poorly conditioned component can dominate the overall convergence time scale (proof in \Cref{app:gmm_bottleneck_proof}).
\end{theorem}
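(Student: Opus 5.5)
Under \Cref{ass:gmm_perfect_gating}, the natural plan is to make the componentwise decomposition explicit and then reuse \Cref{thm:gd_gaussian} and \Cref{cor:sgd_gaussian} on each block.

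\textbf{Step 1: set up the componentwise objective.} Write the parameter as a tuple $(A_1,\dots,A_K)$ and take the loss to be
\[
\mathcal{L}_t(A_1,\dots,A_K)=\sum_{k}\pi_k\,\mathbb{E}\bigl[\|A_k x_t-(x_1-x_0)\|^2 \mid \text{comp. }k\bigr].
\]
Conditional on component $k$, the pair $(x_0,x_1)$ is exactly the Gaussian model of \Cref{prop:gaussian_exact_target} with $H=H_k$. Hence each summand is a copy of \Cref{eq:LtA}, with second moment $\Sigma_{t,k}$ and cross-covariance $C_{t,k}=tH_k-(1-t)I$.

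By \Cref{lem:gaussian_excess_risk}, the excess risk is
\[
\sum_k \pi_k \operatorname{Tr}\bigl(E_k\Sigma_{t,k}E_k^\top\bigr), \qquad E_k:=A_k-A_k^\star(t).
\]
The minimizer is therefore the tuple $(A_k^\star(t))_k$, which agrees with \Cref{prop:gmm_conditional_mean}.

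\textbf{Step 2: show the GD dynamics decouple.} The gradient with respect to $A_k$ involves only block $k$. So full-batch GD with step $\eta$ gives, exactly as in \Cref{eq:error_recursion_gd_refined},
\[
E_{k,m+1}=E_{k,m}\bigl(I-2\eta\pi_k\Sigma_{t,k}\bigr).
\]
Here the weight $\pi_k$ either stays in the rate or is absorbed into a per-component step size; I would state both cases.

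\textbf{Step 3: diagonalize and read off the rates.} Project $E_{k,m}$ onto the eigenbasis $U_k$ of $H_k$, which is also the eigenbasis of $\Sigma_{t,k}$. Each column/mode $(k,i)$ then contracts by the factor $|1-2\eta\sigma_{k,i}(t)|$, or $|1-2\eta\pi_k\sigma_{k,i}(t)|$ in the weighted case.

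The total error is a sum of nonnegative modewise terms. It is therefore at least the slowest mode, and at most the sum of all modes, each of which is bounded by the slowest rate. So the asymptotic rate equals $\max_{k,i}|1-2\eta\sigma_{k,i}|$.

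\textbf{Step 4: bring in the shared step size.} Stability requires $\eta<1/\max_{k,i}\sigma_{k,i}$, because a single step must be stable for every block at once. Under that constraint, the factor $|1-2\eta\sigma|$ is decreasing in $\sigma$ on the admissible range. The slowest mode is therefore the one with $\min_{k,i}\sigma_{k,i}(t)$.

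Choosing the optimal $\eta=1/(\sigma_{\max}^{\rm glob}+\sigma_{\min}^{\rm glob})$ and repeating the calculation of \Cref{thm:gd_gaussian} gives an iteration count of
\[
\mathcal{O}\!\left(\frac{\max_{k,i}\sigma_{k,i}}{\min_{k,i}\sigma_{k,i}}\log(1/\varepsilon)\right).
\]
This is controlled by the worst component even if every other $\kappa(\Sigma_{t,k})$ is small.

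\textbf{Step 5: SGD mean dynamics.} Sample the component with probability $\pi_k$ and then $(x_0,x_1)$ from that component. Under perfect assignment, the single-sample gradient updates only $A_k$. Taking conditional expectation, as in \Cref{cor:sgd_gaussian}, gives
\[
\mathbb{E}[e_{k,i,m+1}\mid e_{k,i,m}]=\bigl(1-2\eta\pi_k\sigma_{k,i}(t)\bigr)e_{k,i,m}.
\]
The same min-over-$(k,i)$ conclusion then follows.

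\textbf{Main obstacle.} The delicate point is making precise what "governed by $\min_{k,i}\sigma_{k,i}$" means, given the mixture weights $\pi_k$ and the shared step size. I would handle this by stating the sharp rate $\max_{k,i}|1-2\eta\pi_k\sigma_{k,i}|$. I would then note that it reduces to $\min_{k,i}\sigma_{k,i}$ when the weights are absorbed into per-component normalization, or when the $\pi_k$ are bounded below by a constant.

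A matching lower bound is needed to justify "dominate". For this I would start GD with error supported on the slowest eigendirection. Its error then decays at exactly the slowest rate, which shows the bound is attained.
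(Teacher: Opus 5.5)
Your proposal follows the paper's proof. Under perfect assignment the problem splits into $K$ single-Gaussian regressions with covariances $\Sigma_{t,k}$. The GD error recursion of \Cref{thm:gd_gaussian} and the SGD mean recursion of \Cref{cor:sgd_gaussian} are reused blockwise, and projecting onto the eigenbasis of each $H_k$ gives modewise factors whose worst case is tied to $\min_{k,i}\sigma_{k,i}(t)$. You are more careful than the paper on three points: the paper treats each component as an unweighted subproblem, and it never states the shared step-size constraint or a lower bound. Your factor $1-2\eta\pi_k\sigma_{k,i}(t)$ is the accurate one, both for GD on the $\pi_k$-weighted loss and for SGD with component sampling.

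One claim in Step 4 is false as written. On the stable range $0<\eta<1/\sigma_{\max}^{\mathrm{glob}}$, the map $\sigma\mapsto|1-2\eta\sigma|$ is not decreasing. It is V-shaped, with its minimum at $\sigma=1/(2\eta)$. So for $\eta\in(1/(\sigma_{\max}^{\mathrm{glob}}+\sigma_{\min}^{\mathrm{glob}}),\,1/\sigma_{\max}^{\mathrm{glob}})$ the slowest mode is the \emph{largest}-eigenvalue one, not the smallest.

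You can repair this in either of two ways.
\begin{itemize}
\item Restrict to $\eta\le 1/(\sigma_{\max}^{\mathrm{glob}}+\sigma_{\min}^{\mathrm{glob}})$. Convexity reduces the maximum to the two extreme eigenvalues, and the step bound gives $|1-2\eta\sigma_{\max}^{\mathrm{glob}}|\le 1-2\eta\sigma_{\min}^{\mathrm{glob}}$. Hence $\max_{k,i}|1-2\eta\sigma_{k,i}(t)|=1-2\eta\min_{k,i}\sigma_{k,i}(t)$. In the weighted case, use $\pi_k\sigma_{k,i}$ in place of $\sigma_{k,i}$ throughout.
\item Rest the conclusion on your optimal-step computation. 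Its rate $(\kappa_{\mathrm{glob}}-1)/(\kappa_{\mathrm{glob}}+1)$ depends on $\min_{k,i}\sigma_{k,i}(t)$ through $\kappa_{\mathrm{glob}}$.
\end{itemize}

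The paper's sentence that directions with smaller $\sigma_{k,i}(t)$ contract more slowly rests on the same implicit small-step assumption. With this repair, your argument covers everything the paper proves.
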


In particular, if one seeks comparable accuracy across all components under the decomposition in \Cref{ass:gmm_perfect_gating}, the overall rate is governed by the slowest subproblem.

\begin{corollary}[Multimodality amplifies conditioning effects]
\label{cor:gmm_multimodal_takeaway}
Under \Cref{thm:gmm_bottleneck}, multimodality does not average away conditioning effects: the overall optimization bottleneck is governed by the hardest mixture component (proof in \Cref{app:gmm_multimodal_takeaway_proof}).
\end{corollary}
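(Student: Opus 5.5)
The plan is to deduce \Cref{cor:gmm_multimodal_takeaway} directly from \Cref{thm:gmm_bottleneck}, by making precise what ``does not average away'' means. Under \Cref{ass:gmm_perfect_gating} the total population objective splits as $\mathcal{L}_t(A_1,\dots,A_K)=\sum_k \pi_k \mathcal{L}_{t,k}(A_k)$. Each $\mathcal{L}_{t,k}$ is exactly of the form treated in \Cref{lem:gaussian_excess_risk}, with covariance $\Sigma_{t,k}$. Applying that lemma componentwise writes the total excess risk as
\[
\sum_{k=1}^K \pi_k \operatorname{Tr}\!\big(E_{k}\Sigma_{t,k}E_{k}^\top\big),
\]
where $E_k=A_k-A_k^\star(t)$. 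GD on this sum updates each block independently; the factor $\pi_k$ only rescales the effective step in block $k$. So by \Cref{thm:gd_gaussian}, in the eigenbasis of $\Sigma_{t,k}$ each mode contracts by the factor $|1-2\eta\pi_k\sigma_{k,i}(t)|$ per step. If the $\pi_k$ are normalized into per-component step sizes, the factor is $|1-2\eta\sigma_{k,i}(t)|$. \Cref{cor:sgd_gaussian} gives the same factors for the SGD mean dynamics.

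Next I will show that the worst component, and not a mixture-weighted average, sets the iteration count. Take a single stable step size $\eta<1/\max_{k,i}\sigma_{k,i}(t)$. Reducing every mode's error by a factor $\varepsilon$ then requires
\[
k\gtrsim \frac{\log(1/\varepsilon)}{2\eta\,\min_{k,i}\sigma_{k,i}(t)}
\;\ge\; \tfrac12\,\frac{\max_{k,i}\sigma_{k,i}(t)}{\min_{k,i}\sigma_{k,i}(t)}\log(1/\varepsilon).
\]
The right-hand side depends only on the extreme eigenvalues across all components. To exhibit the non-averaging explicitly, I will give a construction: take $K-1$ components with $H_k=I$ and one component $H_{k_0}$ with $\lambda_{\min}(H_{k_0})\to 0$. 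The averaged covariance $\sum_k\pi_k\Sigma_{t,k}$ stays well-conditioned when $\pi_{k_0}$ is small. Even so, the error along the bad eigendirection of component $k_0$ decays like $(1-2\eta\sigma_{k_0,\min}(t))^k$, which can be made arbitrarily slow.

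Finally, I will check that this slow mode really dominates the loss and is not just an invisible coordinate. Initialize with nonzero error along that eigendirection. Its contribution $\pi_{k_0}\sigma_{k_0,\min}e^2$ to the excess risk is small, but the parameter error $e$ itself, which is the velocity error relevant for sampling, persists. Any criterion requiring comparable accuracy in all components, such as $\max_k\|E_k\|_F\le\varepsilon$, therefore inherits the worst-component rate.

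The main obstacle is interpretive rather than technical: the excess-risk weighting by $\pi_k\sigma_{k,i}$ could suggest that small, badly conditioned components are averaged away in the loss. I will address this by stating the convergence criterion in parameter or per-component terms, as in the remark following \Cref{thm:gmm_bottleneck}. The rest is bookkeeping on the block-diagonal decomposition.
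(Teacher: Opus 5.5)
This is essentially the paper's argument: the paper's proof is a one-line appeal to \Cref{thm:gmm_bottleneck}, noting that the rate is set by $\min_{k,i}\sigma_{k,i}(t)$. Your componentwise decomposition, iteration lower bound and per-component accuracy criterion make that deduction explicit, and your tracking of the $\pi_k$ factor in the effective step size is a detail the paper's proof glosses over.

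One caution concerns your illustrative construction. Since $\sigma_{k_0,\min}(t)=(1-t)^2+t^2\lambda_{\min}(H_{k_0})\ge(1-t)^2$, sending $\lambda_{\min}(H_{k_0})\to 0$ at a fixed $t<1$ does not make that mode arbitrarily slow. To get that claim, take $t$ close to $1$, for example with $(1-t)^2\lesssim\lambda_{\min}(H_{k_0})$. Alternatively, keep the unnormalized factor $\pi_{k_0}$ and let $\pi_{k_0}\to 0$.
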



\begin{tcolorbox}[
colback=blue!4,
colframe=blue!35!black,
boxrule=0.35pt,
arc=1pt,
left=4pt,right=4pt,top=4pt,bottom=4pt,
boxsep=0pt,
before skip=3pt,
after skip=3pt
]
\footnotesize
\textbf{Key observation:}
In Gaussian-mixture FM, optimization is controlled by the worst-conditioned component rather than by average data geometry. Even if most components are well-behaved, a single anisotropic component can dominate convergence, so multimodality sharpens rather than averages away the conditioning bottleneck.
\end{tcolorbox}

\textbf{Scope of the theory.} The results above are exact for Gaussian and Gaussian-mixture flow matching at the population level. They show that, even when the target velocity is exactly representable, optimization can be slowed dramatically by anisotropy in the intermediate distributions. This is the mechanism that motivates preconditioning in the next section. Here, we do not claim a closed-form global convergence theorem for arbitrary nonlinear flow networks; rather, the purpose of this section is to isolate the conditioning effect in solvable settings and derive testable predictions for the practical models studied later.

\section{Preconditioning for Flow Matching}
\label{preconditioning}
\begin{figure*}
    \centering
    \includegraphics[width=\linewidth]{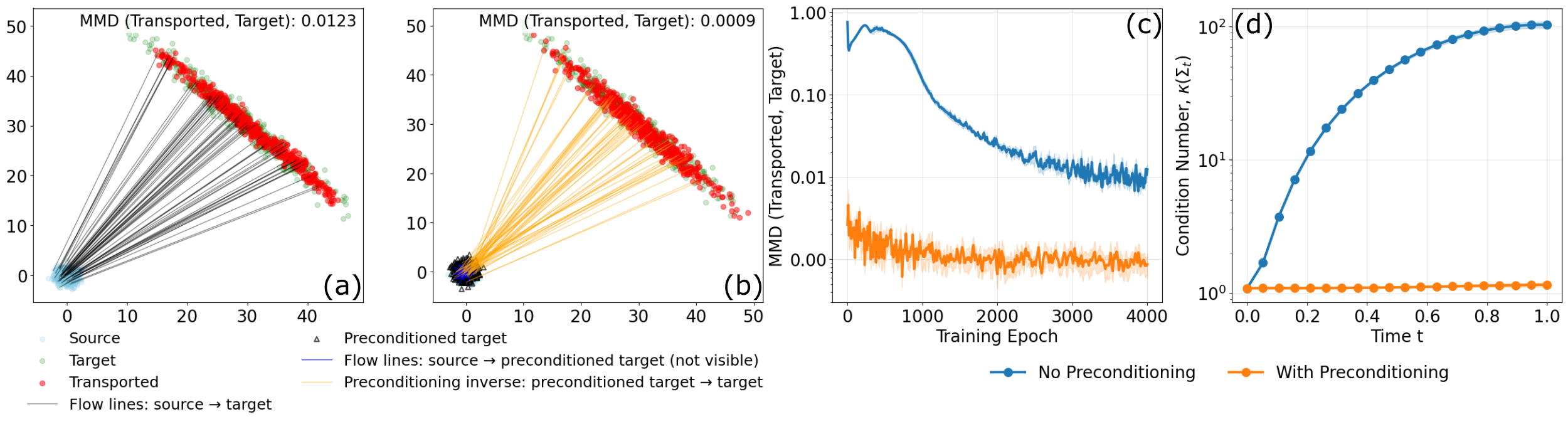}
    \caption{\textbf{2D Gaussian transport with and without preconditioning.}
    \textbf{(a)} Standard FM transports samples from an isotropic source to an elongated target, but might struggle along the narrow direction, resulting in a higher MMD. \textbf{(b)} Preconditioned FM transports samples to a whitened target and maps them back using the inverse preconditioner. \textbf{(c)} Preconditioning reduces MMD and avoids early optimization stagnation. \textbf{(d)} It also lowers the condition number of intermediate covariance $\Sigma_t$ along the transport path, improving sample alignment and stability. Curves show the mean over 10 independent runs with different random seeds; shaded regions denote one standard deviation.}
    \label{fig:gaussian_to_elongated_gaussian}
\end{figure*}

The results of \Cref{gaussian_model_problem} show that optimization in FM is bottlenecked by the conditioning of the intermediate covariances $\Sigma_t$ (for Gaussian) and $\Sigma_{t,k}$ (for Gaussian mixtures). This suggests a simple remedy: \textit{\textcolor{myorange}{before training the main FM model, apply an invertible map $\mathcal{P}$ to the target data so that the pushforward target distribution is more isotropic}}. The main flow is then trained on the transformed transport $\mathcal{N}(0,I)\to \tilde{x}_1$ where $\tilde{x}_1=\mathcal{P}x_1$, and samples are mapped back to the original space using $\mathcal{P}^{-1}$. In our setting, a preconditioner is any invertible or approximately invertible transformation (learned or analytically specified) that maps the target data to a representation with more isotropic geometry.

This is analogous to classical preconditioning in numerical linear algebra, where one improves optimization by transforming an ill-conditioned problem into a better-conditioned one (see \citet{BENZI2002418,saade,golubye}). This design introduces an additional preconditioner-training stage and associated hyperparameters, but its role is targeted: it aims to improve the geometry of the downstream FM path rather than to simply increase the capacity of the generative model.












\begin{wrapfigure}{r}{0.47\textwidth}
\vspace{-1.3em}
\centering
\begin{minipage}{\linewidth}

\hrule height 0.8pt
\vspace{2pt}
\noindent\textbf{Algorithm 1 Precondition-then-Match.}
\vspace{2pt}
\hrule height 0.5pt
\vspace{3pt}

\footnotesize
\raggedright
\setlength{\parskip}{1pt}
\setlength{\parindent}{0pt}

\alglabel{Input:} Target samples 
\algmath{\{x_1^{(i)}\}_{i=1}^N\sim p_1}, reference
\algmath{p_0=\mathcal{N}(0,I)}, preconditioner class 
\algmath{\mathcal{P}_\phi}, main FM model \algmath{v_\theta}.\\
\alglabel{Output:} Generator for samples from \algmath{p_1}.

\vspace{2pt}
\begin{enumerate}[leftmargin=1.2em,itemsep=2pt,topsep=1pt,parsep=0pt]
    \item \algaction{Learn} an invertible or approximately invertible preconditioner
    \algmath{\mathcal{P}_\phi} on target data.

    \item \algaction{Transform} targets: $\textcolor{algblue}{\tilde{x}_1^{(i)}=\mathcal{P}_\phi(x_1^{(i)})}$.
    \item \algaction{Train} \algmath{v_\theta} by FM on the preconditioned path, $\textcolor{algblue}{
        \tilde{x}_t=(1-t)x_0+t\tilde{x}_1,
        x_0\sim\mathcal{N}(0,I),
        t\sim\mathrm{Unif}[0,1].
        }$

    \item \algaction{Minimize} $\textcolor{algblue}{
        \mathbb{E}_{t,x_0,\tilde{x}_1}
        \!\left[
        \|v_\theta(\tilde{x}_t,t)-(\tilde{x}_1-x_0)\|^2
        \right].
        }$

    \item At inference, \algaction{sample} \algmath{z\sim\mathcal{N}(0,I)} and solve the learned ODE
    in preconditioned space to obtain \algmath{\hat{\tilde{x}}_1}.

    \item \algaction{Map back}: $\textcolor{algblue}{
        \hat{x}_1=\mathcal{P}_\phi^{-1}(\hat{\tilde{x}}_1).
        }$

    \item \algaction{return} \algmath{\hat{x}_1}.
\end{enumerate}

\vspace{-1.5pt}
\hrule height 0.8pt

\end{minipage}
\vspace{-1.5em}
\end{wrapfigure}

\begin{theorem}[Preconditioning reshapes the Gaussian FM path]
\label{thm:fm_preconditioning}
Consider the Gaussian FM setting of \Cref{gaussian_model_problem}, where $x_0 \sim \mathcal{N}(0,I)$, $x_1 \sim \mathcal{N}(0,H)$, $x_t=(1-t)x_0+t x_1$. Let $\mathcal{P}\in\mathbb{R}^{d\times d}$ be any invertible linear map, and define the transformed target and interpolation
\[
\tilde{x}_1=\mathcal{P}x_1, \qquad \tilde{x}_t=(1-t)x_0+t\tilde{x}_1.
\]
Then the covariance of the transformed intermediate distribution is $\tilde{\Sigma}_t=(1-t)^2 I+t^2 \mathcal{P}H\mathcal{P}^\top$. Moreover:

\begin{enumerate}[label=(\alph*), leftmargin=12pt, itemindent=0pt]
\item \textbf{Exact whitening.} If $\mathcal{P}=H^{-1/2}$, then
\begin{equation}
\tilde{\Sigma}_t=\bigl((1-t)^2+t^2\bigr)I,
\qquad
\kappa(\tilde{\Sigma}_t)=1
\quad \text{for all } t\in[0,1].
\label{eq:exact_whitening_path}
\end{equation}

\item \textbf{Approximate isotropization.} If there exist constants $0<m\le M$ such that $mI \preceq \mathcal{P}H\mathcal{P}^\top \preceq MI$, then $(1-t)^2+t^2 m \;\le\; \lambda_i(\tilde{\Sigma}_t) \;\le\;(1-t)^2+t^2 M$, 
and hence
\begin{equation}
\kappa(\tilde{\Sigma}_t)
\le
\frac{(1-t)^2+t^2 M}{(1-t)^2+t^2 m}.
\label{eq:kappa_tilde_bound}
\end{equation}
\end{enumerate}
\end{theorem}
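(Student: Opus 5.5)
The plan is to compute the transformed covariance directly and then read off both parts from spectral facts about matrices of the form $aI+bS$ with $S$ symmetric positive semidefinite. Since $\tilde{x}_1=\mathcal{P}x_1$ is a linear image of a centered Gaussian, it is centered Gaussian with covariance $\mathcal{P}H\mathcal{P}^\top$. It remains independent of $x_0$, because it is a measurable function of $x_1$ alone. Expanding $\tilde{x}_t\tilde{x}_t^\top$ and taking expectations, the cross terms $(1-t)t\,\mathbb{E}[x_0\tilde{x}_1^\top]$ and its transpose vanish by independence and zero means. This leaves
\[
\tilde{\Sigma}_t=(1-t)^2\,\mathbb{E}[x_0x_0^\top]+t^2\,\mathbb{E}[\tilde{x}_1\tilde{x}_1^\top]=(1-t)^2 I+t^2\mathcal{P}H\mathcal{P}^\top .
\]
This is exactly the computation behind \Cref{eq:Sigma_t} in \Cref{prop:gaussian_exact_target}, with $H$ replaced by $\mathcal{P}H\mathcal{P}^\top$, so I would simply invoke that argument.

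For part (a), take $\mathcal{P}=H^{-1/2}$, the symmetric positive definite square root inverse, which exists since $H\succ 0$. Then $\mathcal{P}^\top=H^{-1/2}$ and $\mathcal{P}H\mathcal{P}^\top=H^{-1/2}HH^{-1/2}=I$, so $\tilde{\Sigma}_t=((1-t)^2+t^2)I$. This is a positive multiple of the identity for every $t\in[0,1]$, because $(1-t)^2+t^2\ge 1/2>0$. All its eigenvalues therefore coincide and $\kappa(\tilde{\Sigma}_t)=1$.

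For part (b), let $S=\mathcal{P}H\mathcal{P}^\top$. It is symmetric and, by assumption, satisfies $mI\preceq S\preceq MI$. Write the spectral decomposition $S=V\,\mathrm{diag}(\mu_i)V^\top$ with $m\le\mu_i\le M$. Then $\tilde{\Sigma}_t=V\,\mathrm{diag}\bigl((1-t)^2+t^2\mu_i\bigr)V^\top$, so its eigenvalues are $(1-t)^2+t^2\mu_i$. Because $t^2\ge 0$, the map $\mu\mapsto(1-t)^2+t^2\mu$ is nondecreasing, which gives the stated two-sided eigenvalue bounds. Equivalently, one can argue via Loewner order: multiplying $mI\preceq S\preceq MI$ by $t^2$ and adding $(1-t)^2I$ preserves the order.

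For the condition-number bound, divide the upper bound on $\lambda_{\max}(\tilde{\Sigma}_t)$ by the lower bound on $\lambda_{\min}(\tilde{\Sigma}_t)$. The only point requiring care is that the denominator $(1-t)^2+t^2m$ is strictly positive for all $t\in[0,1]$. This holds since $m>0$: at $t=1$ it equals $m$, and otherwise $(1-t)^2>0$. Hence $\kappa$ is well defined and the bound \Cref{eq:kappa_tilde_bound} follows. There is no real obstacle here. The main thing to state explicitly is the independence of $x_0$ and $\tilde{x}_1$, which kills the cross terms, together with the symmetry of $S$, which justifies using a real orthogonal eigendecomposition.
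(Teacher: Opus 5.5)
Your proposal is correct and follows the paper's proof essentially step for step. You expand $\tilde{x}_t\tilde{x}_t^\top$ and use independence to get $(1-t)^2I+t^2\mathcal{P}H\mathcal{P}^\top$, substitute $\mathcal{P}=H^{-1/2}$ for part (a), and bound the eigenvalues of $\tilde{\Sigma}_t$ for part (b), where your eigendecomposition of $\mathcal{P}H\mathcal{P}^\top$ and the paper's Loewner-order sandwich are interchangeable; your explicit checks that $H^{-1/2}$ is the symmetric root and that $(1-t)^2+t^2m>0$ on $[0,1]$ are details the paper leaves implicit.
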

Proof of \Cref{thm:fm_preconditioning} in presented in \Cref{app:preconditioning_fm_proofs}. Applying \Cref{thm:gd_gaussian} to the transformed Gaussian FM problem shows that gradient-based optimization now scales with $\kappa(\tilde{\Sigma}_t)$ rather than $\kappa(\Sigma_t)$. In particular, exact whitening removes the conditioning bottleneck entirely, while approximate isotropization improves it in proportion to how well $\mathcal{P}H\mathcal{P}^\top$ is conditioned. The transformed regression problem is therefore better conditioned, mitigating early optimization stagnation that would otherwise lead to suboptimal solutions.

\textbf{Interpretation.} \Cref{thm:fm_preconditioning} makes the role of preconditioning explicit in the linear-Gaussian case: the intermediate covariance changes from $\Sigma_t$ to $\tilde{\Sigma}_t$, with exact whitening as the analytically tractable ideal. For practical nonlinear preconditioners, the analogous goal is to make the transformed target distribution $\tilde{x}_1=\mathcal{P}_\phi(x_1)$ more isotropic, so that the empirical intermediate distributions along the downstream FM path are better conditioned. Thus, the goal is not exact Gaussianization, but a transformed representation whose transport path is substantially better conditioned.


\textbf{FM-specific implications.} The preceding results have implications that are specific to flow matching, beyond the standard observation that ill-conditioned least-squares problems are slow to optimize. With $H=U\Lambda U^\top$, let $u_i$ denote the $i^{\mathrm{th}}$ column of $U$. Since $\Sigma_t$ and $H$ share eigenvectors, \Cref{eq:gaussian_excess_risk_only} gives
\begin{equation}
\mathcal{L}_t(A)-\mathcal{L}_t(A^\star(t))
=
\sum_{i=1}^d
\sigma_i(t)\,
\bigl\|(A-A^\star(t))u_i\bigr\|^2,
\qquad
\sigma_i(t)=(1-t)^2+t^2\lambda_i .
\label{eq:fm_directional_excess_risk}
\end{equation}
Thus the FM objective does not certify velocity-field accuracy uniformly across directions. In
particular, if
$\mathcal{L}_t(A)-\mathcal{L}_t(A^\star(t))\le \varepsilon$, then
\begin{equation}
\bigl\|(A-A^\star(t))u_i\bigr\|^2
\le
\frac{\varepsilon}{\sigma_i(t)} .
\label{eq:fm_low_variance_error_bound}
\end{equation}
Consequently, when $\sigma_i(t)$ is small, \textit{\textcolor{myblue}{a small fixed-time FM excess loss may still be
consistent with large velocity error in the corresponding low-variance direction}}. This gives a
mechanism for why the FM training loss can appear nearly saturated while sample quality continues
to improve: \textit{\textcolor{myblue}{the remaining error may be concentrated in directions that are weakly weighted by $p_t$}}. This complements recent work emphasizing that squared-error training objectives can be imperfect proxies for perceptual generation quality \citep{lin2023diffusion,xu2026diagnosing}.

This also separates preconditioning from scalar time reweighting and schedule-based corrections, which are common tools for improving diffusion and FM models \citep{aranguri2025optimizing,billera2026time}. If the fixed-time objective is rescaled by a scalar weight $\alpha(t)>0$, then the Hessian with respect to $A$ is simply scaled by $\alpha(t)$, and its condition number remains governed by $\kappa(\Sigma_t)$. \textit{\textcolor{myblue}{Scalar time reweighting can change the relative importance of different times, but it cannot remove the directional stiffness of the regression problem at a fixed $t$}}. \textit{\textcolor{myorange}{By contrast, \Cref{thm:fm_preconditioning} changes the covariance of the intermediate FM path itself, replacing $\Sigma_t$ by $\tilde{\Sigma}_t$. Preconditioning therefore acts on the directional geometry of the FM regression problem, rather
than merely rescaling the objective}} \citep{BENZI2002418,saade}. Full formal statements and proofs of these FM-specific consequences are provided in \Cref{cor:small_loss_hides_velocity_error,cor:time_reweighting_no_directional_fix} in \Cref{app:fm_specific_implications}.

\textbf{Normalizing flow preconditioner.}
A natural preconditioner is an invertible normalizing flow $\mathcal{P}_\theta$ trained by maximum likelihood to map data toward a standard Gaussian, $\tilde{x}_1=\mathcal{P}_\theta(x_1)$, using the standard change-of-variables objective with log-determinant correction \citep{kobyzev2020normalizing}. Although normalizing flows are often less expressive than modern FM backbones, they are well suited for preconditioning: their invertibility allows us to transform the target data to a more isotropic representation, train the main FM model in that space, and map samples back with $\mathcal{P}_\theta^{-1}$.

\textbf{Low-capacity/accuracy flow preconditioner.} A second option is to use a low-capacity/accuracy FM model itself as a preconditioner. We first train a velocity field $v_\eta(x_t,t)$ using the standard FM objective. To map data samples backward through the learned flow, we solve the reverse-time ODE, $d x_\tau / d\tau = - v_\eta(x_\tau, 1-\tau), x_{\tau = 0} \sim p_{t=1}, \tau \in [0,1]$. The preconditioned representation is the terminal state,
$\tilde{x}_1 := \mathcal{P}_\eta(x_{\tau=0})$. Because the preconditioning flow has limited capacity, it captures only coarse structure, often producing a target representation that is substantially more Gaussian-like than the original data. We then train the main FM model on $\tilde{x}_1$ using the standard Gaussian as the reference distribution, and map generated samples back through $\mathcal{P}_\eta^{-1}$.

\section{Experiments}
\label{sec:experiments}

We evaluate whether preconditioning improves flow matching in three regimes: 
(i) controlled low-dimensional problems where transport geometry can be inspected directly, 
(ii) latent-space MNIST generation where we can compare several preconditioning strategies and compute-matched baselines, and 
(iii) image synthesis across resolutions using flow-based preconditioning. 
Across these settings, our goal is not to introduce a new image backbone or claim state-of-the-art image generation, but to test the central prediction of the theory: \textcolor{myorange}{\textit{reshaping the target geometry should improve the conditioning of the intermediate flow matching regression problems and lead to better sample quality}}.

\begin{figure}
    \centering
    \includegraphics[width=\textwidth]{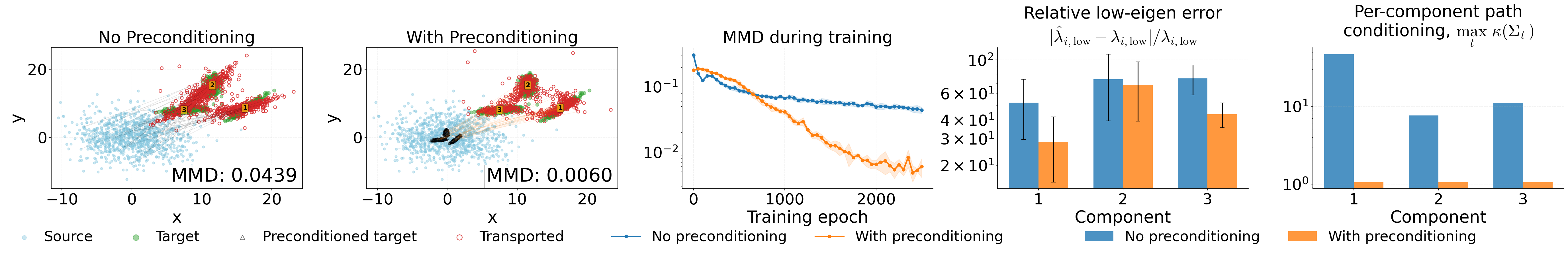}
    \caption{\textbf{Preconditioning improves Gaussian-mixture flow matching.}
        On a representative three-component anisotropic GMM, standard FM captures the dominant high-variance directions but poorly recovers the narrow components, leading to distorted transported samples and higher MMD. Preconditioning reshapes the target into a better-conditioned space, improves sample alignment, lowers MMD throughout training, reduces low-eigenvalue recovery error, and decreases the per-component path condition numbers. Plots show the mean over 10 independent runs with different random seeds; error bars / shaded regions denote one standard deviation. Experiments on additional GMM configurations are provided in \Cref{appsec:gmm_experiments}.}
    \label{fig:gmm_experiment_main}
\end{figure}

\textbf{Metrics.} We report Fréchet Inception Distance (FID) and Maximum Mean Discrepancy (MMD) to measure distributional discrepancy, and Precision and Recall to measure complementary aspects of sample fidelity and distributional coverage. Lower is better for FID/MMD, and higher is better for Precision/Recall. 
Unless otherwise stated, all comparisons use the same downstream flow matching generator on a specific dataset.

\textbf{Controlled diagnostics.}
The theory predicts that anisotropic intermediate distributions slow optimization by suppressing progress along low-variance directions. The Gaussian experiment in \Cref{fig:gaussian_to_elongated_gaussian} visualizes this effect: standard FM fits the high-variance direction rapidly but stagnates along the narrow direction, while preconditioning improves path conditioning and reduces MMD. We further test the same mechanism on controlled Gaussian mixtures in \Cref{fig:gmm_experiment_main}, with additional configurations in \Cref{appsec:gmm_experiments}. These experiments isolate the worst-component bottleneck predicted by \Cref{thm:gmm_bottleneck}: across mixtures with varying component numbers and condition numbers, preconditioning reduces the maximum path condition number, improves low-eigenvalue recovery, and lowers final MMD relative to standard FM. Additional Swiss-roll visualizations are provided in \Cref{subsec:point_clouds_2d}.

\begin{wraptable}[13]{r}{0.54\textwidth}
\vspace{-0.5em}
\centering
\caption{MNIST comparison of preconditioning strategies. Compute-matched baselines enlarge the unpreconditioned FM generator to match the total parameters of learned preconditioner + FM. Lower is better for FID/MMD; higher is better for Precision/Recall.}
\label{tab:mnist_table}
\scriptsize
\setlength{\tabcolsep}{3.5pt}
\resizebox{\linewidth}{!}{
\begin{tabular}{lcccccc}
\toprule
\textbf{\begin{tabular}[c]{@{}l@{}}Preconditioning \\ method\end{tabular}} 
& 
\begin{tabular}[c]{@{}c@{}}\textbf{Compute} \\ \textbf{matched?}\end{tabular} 
& 
\# \textbf{Params} 
& 
\textbf{FID} $\downarrow$ 
& 
\begin{tabular}[c]{@{}c@{}}\textbf{MMD} $\downarrow$ \\ ($\times 10^{-3}$)\end{tabular} 
& 
\textbf{Precision} $\uparrow$ 
& 
\textbf{Recall} $\uparrow$ 
\\ 
\midrule

No preconditioning & No  & 2.32M & 5.97 & 77.10 & 0.5774 & 0.2856 \\
No preconditioning & Yes & 3.37M & 5.42 & 75.46 & 0.5869 & 0.2618 \\

\rowcolor{oursrow}
ZCA whitening \textit{(Ours)} & No  & 2.32M & 4.20 & 6.25  & 0.6022 & 0.3107 \\

\rowcolor{oursrow}
ZCA whitening \textit{(Ours)} & Yes & 3.37M & 4.02 & 7.02  & 0.6117 & 0.2885 \\

\midrule 

\rowcolor{oursrow}
NF preconditioner \textit{(Ours)} & No  & 3.37M & \textbf{2.74} & \textbf{2.95}  & 0.7271 & 0.3561 \\

\rowcolor{oursrow}
FM preconditioner \textit{(Ours)} & No  & 3.37M & 3.98 & 3.02  & \textbf{0.8440} & \textbf{0.5486} \\

\bottomrule
\end{tabular}
}
\vspace{0.2em}
\end{wraptable}

\textbf{Experiments on MNIST.}
We evaluate preconditioning on MNIST in the latent space of a trained VAE. The downstream generator is a class-conditional CNN-based latent FM model, kept fixed across the main variants to isolate the effect of preconditioning. We compare standard latent FM with no preconditioning, NF and FM preconditioning, while also reporting ZCA whitening \citep{kessy2018optimal} and compute-matched unpreconditioned baselines in \Cref{tab:mnist_table}. Preconditioning consistently improves over the unpreconditioned baseline: NF gives the best FID/MMD, while FM preconditioning gives the best precision/recall. The compute-matched baseline improves only modestly, supporting that the gains come primarily from reshaping the latent geometry rather than simply adding parameters. Additional qualitative results and implementation details are provided in \Cref{qualitative_comparison_mnist,appsubsec:mnist}.

\begin{wraptable}[16]{r}{0.54\textwidth}
\vspace{-1.3em}
\centering
\scriptsize
\caption{FID, Precision, and Recall across diverse image datasets. We compare standard flow matching against flow matching with a flow preconditioner. Lower is better for FID; higher is better for Precision/Recall.}
\label{tab:high_res_image_synthesis_results_table}
\vspace{-0.6em}
\setlength{\tabcolsep}{4pt}
\renewcommand{\arraystretch}{1.02}
\resizebox{\linewidth}{!}{%
\begin{tabular}{llccc}
\toprule
\textbf{Dataset} 
& \textbf{Method}
& \textbf{FID} $\downarrow$
& \textbf{Precision} $\uparrow$
& \textbf{Recall} $\uparrow$ \\
\midrule

\multirow{2}{*}{\begin{tabular}[c]{@{}l@{}}LSUN Churches\\[-0.15em]\scriptsize $(256 \times 256)$\end{tabular}}
& No preconditioner
& 19.53
& 0.4540
& 0.3239 \\
&
\cellcolor{oursrow}FM preconditioner \textit{(Ours)}
& \cellcolor{oursrow}\textbf{14.47}
& \cellcolor{oursrow}\textbf{0.5313}
& \cellcolor{oursrow}\textbf{0.3255} \\
\midrule

\multirow{2}{*}{\begin{tabular}[c]{@{}l@{}}Oxford Flowers-102\\[-0.15em]\scriptsize $(256 \times 256)$\end{tabular}}
& No preconditioner
& 23.82
& 0.5213
& 0.2761 \\
&
\cellcolor{oursrow}FM preconditioner \textit{(Ours)}
& \cellcolor{oursrow}\textbf{22.64}
& \cellcolor{oursrow}\textbf{0.5428}
& \cellcolor{oursrow}\textbf{0.2854} \\
\midrule

\multirow{2}{*}{\begin{tabular}[c]{@{}l@{}}AFHQ Cats\\[-0.15em]\scriptsize $(512 \times 512)$\end{tabular}}
& No preconditioner
& 7.11
& 0.7764
& \textbf{0.5396} \\
&
\cellcolor{oursrow}FM preconditioner \textit{(Ours)}
& \cellcolor{oursrow}\textbf{4.86}
& \cellcolor{oursrow}\textbf{0.8320}
& \cellcolor{oursrow}0.5297 \\
\midrule

\multirow{2}{*}{\begin{tabular}[c]{@{}l@{}}CIFAR-10\\[-0.15em]\scriptsize $(32 \times 32)$\end{tabular}}
& No preconditioner
& 4.15
& \textbf{0.8097}
& 0.5878 \\
&
\cellcolor{oursrow}FM preconditioner \textit{(Ours)}
& \cellcolor{oursrow} \textbf{4.02}
& \cellcolor{oursrow} 0.7856
& \cellcolor{oursrow} \textbf{0.5892} \\
\midrule

\multirow{2}{*}{\begin{tabular}[c]{@{}l@{}}ImageNet-1k\\[-0.15em]\scriptsize $(64 \times 64)$\end{tabular}}
& No preconditioner
& 8.66
& 0.7800
& 0.6339 \\
&
\cellcolor{oursrow}FM preconditioner \textit{(Ours)}
& \cellcolor{oursrow}\textbf{4.98}
& \cellcolor{oursrow}\textbf{0.8172}
& \cellcolor{oursrow}\textbf{0.6432} \\

\bottomrule
\end{tabular}%
}
\vspace{-.5em}
\end{wraptable}

\textbf{Image synthesis across resolutions.}
We next evaluate whether flow-based preconditioning remains useful beyond MNIST. We use a UNet or DiT-based latent flow matching generator and compare standard flow matching against flow matching with a learned flow preconditioner (UNet or DiT-based). Experiments are conducted on LSUN Churches and Oxford Flowers-102 at $256{\times}256$, AFHQ Cats at $512{\times}512$, CIFAR-10 at $32{\times}32$ and ImageNet-1k at $64{\times}64$. For these datasets, we use flow-based preconditioning rather than normalizing flow preconditioning, since normalizing flows impose architectural and invertibility constraints that are less suitable for complex image data \citep{papamakarios2021normalizing}. As shown in \Cref{tab:high_res_image_synthesis_results_table}, flow-based preconditioning improves most metrics across all image datasets. Qualitative comparisons are provided in \Cref{qualititive_comparison_images}, and full implementation details are given in \Cref{experimental_settings}. Qualitative results and additional experimental details are provided in \Cref{qualitative_comparison_highres_images,appsubsec:floweers_and_lsun_churches}, respectively.

\textbf{Condition-number dynamics and preconditioner capacity/quality ablations.}
We provide diagnostics testing whether empirical behavior matches the conditioning mechanism predicted by the theory. In \Cref{appsubsec:condition_number_mnist}, we track $\kappa(\Sigma_t)$ for MNIST and ImageNet-1k, finding that preconditioning reduces condition numbers most strongly at larger interpolation times $t$, where the path inherits more data anisotropy. In \Cref{appsec:preconditioner_quality_ablation}, we ablate preconditioner capacity and training quality on MNIST using FID. Increasing preconditioner parameters or training epochs improves FID, but the gains saturate, while compute-matched unpreconditioned baselines improve only modestly. Thus, the gains are not merely due to additional parameters: a moderately accurate preconditioner is sufficient to reshape the downstream FM geometry. Additional details are provided in \Cref{appsubsec:condition_number_mnist,appsec:preconditioner_quality_ablation}.

\textbf{Limitations and broader impact.} We discuss limitations and broader impact of our work in \Cref{app:limitation,app:broader_impacts}.

\section{Discussion and Conclusion}
\label{sec:conclusion}
We studied flow matching as a time-indexed stochastic regression problem and showed, in Gaussian and Gaussian-mixture settings, that ill-conditioned intermediate distributions $p_t$ can slow optimization along low-variance directions even when the target velocity field is exactly representable. This perspective gives FM-specific implications: small FM loss may hide directional velocity errors, scalar time reweighting cannot remove fixed-time directional stiffness, and preconditioning helps by reshaping the path covariance from $\Sigma_t$ to a better-conditioned $\tilde{\Sigma}_t$. Experiments on controlled distributions, latent MNIST and image datasets support this mechanism through improved conditioning diagnostics and sample-quality metrics, while future work includes developing theory that captures the local geometry of nonlinear flow networks and designing adaptive preconditioners for realistic large-scale generative models.


\bibliography{biblio}
\bibliographystyle{icml2026}

\newpage
\appendix
\onecolumn

\section{Additional Related Work}
\label{app:related_work}
\textbf{Generative models.} Generative models aim to learn complex data distributions in order to synthesize realistic samples. Early approaches include explicit density models such as variational autoencoders and autoregressive models, which provide principled likelihood-based training objectives \citep{kingma2013auto}. More recently, flow matching methods \citep{lipman2023flow} have been proposed, which learn the velocity field for a continuous transformation between distributions.
Normalizing flows and diffusion-based methods frame generation as transforming a simple reference distribution into the data distribution, either through invertible mappings or stochastic dynamics, and have achieved strong results across many domains \citep{dinh2016density,ho2020denoising}. Recent work by \cite{zhai2025tarflow} further demonstrates that normalizing flows can serve as highly competitive generative models.

\textbf{Preconditioning.} Preconditioning is a classical technique in numerical analysis aimed at improving the conditioning of optimization problems \citep{Ye2024Preconditioning} and differential equations \citep{Lazzarino2025PreconditionedNormalEquations}, thereby accelerating convergence and enhancing numerical stability \citep{saad2003iterative,trefethen1997numerical}. By rescaling variables or operators to reduce anisotropy and curvature disparities, preconditioning enables more efficient iterative solvers and more stable time integration \citep{Quirynen2019PRESAS}. Similar issues arise in machine learning, where poorly conditioned objectives can hinder optimization and lead to slow or unstable training dynamics \citep{Cau2024analysis}. Consequently, preconditioning has been widely adopted in large-scale learning systems, both implicitly through adaptive optimization methods and explicitly through problem-dependent transformations, yielding substantial improvements in convergence speed and robustness \citep{dauphin2014identifying}. In the context of diffusion models, recent work has shown that appropriate reparameterization and noise-dependent preconditioning can substantially improve optimization behavior and sample quality, highlighting the critical role of conditioning in generative model training \citep{karras2022elucidating_diffusion_design}.

\section{Limitation}
\label{app:limitation}
Our analysis isolates the conditioning mechanism in Gaussian and Gaussian-mixture flow matching settings, where the population targets and optimization dynamics can be characterized exactly. These models are intentionally simplified: they expose how anisotropic intermediate distributions can slow first-order optimization, but they do not provide a global convergence theory for arbitrary nonlinear neural networks, adaptive optimizers, or large-scale generative backbones. In practical image models, the relevant geometry is local, high-dimensional, and changes during training, so our condition-number diagnostics should be viewed as empirical evidence for the proposed mechanism rather than a complete spectral characterization of the full training problem. 

Preconditioning also introduces an additional training stage, with choices such as preconditioner architecture, capacity, training duration, and inverse accuracy. Our compute-matched experiments and ablations suggest that the gains are not explained merely by extra parameters, but the best cost--quality trade-off for the preconditioner remains task-dependent. In addition, flow-based preconditioners are only approximately invertible through numerical ODE integration, which may introduce solver-dependent errors in some regimes. Finally, our experiments cover controlled distributions, latent MNIST, and several image datasets, but broader evaluation on larger-scale conditional generation, stronger production-level backbones, and standardized compute budgets would further clarify when preconditioning is most beneficial. Future work could also study time-dependent or locally adaptive preconditioners, which may better match the changing geometry of the flow matching path.

\section{Broader Impact}
\label{app:broader_impacts}
This work is primarily theoretical and methodological, focusing on the optimization geometry of flow matching. Potential positive impacts include more stable and efficient generative modeling methods, which may benefit scientific modeling, image synthesis, and representation learning. As with other advances in generative modeling, possible negative impacts include misuse for generating misleading or deceptive synthetic content. The paper does not target such applications, does not introduce a deployed system, and uses standard public datasets and controlled synthetic experiments.

\section{Proofs for Flow Matching as Stochastic Regression and Gaussian FM}
\label{app:gaussian_fm_proofs}

\subsection{Proof of \Cref{prop:cfm_regression_decomposition}}
\label{app:cfm_regression_decomposition_proof}

We prove the standard conditional-mean characterization of the conditional flow matching objective. Define
\begin{equation}
Y := v_t^\star(x_0,x_1), \qquad X := (x_t,t),
\end{equation}
so that the population objective can be written as
\begin{equation}
\mathcal{L}(g)=\mathbb{E}\big[\|g(X)-Y\|^2\big].
\end{equation}
Let
\begin{equation}
g^\star(X):=\mathbb{E}[Y\mid X].
\end{equation}
Then
\begin{equation}
Y = g^\star(X) + \bigl(Y-g^\star(X)\bigr),
\end{equation}
where by construction
\begin{equation}
\mathbb{E}\big[Y-g^\star(X)\mid X\big]=0.
\end{equation}
Hence,
\begin{align}
\mathcal{L}(g)
&= \mathbb{E}\big[\|g(X)-g^\star(X)+g^\star(X)-Y\|^2\big] \\
&= \mathbb{E}\big[\|g(X)-g^\star(X)\|^2\big]
   + \mathbb{E}\big[\|Y-g^\star(X)\|^2\big] \nonumber\\
&\quad + 2\,\mathbb{E}\big[\langle g(X)-g^\star(X),\, g^\star(X)-Y\rangle\big].
\end{align}
The cross term vanishes because conditioning on $X$ gives
\begin{align}
&\mathbb{E}\big[\langle g(X)-g^\star(X),\, g^\star(X)-Y\rangle\big] \\
&\qquad =
\mathbb{E}\Big[
\big\langle g(X)-g^\star(X),\,
\mathbb{E}[g^\star(X)-Y\mid X]
\big\rangle
\Big] = 0.
\end{align}
Therefore,
\begin{equation}
\mathcal{L}(g)
=
\mathbb{E}\big[\|g(X)-g^\star(X)\|^2\big]
+
\mathbb{E}\big[\|Y-g^\star(X)\|^2\big].
\end{equation}

Since
\begin{equation}
\mathbb{E}\big[\|Y-g^\star(X)\|^2\big]
=
\mathbb{E}\!\left[
\operatorname{Tr}\!\left(\operatorname{Cov}(Y\mid X)\right)
\right],
\end{equation}
we obtain
\begin{equation}
\mathcal{L}(g)
=
\mathbb{E}\big[\|g(x_t,t)-g^\star(x_t,t)\|^2\big]
+
\mathbb{E}\!\left[
\operatorname{Tr}\!\left(
\operatorname{Cov}\!\left(v_t^\star(x_0,x_1)\mid x_t,t\right)
\right)
\right],
\end{equation}
which proves \Cref{eq:regression_decomposition}. In particular,
\begin{equation}
\mathcal{L}(g)-\mathcal{L}(g^\star)
=
\mathbb{E}\big[\|g(x_t,t)-g^\star(x_t,t)\|^2\big],  
\end{equation}
and therefore $g^\star$ is the unique population minimizer up to almost-sure equivalence.

\subsection{Proof of \Cref{prop:gaussian_exact_target}}
\label{app:gaussian_exact_target_proof}
Assume $x_0 \sim \mathcal{N}(0,I), x_1 \sim \mathcal{N}(0,H), x_t=(1-t)x_0+t x_1$, with $x_0$ and $x_1$ independent, so $\mathbb{E}[x_0 x_1^\top] = \mathbb{E}[x_1 x_0^\top] = 0$.

\paragraph{Step 1: covariance of the intermediate state.}
Since $x_0$ and $x_1$ are independent and zero mean,
\begin{align}
\Sigma_t
&:= \mathbb{E}[x_t x_t^\top] \\
&= \mathbb{E}\!\left[\bigl((1-t)x_0+t x_1\bigr)\bigl((1-t)x_0+t x_1\bigr)^\top\right] \\
&= (1-t)^2 \mathbb{E}[x_0x_0^\top]
   + t^2 \mathbb{E}[x_1x_1^\top]
   + t(1-t)\mathbb{E}[x_0x_1^\top]
   + t(1-t)\mathbb{E}[x_1x_0^\top] \\
&= (1-t)^2 I + t^2 H.
\end{align}

\paragraph{Step 2: conditional mean target.}
Let $y := x_1-x_0$. Because $(y,x_t)$ is jointly Gaussian and zero mean, the conditional expectation is linear:
\begin{equation}
\mathbb{E}[y\mid x_t]
=
\operatorname{Cov}(y,x_t)\,\Sigma_t^{-1}x_t.
\end{equation}
It remains to compute the cross-covariance:
\begin{align}
\operatorname{Cov}(x_1-x_0,x_t)
&= \mathbb{E}\big[(x_1-x_0)\big((1-t)x_0+t x_1\big)^\top\big] \\
&= (1-t)\mathbb{E}[(x_1-x_0)x_0^\top]
   + t\mathbb{E}[(x_1-x_0)x_1^\top].
\end{align}
Using independence and zero mean,
\begin{equation}
\mathbb{E}[(x_1-x_0)x_0^\top] = -I,
\qquad
\mathbb{E}[(x_1-x_0)x_1^\top] = H.
\end{equation}
Hence,
\begin{equation}
\operatorname{Cov}(x_1-x_0,x_t)=tH-(1-t)I.
\end{equation}
Therefore,
\begin{equation}
g_t^\star(x_t)
=
\mathbb{E}[x_1-x_0\mid x_t]
=
A^\star(t)x_t,
\end{equation}
with
\begin{equation}
A^\star(t) = \bigl(tH-(1-t)I\bigr)\bigl((1-t)^2I+t^2H\bigr)^{-1}.
\end{equation}
This proves \Cref{prop:gaussian_exact_target}.

\subsection{Proof of \Cref{lem:gaussian_excess_risk}}
\label{app:gaussian_excess_risk_proof}

Fix $t\in[0,1]$ and define
\begin{equation}
\mathcal{L}_t(A)=\mathbb{E}\big[\|Ax_t-(x_1-x_0)\|^2\big].
\end{equation}
Let
\begin{equation}
y:=x_1-x_0,
\qquad
g_t^\star(x_t)=A^\star(t)x_t=\mathbb{E}[y\mid x_t].
\end{equation}
By \Cref{prop:cfm_regression_decomposition}, since $g_A(x_t)=Ax_t$ is a measurable predictor,
\begin{equation}
\mathcal{L}_t(A)
=
\mathbb{E}\big[\|Ax_t-A^\star(t)x_t\|^2\big]
+
\mathbb{E}\!\left[
\operatorname{Tr}\!\left(
\operatorname{Cov}(y\mid x_t)
\right)
\right].
\label{eq:appendix_risk_decomp_start}
\end{equation}
Because
\begin{equation}
Ax_t-A^\star(t)x_t = (A-A^\star(t))x_t,
\end{equation}
we have
\begin{align}
\mathbb{E}\big[\|(A-A^\star(t))x_t\|^2\big]
&= \mathbb{E}\big[x_t^\top (A-A^\star(t))^\top (A-A^\star(t))x_t\big] \\
&= \mathbb{E}\big[\operatorname{Tr} \big(x_t^\top (A-A^\star(t))^\top (A-A^\star(t))x_t\big)\big] \\
&= \operatorname{Tr}\!\Big(
(A-A^\star(t))^\top (A-A^\star(t))\,\mathbb{E}[x_t x_t^\top]
\Big) \\
&= \operatorname{Tr}\!\Big(
(A-A^\star(t))^\top (A-A^\star(t))\,\Sigma_t
\Big) \\
&= \operatorname{Tr}\!\Big(
(A-A^\star(t))\Sigma_t(A-A^\star(t))^\top
\Big),
\end{align}
where the last equality uses cyclic invariance of the trace. Substituting this into
\Cref{eq:appendix_risk_decomp_start} gives
\begin{equation}
\mathcal{L}_t(A)
=
\mathbb{E}\!\left[
\operatorname{Tr}\!\left(
\operatorname{Cov}(x_1-x_0\mid x_t)
\right)
\right]
+
\operatorname{Tr}\!\Big(
(A-A^\star(t))\Sigma_t(A-A^\star(t))^\top
\Big),
\end{equation}
which proves \Cref{eq:gaussian_risk_decomp}.

Since $\Sigma_t\succ 0$, the quadratic trace term is nonnegative and vanishes only when
$A=A^\star(t)$. Therefore $A^\star(t)$ is the unique minimizer. Evaluating the previous identity at
$A=A^\star(t)$ gives
\begin{equation}
\mathcal{L}_t(A^\star(t))
=
\mathbb{E}\!\left[
\operatorname{Tr}\!\left(
\operatorname{Cov}(x_1-x_0\mid x_t)
\right)
\right].
\end{equation}
Subtracting this identity from the expression for $\mathcal{L}_t(A)$ yields
\begin{equation}
\mathcal{L}_t(A)-\mathcal{L}_t(A^\star(t))
=
\operatorname{Tr}\!\Big(
(A-A^\star(t))\Sigma_t(A-A^\star(t))^\top
\Big),
\end{equation}
which proves \Cref{eq:excess_risk_trace} and \Cref{eq:gaussian_excess_risk_only}.

\subsection{Proof of \Cref{thm:gd_gaussian}}
\label{app:gd_gaussian_proof}

Recall the population loss
\begin{equation}
\mathcal{L}_t(A)=\mathbb{E}\big[\|Ax_t-(x_1-x_0)\|^2\big].
\end{equation}
Let
\begin{equation}
C_t := \operatorname{Cov}(x_1-x_0,x_t)=tH-(1-t)I.
\end{equation}
Using the standard matrix derivative identity,
\begin{equation}
\nabla_A \mathcal{L}_t(A) = 2(A\Sigma_t-C_t).
\end{equation}
Therefore full-batch gradient descent with step size $\eta$ is
\begin{equation}
A_{k+1}=A_k-2\eta(A_k\Sigma_t-C_t).
\end{equation}
Let
\begin{equation}
E_k:=A_k-A^\star(t),
\end{equation}
where $A^\star(t)\Sigma_t=C_t$. Then
\begin{align}
E_{k+1}
&= A_{k+1}-A^\star(t) \\
&= A_k-2\eta(A_k\Sigma_t-C_t)-A^\star(t) \\
&= A_k-A^\star(t)-2\eta(A_k-A^\star(t))\Sigma_t \\
&= E_k(I-2\eta\Sigma_t).
\end{align}
This proves \Cref{eq:error_recursion_gd_refined}. Hence, for a fixed step size $\eta$, the error recursion is
\begin{equation}
E_{k+1}=E_k(I-2\eta\Sigma_t).
\end{equation}
Since $\Sigma_t$ is symmetric positive definite, let
\begin{equation}
\Sigma_t = U\operatorname{diag}(\sigma_1(t),\ldots,\sigma_d(t))U^\top,
\end{equation}
where
\begin{equation}
0<\sigma_{\min}(t)\leq \sigma_i(t)\leq \sigma_{\max}(t).
\end{equation}
Then
\begin{equation}
I-2\eta\Sigma_t
=
U\operatorname{diag}\bigl(1-2\eta\sigma_1(t),\ldots,
1-2\eta\sigma_d(t)\bigr)U^\top.
\end{equation}
Thus each eigendirection contracts independently by a factor
\begin{equation}
|1-2\eta\sigma_i(t)|.
\end{equation}
The worst-case contraction factor for a given $\eta$ is therefore
\begin{equation}
\rho(\eta)
=
\max_i |1-2\eta\sigma_i(t)|
=
\max\Bigl\{
|1-2\eta\sigma_{\min}(t)|,\,
|1-2\eta\sigma_{\max}(t)|
\Bigr\}.
\end{equation}
Convergence requires $\rho(\eta)<1$, which is equivalent to
\begin{equation}
0<\eta<\frac{1}{\sigma_{\max}(t)}.
\end{equation}

The optimal fixed step size minimizes the worst-case contraction factor
\begin{equation}
\rho(\eta)
=
\max\Bigl\{
|1-2\eta\sigma_{\min}(t)|,\,
|1-2\eta\sigma_{\max}(t)|
\Bigr\}.
\end{equation}
At the optimum, the two endpoint contractions are balanced:
\begin{equation}
1-2\eta^\star\sigma_{\min}(t)
=
-\bigl(1-2\eta^\star\sigma_{\max}(t)\bigr).
\end{equation}
Solving gives
\begin{equation}
1-2\eta^\star\sigma_{\min}(t)
=
-1+2\eta^\star\sigma_{\max}(t),
\end{equation}
and hence
\begin{equation}
2
=
2\eta^\star\bigl(\sigma_{\max}(t)+\sigma_{\min}(t)\bigr).
\end{equation}
Therefore,
\begin{equation}
\eta^\star
=
\frac{1}{\sigma_{\max}(t)+\sigma_{\min}(t)}.
\end{equation}

Substituting this step size into the contraction factor gives
\begin{equation}
\rho^\star
=
1-2\eta^\star\sigma_{\min}(t)
=
1-\frac{2\sigma_{\min}(t)}
{\sigma_{\max}(t)+\sigma_{\min}(t)}.
\end{equation}
Thus
\begin{equation}
\rho^\star
=
\frac{\sigma_{\max}(t)-\sigma_{\min}(t)}
{\sigma_{\max}(t)+\sigma_{\min}(t)}.
\end{equation}
Equivalently, using
\begin{equation}
\kappa(\Sigma_t)
=
\frac{\sigma_{\max}(t)}{\sigma_{\min}(t)},
\end{equation}
we obtain
\begin{equation}
\rho^\star
=
\frac{
\frac{\sigma_{\max}(t)}{\sigma_{\min}(t)}-1
}{
\frac{\sigma_{\max}(t)}{\sigma_{\min}(t)}+1
}
=
\frac{\kappa(\Sigma_t)-1}{\kappa(\Sigma_t)+1}.
\end{equation}

After $k$ gradient-descent steps,
\begin{equation}
\|E_k\|
\leq
(\rho^\star)^k\|E_0\|.
\end{equation}
To reduce the error by a factor $\varepsilon\in(0,1)$, it is sufficient that
\begin{equation}
(\rho^\star)^k \leq \varepsilon.
\end{equation}
Taking logarithms gives
\begin{equation}
k\log(\rho^\star)
\leq
\log(\varepsilon).
\end{equation}
Since $\rho^\star\in(0,1)$, we have $\log(\rho^\star)<0$, and therefore
\begin{equation}
k
\geq
\frac{\log(1/\varepsilon)}
{-\log(\rho^\star)}.
\end{equation}

Now
\begin{equation}
\rho^\star
=
\frac{\kappa(\Sigma_t)-1}{\kappa(\Sigma_t)+1}
=
1-\frac{2}{\kappa(\Sigma_t)+1}.
\end{equation}
Using the standard bound $-\log(1-a)\ge a$ for $a\in(0,1)$, with
\begin{equation}
a=\frac{2}{\kappa(\Sigma_t)+1},
\end{equation}
we obtain
\begin{equation}
-\log(\rho^\star)
=
-\log\!\left(1-\frac{2}{\kappa(\Sigma_t)+1}\right)
\ge
\frac{2}{\kappa(\Sigma_t)+1}.
\end{equation}
Hence
\begin{equation}
\frac{\log(1/\varepsilon)}{-\log(\rho^\star)}
\le
\frac{\kappa(\Sigma_t)+1}{2}\log(1/\varepsilon).
\end{equation}
Therefore, it is sufficient to choose
\begin{equation}
k
\ge
\frac{\kappa(\Sigma_t)+1}{2}\log(1/\varepsilon).
\end{equation}
Thus the number of iterations required to reduce the error by a factor $\varepsilon$ scales as
\begin{equation}
k=\mathcal{O}\!\left(\kappa(\Sigma_t)\log(1/\varepsilon)\right).
\end{equation}
This proves \Cref{thm:gd_gaussian}.

\subsection{Proof of \Cref{cor:low_variance_bottleneck}}
\label{app:low_variance_bottleneck_proof}

By \Cref{thm:gd_gaussian}, the GD error evolves modewise according to the spectrum of $\Sigma_t$. The slowest mode corresponds to the smallest eigenvalue $\sigma_{\min}(t)$, so directions associated with small $\sigma_i(t)$ converge most slowly. If $H$ is ill-conditioned and $t$ is not close to zero, then
\begin{equation}
\sigma_i(t)=(1-t)^2+t^2\lambda_i
\end{equation}
inherits the disparity in the eigenvalues of $H$, and therefore the optimization becomes dominated by the low-variance directions of the intermediate distribution. This proves the corollary.

\begin{figure}[h]
    \centering
    \includegraphics[width=\linewidth]{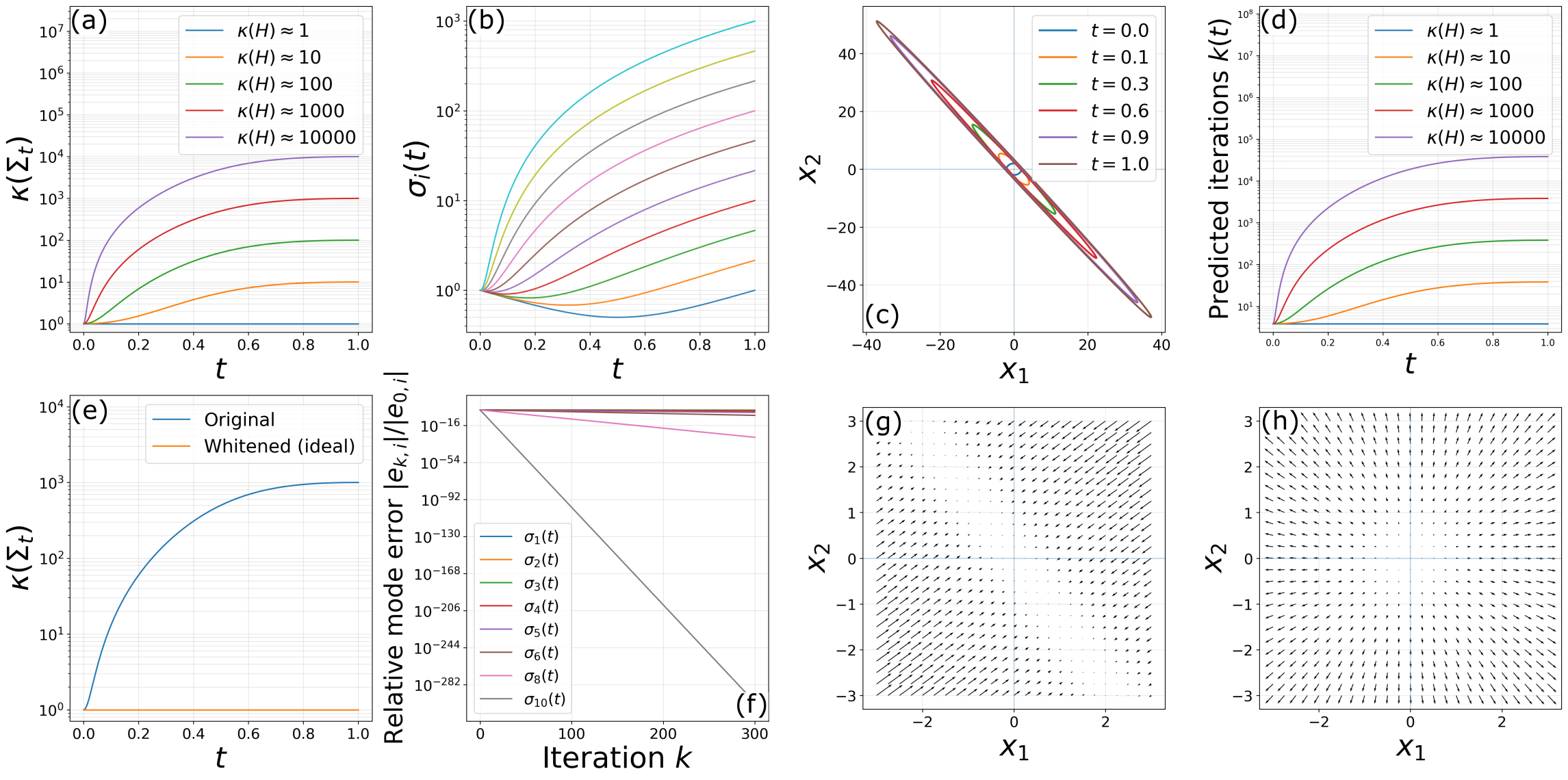}
    \caption{\textbf{Gaussian model illustrating conditioning and optimization effects in flow matching:} \textbf{(a)} Growth of the condition number $\kappa(\Sigma_t$) along the linear interpolation $\Sigma_t = (1-t)^2 I + t^2 H$ showing how intermediate distributions $p_t$ gradually inherit the ill-conditioning of $H$; \textbf{(b)} evolution of the eigenvalues $\sigma_i(t)$ of $\Sigma_t$ for $\kappa(H) = 1000$, highlighting the increasing spectral disparity that creates slow and fast directions; \textbf{(c)} Covariance ellipses of $p_t = \mathcal{N}(0, \Sigma_t)$ at selected times, visualizing the transition from an isotropic distribution to a highly anisotropic one; \textbf{(d)} Predicted number of gradient-descent iterations $k(t)$ required to reach a fixed error tolerance, demonstrating the sharp slowdown at large $t$ due to ill-conditioning; \textbf{(e)} Effect of ideal preconditioning (whitening), which removes conditioning growth and keeps $\kappa(\Sigma_t) \approx 1$ for all $t$; \textbf{(f)} Per-mode gradient-descent error decay at $t=0.9$, showing that large-variance modes converge rapidly while small-variance modes decay extremely slowly; \textbf{(g)} Optimal score field $s^*(x, t) = -\Sigma_t^{-1} x$, exhibiting strong anisotropy and stiffness along low-variance directions; \textbf{(h)} Optimal flow matching velocity field $v^\star(x, t) = A^\star(t) x$, which is smoother than the score but still inherits anisotropy from $\Sigma_t$.}
    \label{fig:gaussian_model_all_plots}
\end{figure}

\subsection{Proof of \Cref{cor:sgd_gaussian}}
\label{app:sgd_gaussian_proof}

We derive the stochastic-gradient dynamics for the Gaussian FM regression problem at a fixed time $t$. 
For a single sample $(x_0^{(m)},x_1^{(m)})$, define
\begin{equation}
x_t^{(m)}=(1-t)x_0^{(m)}+t x_1^{(m)},
\qquad
y^{(m)}=x_1^{(m)}-x_0^{(m)}.
\end{equation}
The single-sample loss is
\begin{equation}
\ell_m(A)=\|Ax_t^{(m)}-y^{(m)}\|^2,
\end{equation}
with gradient
\begin{equation}
\nabla_A \ell_m(A)
=
2(Ax_t^{(m)}-y^{(m)})(x_t^{(m)})^\top.
\end{equation}
Thus SGD with step size $\eta$ gives
\begin{equation}
A_{m+1}
=
A_m
-
2\eta(A_mx_t^{(m)}-y^{(m)})(x_t^{(m)})^\top.
\end{equation}

Let $A^\star=A^\star(t)$ denote the population minimizer and define
\begin{equation}
E_m=A_m-A^\star.
\end{equation}
Then
\begin{align}
E_{m+1}
&=
A_{m+1}-A^\star \\
&=
E_m
-
2\eta E_m x_t^{(m)}(x_t^{(m)})^\top
-
2\eta(A^\star x_t^{(m)}-y^{(m)})(x_t^{(m)})^\top .
\end{align}
Taking conditional expectation given $E_m$ gives
\begin{align}
\mathbb{E}[E_{m+1}\mid E_m]
&=
E_m
-
2\eta E_m \mathbb{E}[x_t x_t^\top]
-
2\eta\mathbb{E}\big[(A^\star x_t-y)x_t^\top\big].
\end{align}
Using
\begin{equation}
\mathbb{E}[x_t x_t^\top]=\Sigma_t
\end{equation}
and the population normal equation
\begin{equation}
A^\star\Sigma_t
=
\operatorname{Cov}(y,x_t)
=
\mathbb{E}[y x_t^\top],
\end{equation}
we obtain
\begin{equation}
\mathbb{E}\big[(A^\star x_t-y)x_t^\top\big]
=
A^\star\Sigma_t-\mathbb{E}[yx_t^\top]
=
0.
\end{equation}
Therefore,
\begin{equation}
\mathbb{E}[E_{m+1}\mid E_m]
=
E_m(I-2\eta\Sigma_t).
\end{equation}
Thus the conditional mean SGD error follows the same contraction matrix as full-batch gradient descent.

Now diagonalize
\begin{equation}
\Sigma_t
=
U\operatorname{diag}(\sigma_1(t),\ldots,\sigma_d(t))U^\top.
\end{equation}
Define the error in the eigenbasis of $\Sigma_t$ by
\begin{equation}
\widetilde{E}_m := E_m U.
\end{equation}
Multiplying the mean recursion on the right by $U$ gives
\begin{equation}
\mathbb{E}[\widetilde{E}_{m+1}\mid E_m]
=
\widetilde{E}_m
\operatorname{diag}
\bigl(
1-2\eta\sigma_1(t),\ldots,1-2\eta\sigma_d(t)
\bigr).
\end{equation}
Hence, for any output coordinate $k$ and eigendirection $i$, the scalar component
\begin{equation}
e_{k i,m}:=(\widetilde{E}_m)_{k i}
\end{equation}
satisfies
\begin{equation}
\mathbb{E}[e_{k i,m+1}\mid E_m]
=
(1-2\eta\sigma_i(t))e_{k i,m}.
\end{equation}
Equivalently, the stochastic recursion can be written as
\begin{equation}
e_{k i,m+1}
=
(1-2\eta\sigma_i(t))e_{k i,m}
+
\zeta_{k i,m},
\end{equation}
where $\zeta_{k i,m}$ is a zero-mean stochastic-gradient noise term satisfying
\begin{equation}
\mathbb{E}[\zeta_{k i,m}\mid E_m]=0.
\end{equation}

Therefore, directions with smaller $\sigma_i(t)$ contract more slowly in the mean SGD dynamics, matching the full-batch GD behavior. In particular, when $\Sigma_t$ is ill-conditioned, the mean dynamics of SGD also make slow progress along low-variance eigendirections. This proves \Cref{cor:sgd_gaussian}.

The preceding Gaussian proofs isolate the same spectral mechanism from several angles: 
the covariance $\Sigma_t$ becomes increasingly anisotropic with $t$, the GD contraction factors become mode-dependent, and low-variance eigendirections converge slowly. \Cref{fig:gaussian_model_all_plots} visualizes these consequences, together with the corresponding score and flow fields, and the effect of ideal whitening.

\section{Proofs for Gaussian-Mixture Flow Matching}
\label{app:gmm_fm_proofs}

\subsection{Proof of \Cref{prop:gmm_conditional_mean}}
\label{app:gmm_conditional_mean_proof}

Assume
\begin{equation}
x_1 \sim \sum_{k=1}^K \pi_k\,\mathcal{N}(0,H_k),
\qquad
x_0\sim\mathcal{N}(0,I),
\qquad
x_t=(1-t)x_0+t x_1.
\end{equation}
Let $Z\in\{1,\dots,K\}$ denote the latent mixture component of $x_1$, with
\begin{equation}
\mathbb{P}(Z=k)=\pi_k.
\end{equation}
Then by the law of total expectation,
\begin{align}
g_t^\star(x_t)
&= \mathbb{E}[x_1-x_0\mid x_t] \\
&= \sum_{k=1}^K \mathbb{P}(Z=k\mid x_t,t)\,
   \mathbb{E}[x_1-x_0\mid x_t, Z=k, t].
\end{align}
Define
\begin{equation}
w_k(x_t,t):=\mathbb{P}(Z=k\mid x_t,t).
\end{equation}
Conditioned on $Z=k$, the pair $(x_0,x_1)$ is Gaussian with
\begin{equation}
x_0\sim\mathcal{N}(0,I),
\qquad
x_1\sim\mathcal{N}(0,H_k),
\qquad
x_t=(1-t)x_0+t x_1.
\end{equation}
Therefore, by the single-Gaussian result already proved in \Cref{app:gaussian_exact_target_proof},
\begin{equation}
\mathbb{E}[x_1-x_0\mid x_t, Z=k, t]
=
A_k^\star(t)x_t,
\end{equation}
where
\begin{equation}
A_k^\star(t)
=
\bigl(tH_k-(1-t)I\bigr)\bigl((1-t)^2I+t^2H_k\bigr)^{-1}.
\end{equation}
Substituting into the law-of-total-expectation formula yields
\begin{equation}
g_t^\star(x_t)
=
\sum_{k=1}^K w_k(x_t,t)\,A_k^\star(t)\,x_t,
\end{equation}
which proves \Cref{prop:gmm_conditional_mean}.




\subsection{Proof of \Cref{thm:gmm_bottleneck}}
\label{app:gmm_bottleneck_proof}

Under \Cref{ass:gmm_perfect_gating}, the mixture responsibilities are known exactly, so the Gaussian-mixture FM regression problem decomposes into independent componentwise Gaussian regression subproblems. For component $k$, the intermediate covariance is
\begin{equation}
\Sigma_{t,k}=(1-t)^2I+t^2H_k.
\end{equation}
If $\lambda_{k,i}$ is the $i$th eigenvalue of $H_k$, then the corresponding eigenvalues of $\Sigma_{t,k}$ are
\begin{equation}
\sigma_{k,i}(t)=(1-t)^2+t^2\lambda_{k,i}.
\end{equation}

For each component $k$, the same argument as in the Gaussian case applies. In particular, full-batch GD applied to the componentwise quadratic loss has error recursion
\begin{equation}
E_{k,m+1}
=
E_{k,m}(I-2\eta\Sigma_{t,k}),
\end{equation}
so along eigendirection $i$,
\begin{equation}
e_{k,i,m+1}
=
(1-2\eta\sigma_{k,i}(t))e_{k,i,m}.
\end{equation}
Thus directions with smaller $\sigma_{k,i}(t)$ contract more slowly.

Similarly, for single-sample SGD, the conditional mean error obeys
\begin{equation}
\mathbb{E}[E_{k,m+1}\mid E_{k,m}]
=
E_{k,m}(I-2\eta\Sigma_{t,k}),
\end{equation}
and therefore each scalar mode satisfies
\begin{equation}
\mathbb{E}[e_{k,i,m+1}\mid E_{k,m}]
=
(1-2\eta\sigma_{k,i}(t))e_{k,i,m}.
\end{equation}
Thus the mean SGD dynamics inherit the same componentwise mode-dependent contraction as full-batch GD.

Consequently, if one seeks comparable convergence across all mixture components and eigendirections, the overall rate is limited by the slowest componentwise mode. This mode is determined by
\begin{equation}
\min_{k,i}\sigma_{k,i}(t).
\end{equation}
Therefore, even if most mixture components are well-conditioned, a single component with a small eigenvalue can dominate the global optimization time scale. This proves \Cref{thm:gmm_bottleneck}.

\subsection{Proof of \Cref{cor:gmm_multimodal_takeaway}}
\label{app:gmm_multimodal_takeaway_proof}

The corollary follows immediately from \Cref{thm:gmm_bottleneck}. Since the optimization rate is controlled by the smallest eigenvalue across all components, multimodality does not average away the bottleneck; instead, the hardest component determines the overall difficulty.

\section{Proofs for Preconditioning in Flow Matching}
\label{app:preconditioning_fm_proofs}

\subsection{Proof of \Cref{thm:fm_preconditioning}}
\label{app:fm_preconditioning_proof}

\begin{equation}
x_0\sim\mathcal{N}(0,I),
\qquad
x_1\sim\mathcal{N}(0,H),
\qquad
x_t=(1-t)x_0+t x_1.
\end{equation}
Let $\mathcal{P}\in\mathbb{R}^{d\times d}$ be invertible and define
\begin{equation}
\tilde{x}_1=\mathcal{P}x_1,
\qquad
\tilde{x}_t=(1-t)x_0+t\tilde{x}_1.
\end{equation}

\paragraph{Step 1: transformed covariance.}
Since $x_0$ and $x_1$ are independent and zero mean,
\begin{align}
\tilde{\Sigma}_t
&:= \mathbb{E}[\tilde{x}_t\tilde{x}_t^\top] \\
&= \mathbb{E}\!\left[\bigl((1-t)x_0+t\mathcal{P}x_1\bigr)\bigl((1-t)x_0+t\mathcal{P}x_1\bigr)^\top\right] \\
&= (1-t)^2 \mathbb{E}[x_0x_0^\top]
   + t^2 \mathbb{E}[\mathcal{P}x_1x_1^\top \mathcal{P}^\top] \\
&\quad + t(1-t)\mathbb{E}[x_0x_1^\top]\mathcal{P}^\top
   + t(1-t)\mathcal{P}\mathbb{E}[x_1x_0^\top] \\
&= (1-t)^2I+t^2\mathcal{P}H\mathcal{P}^\top.
\end{align}

\paragraph{Step 2: exact whitening.}
If $\mathcal{P}=H^{-1/2}$, then
\begin{equation}
\mathcal{P}H\mathcal{P}^\top = I,
\end{equation}
and therefore
\begin{equation}
\tilde{\Sigma}_t = (1-t)^2I+t^2I = \bigl((1-t)^2+t^2\bigr)I.
\end{equation}
Hence every eigenvalue of $\tilde{\Sigma}_t$ is equal to $(1-t)^2+t^2$, so
\begin{equation}
\kappa(\tilde{\Sigma}_t)=1
\qquad \text{for all } t\in[0,1].
\end{equation}

\paragraph{Step 3: approximate isotropization.}
Assume
\begin{equation}
mI \preceq \mathcal{P}H\mathcal{P}^\top \preceq MI
\end{equation}
for some $0<m\le M$. Then
\begin{equation}
(1-t)^2I+t^2 mI
\preceq
\tilde{\Sigma}_t
\preceq
(1-t)^2I+t^2 MI.
\end{equation}
Therefore every eigenvalue of $\tilde{\Sigma}_t$ satisfies
\begin{equation}
(1-t)^2+t^2m
\le
\lambda_i(\tilde{\Sigma}_t)
\le
(1-t)^2+t^2M,
\end{equation}
and so
\begin{equation}
\kappa(\tilde{\Sigma}_t)
\le
\frac{(1-t)^2+t^2M}{(1-t)^2+t^2m}.
\end{equation}
This proves \Cref{thm:fm_preconditioning}.

\subsection{Additional note on Gaussian-mixture preconditioning}
\label{app:gmm_preconditioning_note}

The same idea extends componentwise to the Gaussian-mixture setting. If one could apply a componentwise whitening map
\begin{equation}
T_k = H_k^{-1/2}
\end{equation}
to each mixture component, then the transformed component covariances would satisfy
\begin{equation}
\tilde{\Sigma}_{t,k} = \bigl((1-t)^2+t^2\bigr)I,
\end{equation}
so all components would become perfectly conditioned and no single mode could dominate optimization. In the main paper, however, Section~4 focuses on the single-Gaussian FM theorem because it is the cleanest exact statement. The practical learned preconditioners in the experiments should be viewed as approximate nonlinear analogues of this isotropization principle.

\numberwithin{theorem}{subsection}

\subsection{FM-specific consequences of the conditioning analysis}
\label{app:fm_specific_implications}

We derive several consequences of the Gaussian FM analysis that are specific to the flow matching
objective and its time-indexed regression geometry.

\begin{corollary}[Small FM loss can hide directional velocity error]
\label{cor:small_loss_hides_velocity_error}
Under the setting of \Cref{lem:gaussian_excess_risk}, let $H=U\Lambda U^\top$, and let $u_i$
denote the $i^{\mathrm{th}}$ column of $U$. Since $\Sigma_t$ and $H$ share eigenvectors, with
eigenvalues $\sigma_i(t)=(1-t)^2+t^2\lambda_i$, the fixed-time excess risk decomposes as
\begin{equation}
\mathcal{L}_t(A)-\mathcal{L}_t(A^\star(t))
=
\sum_{i=1}^d
\sigma_i(t)
\bigl\|(A-A^\star(t))u_i\bigr\|^2 .
\label{eq:app_directional_excess_risk}
\end{equation}
Consequently, if
$\mathcal{L}_t(A)-\mathcal{L}_t(A^\star(t))\le \varepsilon$, then for every eigendirection $u_i$,
\begin{equation}
\bigl\|(A-A^\star(t))u_i\bigr\|^2
\le
\frac{\varepsilon}{\sigma_i(t)} .
\label{eq:app_directional_velocity_bound}
\end{equation}
Thus, in low-variance directions where $\sigma_i(t)$ is small, a small FM excess loss can still
permit large velocity-field error.
\end{corollary}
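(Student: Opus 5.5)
The plan is to start from the exact excess-risk identity in \Cref{lem:gaussian_excess_risk}, namely \Cref{eq:gaussian_excess_risk_only}, and rewrite its trace in the eigenbasis of $\Sigma_t$. Write $D:=A-A^\star(t)$. Because $H=U\Lambda U^\top$ with $U$ orthogonal, we have $\Sigma_t=(1-t)^2I+t^2H=U\bigl((1-t)^2I+t^2\Lambda\bigr)U^\top$. So $\Sigma_t=U\operatorname{diag}(\sigma_1(t),\dots,\sigma_d(t))U^\top$ with $\sigma_i(t)=(1-t)^2+t^2\lambda_i$, which is the shared-eigenvector fact already used in \Cref{eq:kappa_sigma_t}. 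Expanding $\Sigma_t=\sum_i\sigma_i(t)u_iu_i^\top$ and using linearity and cyclicity of the trace gives
\[
\operatorname{Tr}\bigl(D\Sigma_tD^\top\bigr)=\sum_{i=1}^d\sigma_i(t)\operatorname{Tr}\bigl(Du_iu_i^\top D^\top\bigr)=\sum_{i=1}^d\sigma_i(t)\,u_i^\top D^\top Du_i=\sum_{i=1}^d\sigma_i(t)\,\|Du_i\|^2 .
\]
This is exactly \Cref{eq:app_directional_excess_risk}.

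For the bound \Cref{eq:app_directional_velocity_bound}, note that every summand is nonnegative, since $\sigma_i(t)>0$ because $\lambda_i>0$ and $(1-t)^2+t^2>0$ for all $t\in[0,1]$. Hence each single term is at most the whole sum: $\sigma_i(t)\|Du_i\|^2\le\mathcal{L}_t(A)-\mathcal{L}_t(A^\star(t))\le\varepsilon$. Dividing by $\sigma_i(t)>0$ yields \Cref{eq:app_directional_velocity_bound}.

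The closing qualitative claim needs a tightness witness, not just the upper bound. I would exhibit one by choosing, for a fixed index $j$, $A=A^\star(t)+c\,v u_j^\top$ with $\|v\|=1$ and $c^2=\varepsilon/\sigma_j(t)$. By orthonormality of the $u_i$, this $A$ has excess risk exactly $\varepsilon$, while $\|(A-A^\star(t))u_j\|^2=\varepsilon/\sigma_j(t)$. That quantity blows up as $\sigma_j(t)\to0$, for example with $\lambda_j$ small and $t$ near $1$. So the bound is attained and a small FM loss can indeed coexist with large directional velocity error.

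There is no real obstacle; the argument is routine linear algebra. The only point needing care is justifying the positivity of the $\sigma_i(t)$, so that division is valid and the terms are nonnegative, and stating the $t=1$ case explicitly, where $\sigma_i(1)=\lambda_i$.
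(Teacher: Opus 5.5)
Your proposal is correct and matches the paper's proof step for step. You write $\Sigma_t=\sum_i\sigma_i(t)u_iu_i^\top$, reduce each trace term to $\|(A-A^\star(t))u_i\|^2$, and use nonnegativity of the summands with $\sigma_i(t)>0$ to bound each term by the total excess risk. Your witness $A=A^\star(t)+c\,vu_j^\top$ with $c^2=\varepsilon/\sigma_j(t)$ shows the bound is attained, which the paper omits, and it is what makes the closing qualitative sentence a rigorous claim.
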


\begin{proof}
By \Cref{eq:gaussian_excess_risk_only}, the fixed-time excess risk is
\begin{equation}
\mathcal{L}_t(A)-\mathcal{L}_t(A^\star(t))
=
\operatorname{Tr}\!\Big((A-A^\star(t))\Sigma_t(A-A^\star(t))^\top\Big).
\end{equation}
Let
\begin{equation}
\Delta_t := A-A^\star(t).
\end{equation}
Since $H=U\Lambda U^\top$ and $\Sigma_t=(1-t)^2I+t^2H$, the matrices $H$ and $\Sigma_t$ share the same eigenvectors. Hence
\begin{equation}
\Sigma_t
=
U\operatorname{diag}(\sigma_1(t),\dots,\sigma_d(t))U^\top,
\qquad
\sigma_i(t)=(1-t)^2+t^2\lambda_i.
\end{equation}
Substituting this eigendecomposition into the excess-risk expression gives
\begin{equation}
\operatorname{Tr}\!\big(\Delta_t\Sigma_t\Delta_t^\top\big)
=
\operatorname{Tr}\!\Big(
\Delta_t U
\operatorname{diag}(\sigma_1(t),\dots,\sigma_d(t))
U^\top \Delta_t^\top
\Big).
\end{equation}
Let $u_i$ denote the $i^{\mathrm{th}}$ column of $U$. Since
\begin{equation}
U\operatorname{diag}(\sigma_1(t),\dots,\sigma_d(t))U^\top
=
\sum_{i=1}^d \sigma_i(t) u_i u_i^\top,
\end{equation}
we obtain
\begin{equation}
\operatorname{Tr}\!\big(\Delta_t\Sigma_t\Delta_t^\top\big)
=
\sum_{i=1}^d
\sigma_i(t)
\operatorname{Tr}\!\big(\Delta_t u_i u_i^\top \Delta_t^\top\big).
\end{equation}
For each $i$,
\begin{equation}
\operatorname{Tr}\!\big(\Delta_t u_i u_i^\top \Delta_t^\top\big)
=
\operatorname{Tr}\!\big((\Delta_t u_i)(\Delta_t u_i)^\top\big)
=
\|\Delta_t u_i\|^2.
\end{equation}
Therefore,
\begin{equation}
\mathcal{L}_t(A)-\mathcal{L}_t(A^\star(t))
=
\sum_{i=1}^d
\sigma_i(t)
\bigl\|(A-A^\star(t))u_i\bigr\|^2,
\end{equation}
which proves \Cref{eq:app_directional_excess_risk}.

Now suppose
\begin{equation}
\mathcal{L}_t(A)-\mathcal{L}_t(A^\star(t))\le \varepsilon.
\end{equation}
Since $\Sigma_t$ is positive definite, each eigenvalue satisfies $\sigma_i(t)>0$, and every term
\begin{equation}
\sigma_i(t)\bigl\|(A-A^\star(t))u_i\bigr\|^2
\end{equation}
in the sum is nonnegative. Hence no single term can exceed the whole sum. Therefore, for every $i$,
\begin{equation}
\sigma_i(t)\bigl\|(A-A^\star(t))u_i\bigr\|^2
\le
\sum_{j=1}^d
\sigma_j(t)
\bigl\|(A-A^\star(t))u_j\bigr\|^2
=
\mathcal{L}_t(A)-\mathcal{L}_t(A^\star(t))
\le
\varepsilon.
\end{equation}
Dividing by $\sigma_i(t)>0$ gives
\begin{equation}
\bigl\|(A-A^\star(t))u_i\bigr\|^2
\le
\frac{\varepsilon}{\sigma_i(t)}.
\end{equation}
This proves \Cref{eq:app_directional_velocity_bound}.
\end{proof}

\begin{corollary}[Scalar time reweighting does not remove fixed-time directional stiffness]
\label{cor:time_reweighting_no_directional_fix}
Fix $t\in[0,1]$ and let $\alpha(t)>0$. Consider the scalar-reweighted fixed-time objective
\begin{equation}
\mathcal{L}_{t,\alpha}(A)
=
\alpha(t)\,
\mathbb{E}\bigl[\|Ax_t-(x_1-x_0)\|^2\bigr].
\label{eq:weighted_fixed_time_objective}
\end{equation}
Then the curvature of this objective with respect to $A$ is scaled by $\alpha(t)$, but its directional
condition number is unchanged. In particular, the convergence of full-batch gradient descent remains
controlled by $\kappa(\Sigma_t)$ at fixed $t$.
\end{corollary}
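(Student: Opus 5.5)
The plan is to reduce the reweighted objective to the unweighted one and then reuse \Cref{lem:gaussian_excess_risk} and \Cref{thm:gd_gaussian} directly.

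First, since $\alpha(t)>0$ is a constant at fixed $t$, multiplying \Cref{eq:excess_risk_trace} by $\alpha(t)$ gives
\[
\mathcal{L}_{t,\alpha}(A)=\alpha(t)\,\mathcal{L}_t(A^\star(t))+\alpha(t)\operatorname{Tr}\!\bigl((A-A^\star(t))\Sigma_t(A-A^\star(t))^\top\bigr).
\]
So the minimizer is still $A^\star(t)$. To identify the curvature, I would vectorize $A$. Using $\operatorname{Tr}(\Delta\Sigma_t\Delta^\top)=\operatorname{vec}(\Delta)^\top(\Sigma_t\otimes I)\operatorname{vec}(\Delta)$, the Hessian is $2\alpha(t)(\Sigma_t\otimes I)$. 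Its eigenvalues are $2\alpha(t)\sigma_i(t)$, each with multiplicity $d$, so the condition number is $\kappa(\Sigma_t\otimes I)=\kappa(\Sigma_t)$, independent of $\alpha(t)$. This proves the claim that curvature is scaled but the directional condition number is unchanged.

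Second, for gradient descent I would repeat the computation in the proof of \Cref{thm:gd_gaussian}. The gradient is $2\alpha(t)(A\Sigma_t-C_t)$, which gives the error recursion $E_{k+1}=E_k(I-2\eta\alpha(t)\Sigma_t)$. This is exactly the recursion of \Cref{thm:gd_gaussian} with effective step size $\eta'=\eta\alpha(t)$. Hence the scalar reweighting is equivalent to rescaling the step size. The per-mode contraction factors are $|1-2\eta'\sigma_i(t)|$. The optimal choice is $\eta'^\star=1/(\sigma_{\max}(t)+\sigma_{\min}(t))$, and it yields the same rate $(\kappa(\Sigma_t)-1)/(\kappa(\Sigma_t)+1)$ and the same iteration count $\mathcal{O}(\kappa(\Sigma_t)\log(1/\varepsilon))$, for every $\alpha(t)$.

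The main obstacle is making ``directional condition number unchanged'' precise beyond the observation that the step size is rescaled. To handle it, I would state explicitly that, minimizing over all $\eta>0$, the best achievable contraction factor is invariant under $\alpha(t)$. The reason is that the map $\eta\mapsto\eta\alpha(t)$ is a bijection of $(0,\infty)$, so the set of attainable contraction factors is the same with or without the weight. I would also note that no scalar weight can alter the ratio $\sigma_{\max}(t)/\sigma_{\min}(t)$. Only a matrix-valued transformation, such as the one in \Cref{thm:fm_preconditioning}, can change this ratio.
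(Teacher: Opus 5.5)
Your proposal is correct and follows essentially the same route as the paper: the Hessian is $2\alpha(t)$ times the operator $\Delta\mapsto\Delta\Sigma_t$, so its condition number is $\kappa(\Sigma_t)$, and your Kronecker form $2\alpha(t)(\Sigma_t\otimes I)$ is just the vectorized version of the paper's operator description. The recursion $E_{k+1}=E_k(I-2\eta\alpha(t)\Sigma_t)$ then shows that the weight only rescales the step size, and your bijection remark on $\eta\mapsto\eta\alpha(t)$ is a slightly sharper statement of the paper's conclusion that the optimal-step iteration complexity stays $\mathcal{O}(\kappa(\Sigma_t)\log(1/\varepsilon))$.
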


\begin{proof}
Fix $t\in[0,1]$. Since $\alpha(t)>0$ is a scalar that does not depend on $A$, the reweighted
objective is simply
\begin{equation}
\mathcal{L}_{t,\alpha}(A)
=
\alpha(t)\mathcal{L}_t(A),
\qquad
\mathcal{L}_t(A)
=
\mathbb{E}\bigl[\|Ax_t-(x_1-x_0)\|^2\bigr].
\end{equation}
From the Gaussian fixed-time regression problem, the gradient of the unweighted objective is
\begin{equation}
\nabla_A \mathcal{L}_t(A)
=
2(A\Sigma_t-C_t),
\end{equation}
where
\begin{equation}
\Sigma_t=\mathbb{E}[x_tx_t^\top],
\qquad
C_t=\operatorname{Cov}(x_1-x_0,x_t).
\end{equation}
Therefore the gradient of the reweighted objective is
\begin{equation}
\nabla_A \mathcal{L}_{t,\alpha}(A)
=
\alpha(t)\nabla_A\mathcal{L}_t(A)
=
2\alpha(t)(A\Sigma_t-C_t).
\end{equation}

To see the effect on curvature, consider a perturbation $\Delta\in\mathbb{R}^{d\times d}$ around
$A$. The gradient changes as
\begin{equation}
\nabla_A \mathcal{L}_{t,\alpha}(A+\Delta)
-
\nabla_A \mathcal{L}_{t,\alpha}(A)
=
2\alpha(t)\Delta\Sigma_t.
\end{equation}
Thus the Hessian is the linear operator
\begin{equation}
\Delta \mapsto 2\alpha(t)\Delta\Sigma_t.
\end{equation}
Since $\Sigma_t$ is symmetric positive definite, let
\begin{equation}
\Sigma_t=U\operatorname{diag}(\sigma_1(t),\dots,\sigma_d(t))U^\top.
\end{equation}
In the eigenbasis of $\Sigma_t$, the curvature in direction $u_i$ is scaled by
\begin{equation}
2\alpha(t)\sigma_i(t).
\end{equation}
Hence the largest and smallest directional curvatures are
\begin{equation}
2\alpha(t)\sigma_{\max}(t)
\qquad\text{and}\qquad
2\alpha(t)\sigma_{\min}(t),
\end{equation}
respectively. Their ratio is therefore
\begin{equation}
\frac{2\alpha(t)\sigma_{\max}(t)}
     {2\alpha(t)\sigma_{\min}(t)}
=
\frac{\sigma_{\max}(t)}{\sigma_{\min}(t)}
=
\kappa(\Sigma_t).
\end{equation}
Thus scalar time reweighting changes the overall scale of the curvature, but not the fixed-time
directional condition number.

Equivalently, full-batch gradient descent on $\mathcal{L}_{t,\alpha}$ gives
\begin{equation}
A_{k+1}
=
A_k-2\eta\alpha(t)(A_k\Sigma_t-C_t).
\end{equation}
Let $A^\star(t)$ be the minimizer, so that $A^\star(t)\Sigma_t=C_t$, and define
\begin{equation}
E_k:=A_k-A^\star(t).
\end{equation}
Then
\begin{equation}
E_{k+1}
=
E_k\bigl(I-2\eta\alpha(t)\Sigma_t\bigr).
\end{equation}
Projecting onto an eigendirection $u_i$ of $\Sigma_t$ gives the mode-wise contraction factor
\begin{equation}
1-2\eta\alpha(t)\sigma_i(t).
\end{equation}
Thus $\alpha(t)$ rescales the effective step size from $\eta$ to $\eta\alpha(t)$, but the relative
separation between fast and slow directions is still governed by the spectrum of $\Sigma_t$. With
the optimal fixed step size, the iteration complexity therefore remains controlled by
$\kappa(\Sigma_t)$ at fixed $t$.
\end{proof}

\section{Additional Experiments and Results}
\label{appsec:additional_exp_and_results}
\subsection{Visualizing preconditioned transport for the Swiss roll data using a normalizing flow and a flow-based preconditioner} 
\label{subsec:point_clouds_2d}
\begin{wrapfigure}[13]{r}{0.40\textwidth}
    \vspace{-7pt}
    \centering
    \setlength{\tabcolsep}{2pt} 
    \begin{tabular}{@{}cc@{}}
        \includegraphics[width=0.18\textwidth]{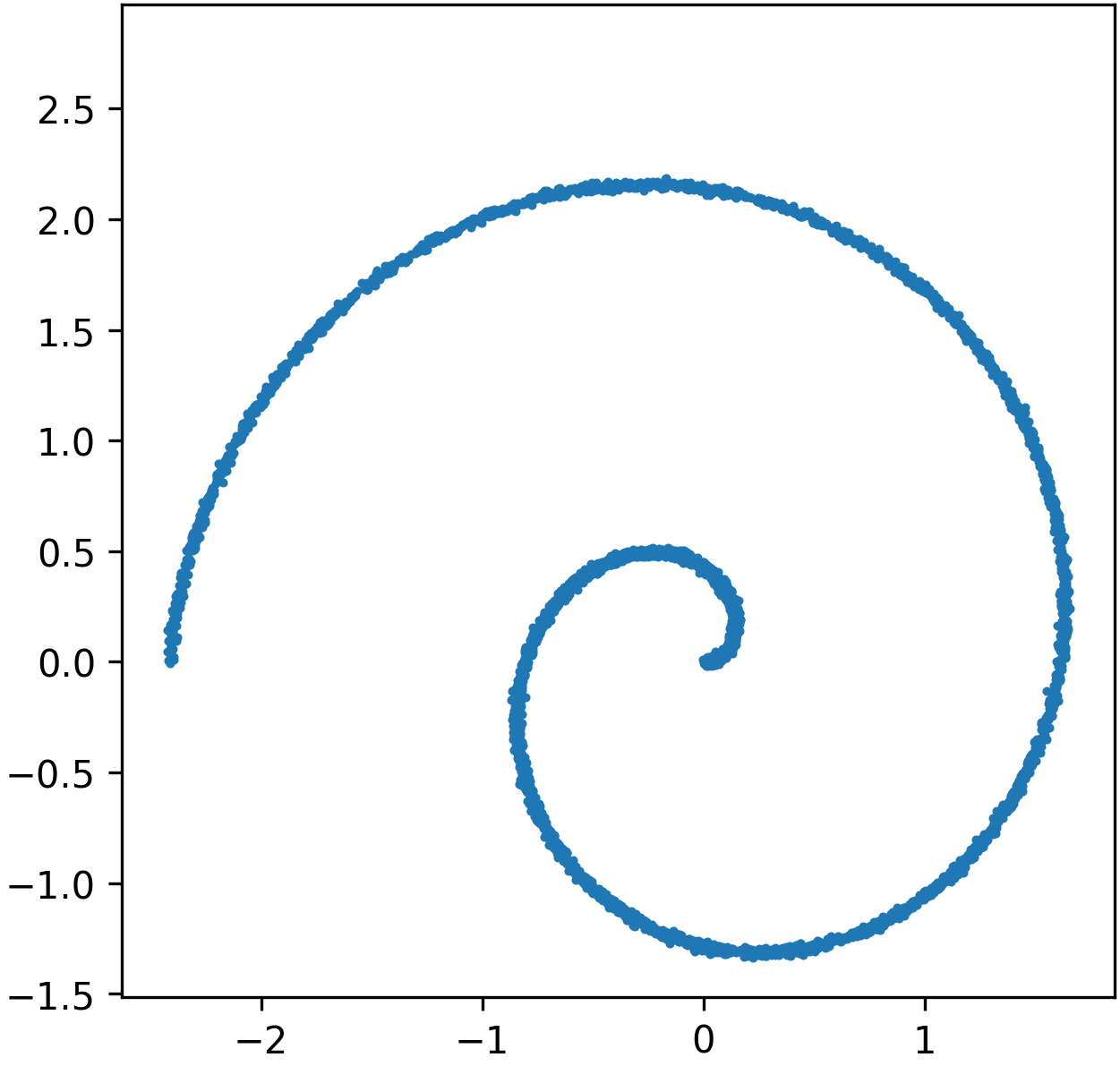} &
        \includegraphics[width=0.18\textwidth]{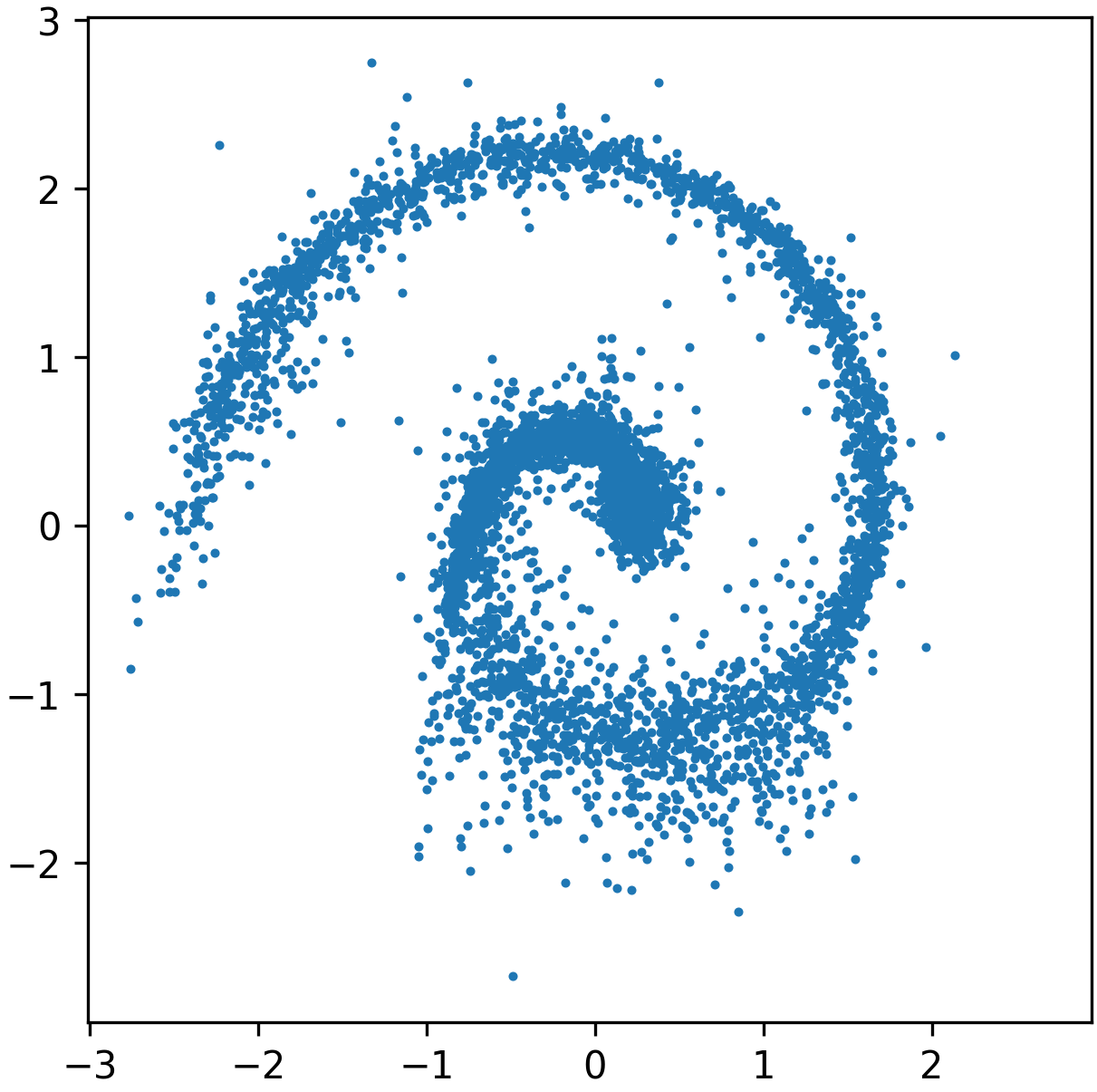} \\
        {\small (a) $x_1 \sim p_1$} &
        {\small (b) $\hat{x}_1 = \mathrm{Flow}(z)$}
    \end{tabular}
    \caption{\textbf{Flow matching without preconditioning.} \textbf{(a)} Ground-truth Swiss roll samples. \textbf{(b)} Generated samples obtained by transporting Gaussian source $z$ through a learned flow.}
    \label{fig:flow_matching}
\end{wrapfigure}

To build intuition and isolate algorithmic behavior of the proposed preconditioning strategy, we begin with 2D experiments since low-dimensional settings allow for direct visualization of densities, transport maps, and trajectories.

We study the Swiss roll, a curved, anisotropic, non-Gaussian manifold with severe scale separation, as a testbed for evaluating preconditioning methods. Even with small isotropic noise, its geometry is challenging: elongated along the manifold and tightly concentrated transversely. Preconditioning here serves as geometric isotropization, aiming to reduce anisotropy and simplify the structure to facilitate easier transport toward a Gaussian. As a baseline, we use standard flow matching to map a Gaussian to the Swiss roll, the results for which are presented in \Cref{fig:flow_matching}. We then use both a normalizing flow and a flow matching-based model to precondition the problem.

\begin{wraptable}[9]{r}{0.5\textwidth}
\vspace{-0.8em}
\centering
\caption{Sliced-Wasserstein distances for the Swiss roll dataset. The top row compares generated samples with ground truth; the bottom row measures how Gaussian the pushed data is. Lower is better.}
\resizebox{0.5\textwidth}{!}{
\begin{tabular}{lccc}
\toprule
Direction & \makecell{No\\preconditioner} & \makecell{NF\\preconditioner} & \makecell{FM\\preconditioner} \\
\midrule
$z \rightarrow x_1$ & $0.11$ & $0.06 $ & $0.07$ \\
$x_1 \rightarrow z$ & $0.81 $ & $0.31$ & $0.34$ \\
\bottomrule
\end{tabular}
}
\label{tab:2dresults}
\vspace{-1em}
\end{wraptable}

\textbf{Normalizing flow preconditioner.} \Cref{fig:precond_comparison} (top row) shows that normalizing flow (NF) significantly improves generation quality over the standard non-preconditioned flow matching (\Cref{fig:flow_matching}). While the pushforward $\tilde{x} = \mathcal{P}_\theta (x_1)$ in \Cref{fig:precond_comparison} (a) isn't perfectly Gaussian either, it is far more isotropic (and closer to Gaussian), allowing the second-stage flow (b) to operate on a simpler geometry. In inference, Gaussian samples are pulled back via the learned flow (c) and the inverse NF (d), recovering the original distribution more faithfully, as compared to \Cref{fig:flow_matching} (b).


\begin{figure}[h]
    \centering
    \setlength{\tabcolsep}{3pt}
    \begin{tabular}{cccc}
        \includegraphics[width=0.23\linewidth]{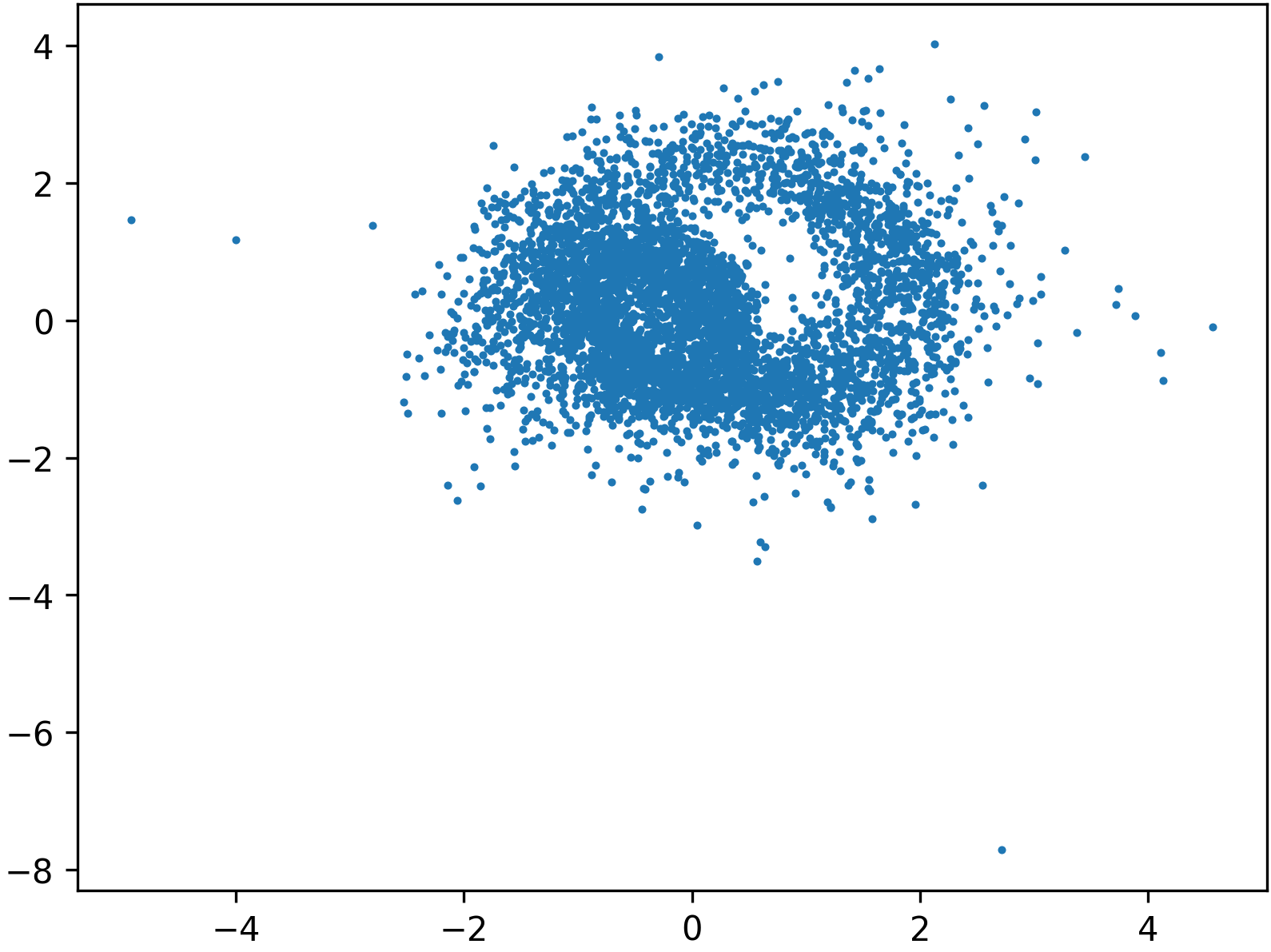} &
        \includegraphics[width=0.23\linewidth]{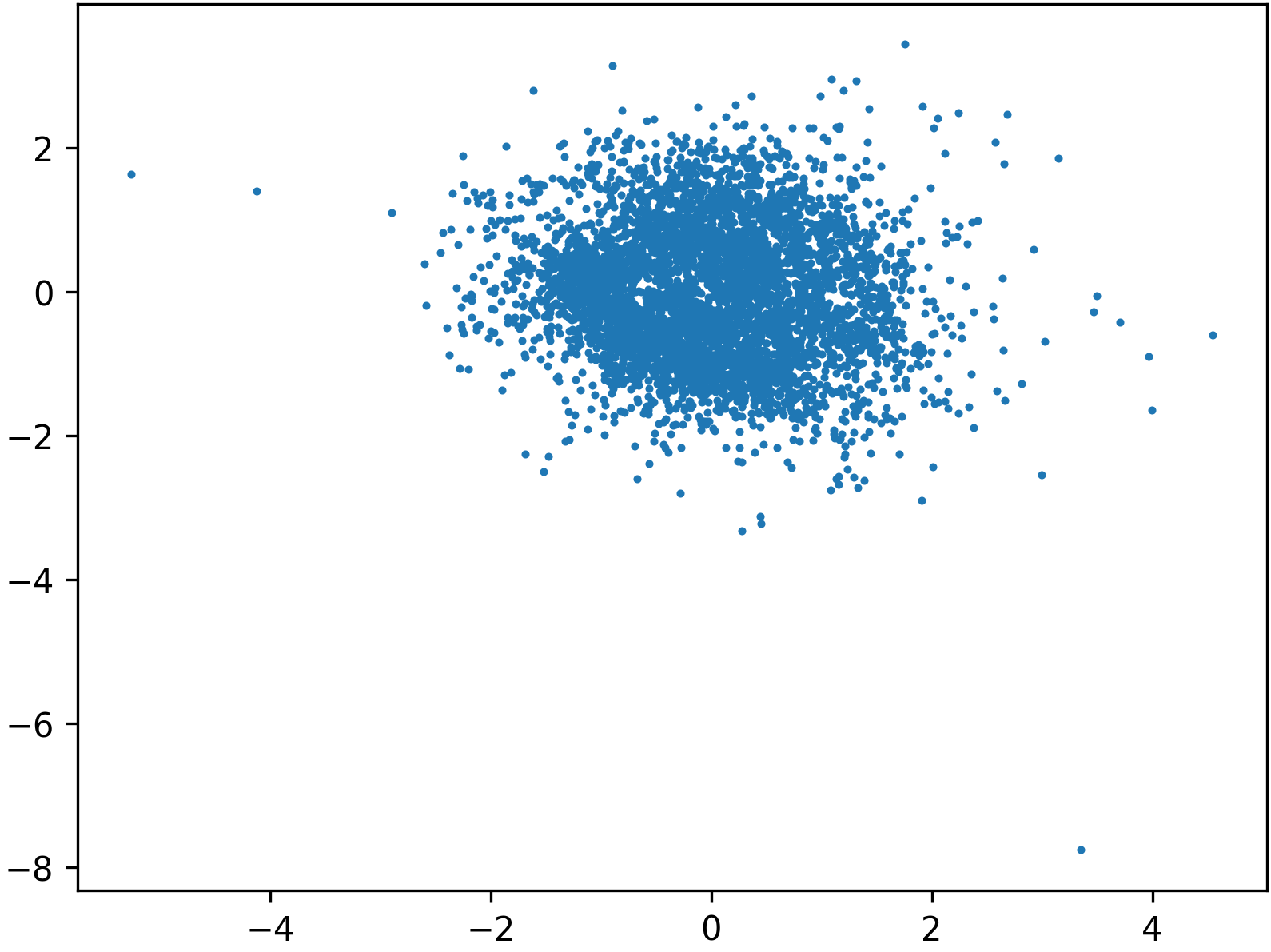} &
        \includegraphics[width=0.23\linewidth]{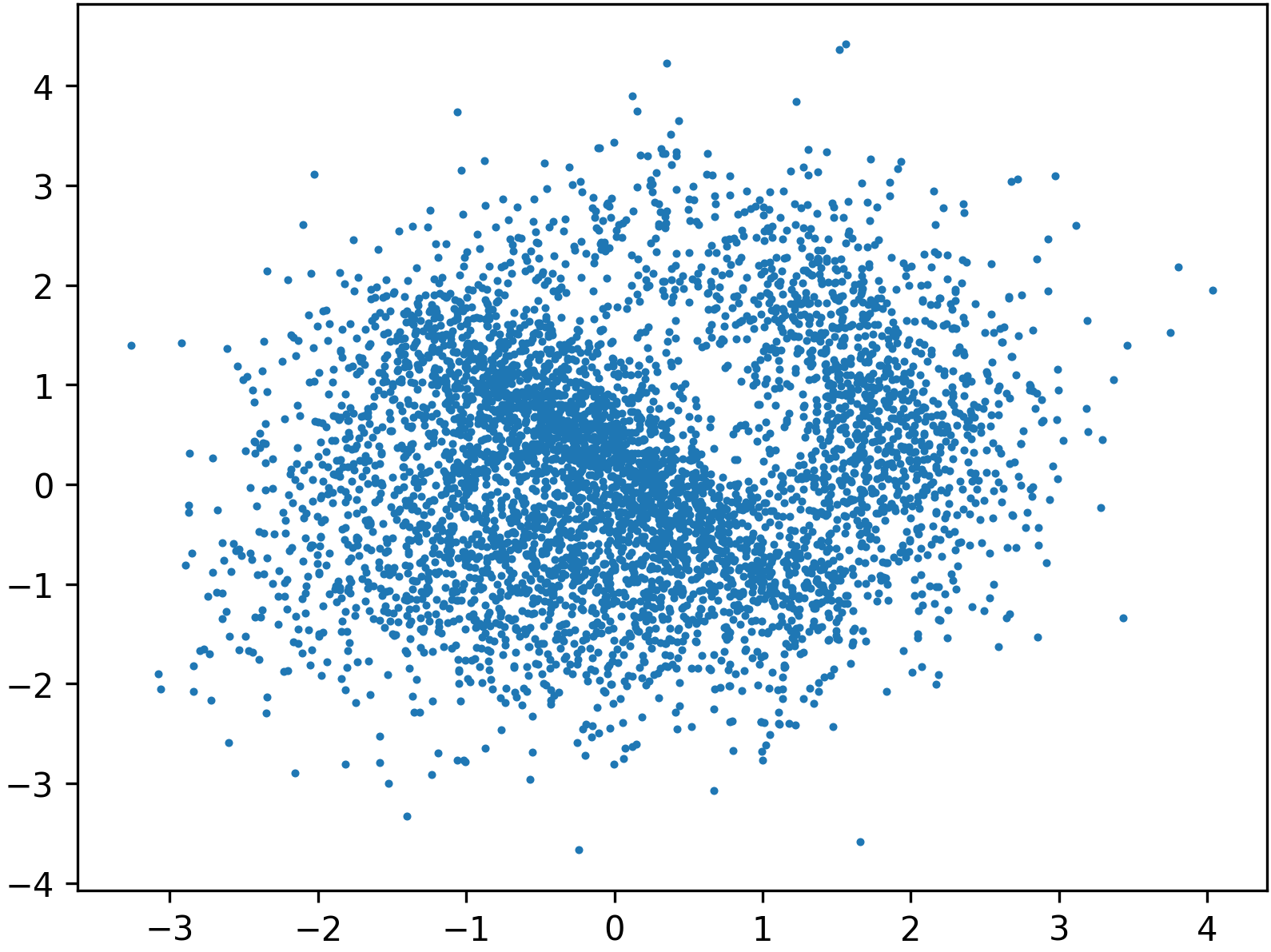} &
        \includegraphics[width=0.23\linewidth]{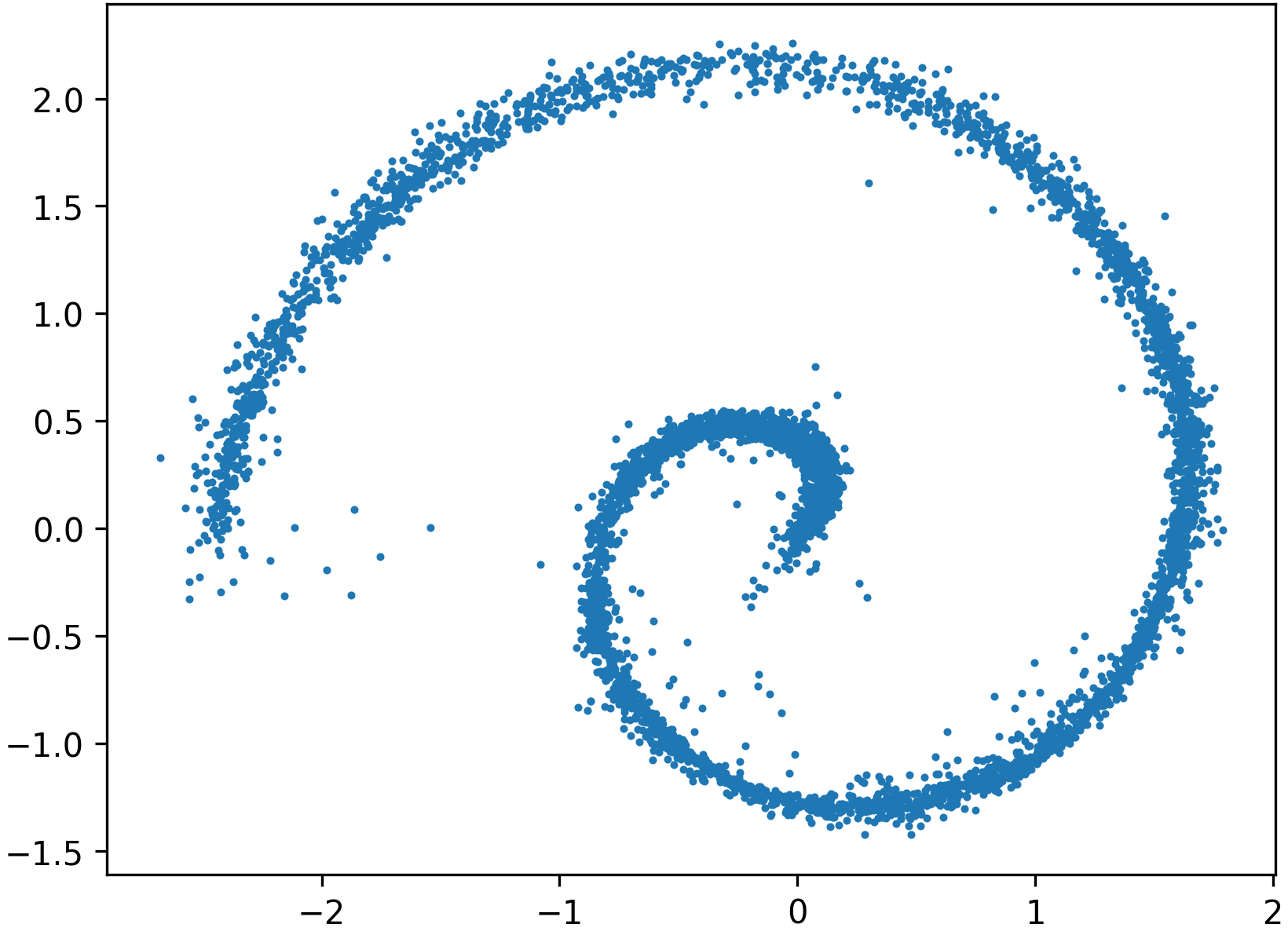} \\
        {\small (a) $\tilde x = {\cal P}_{\theta}(x_1)$} &
        {\small (b) $\hat{\tilde{x}}=\mathrm{Flow}(z)$} &
        {\small (c) $\tilde z=\mathrm{Flow}^{-1}(\tilde{x})$} &
        {\small (d) $\hat x = {\cal P}_{\theta}^{-1}(\hat{\tilde x})$} \\
        \includegraphics[width=0.23\linewidth]{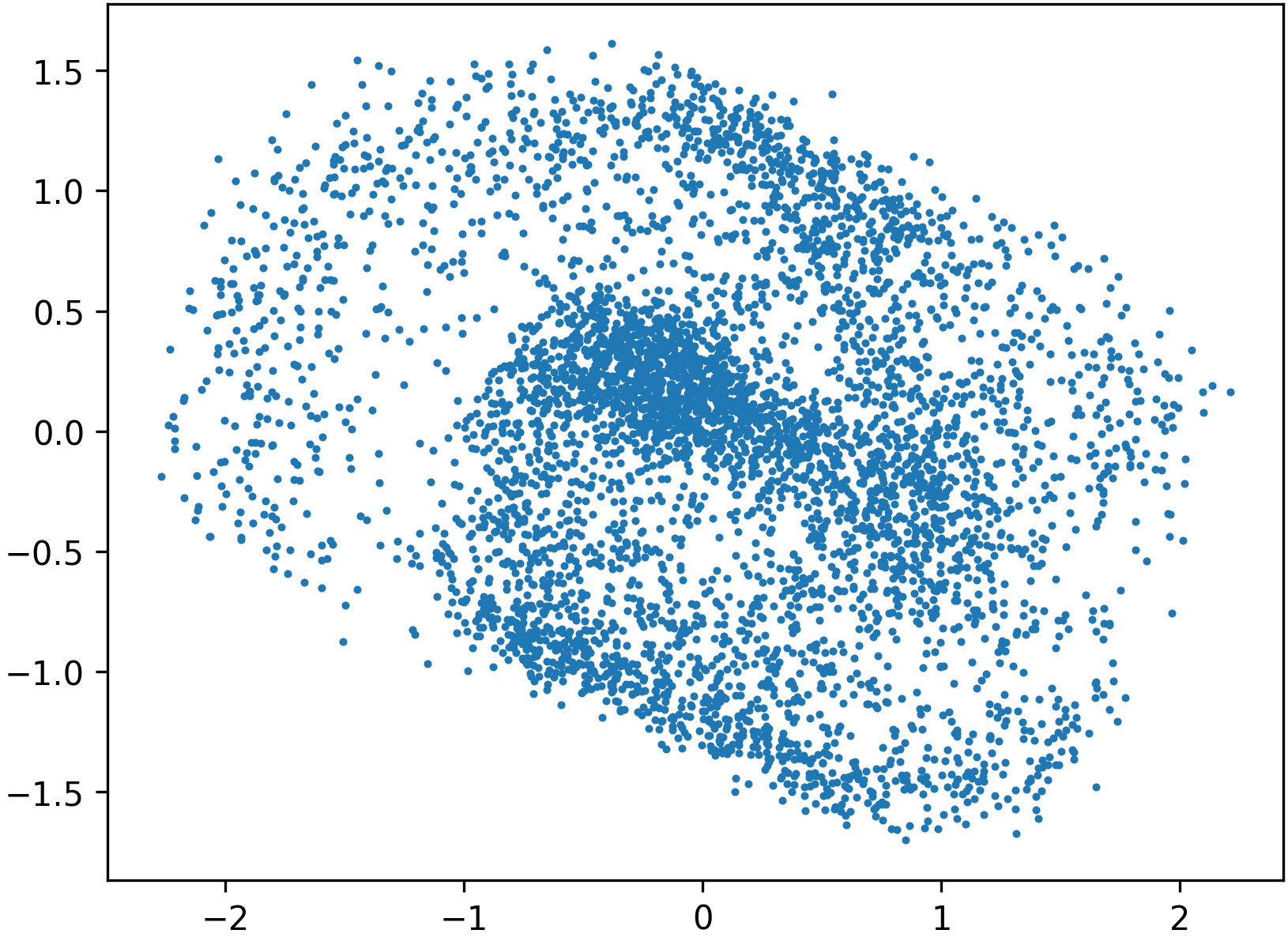} &
        \includegraphics[width=0.23\linewidth]{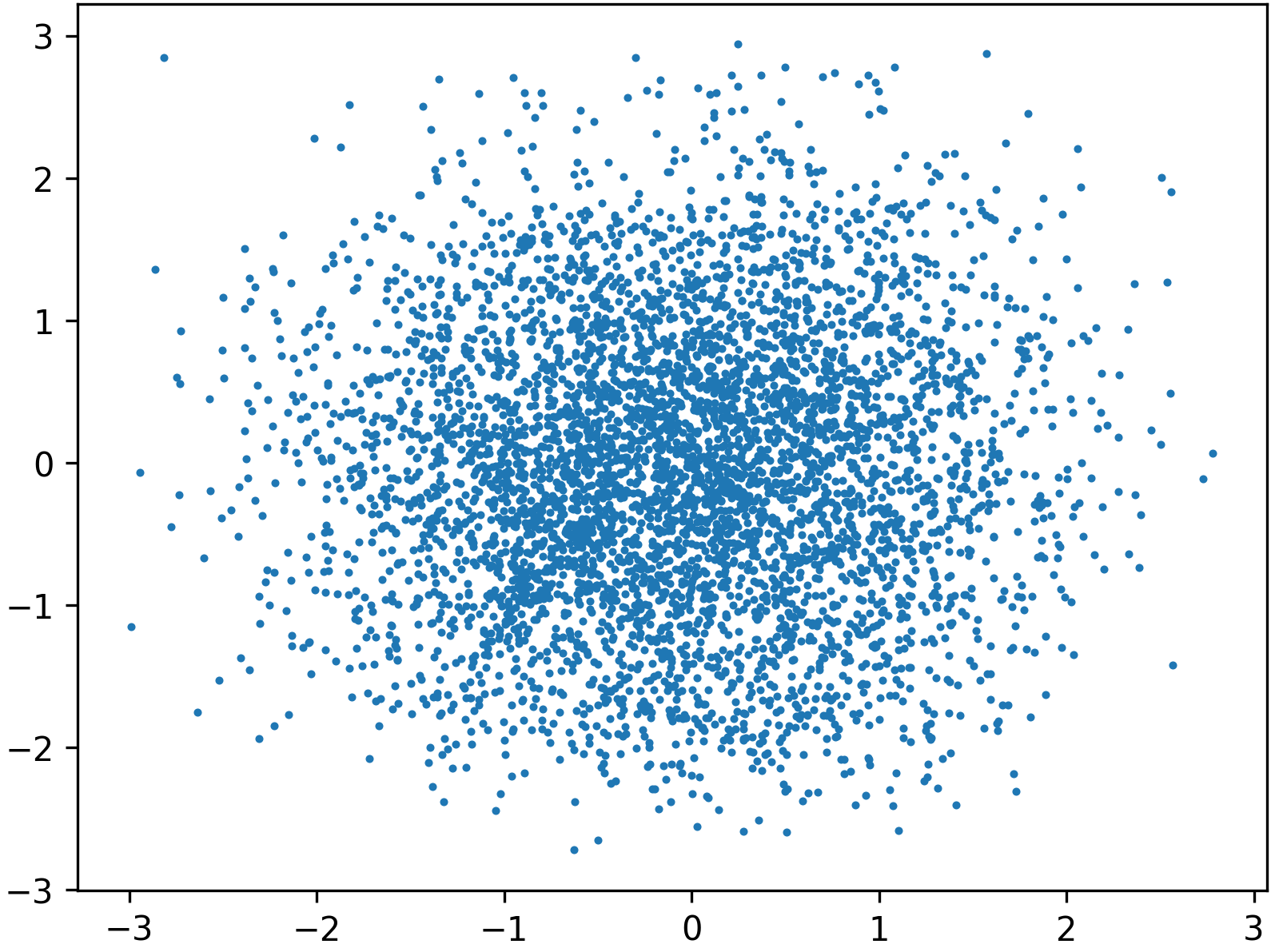} &
        \includegraphics[width=0.23\linewidth]{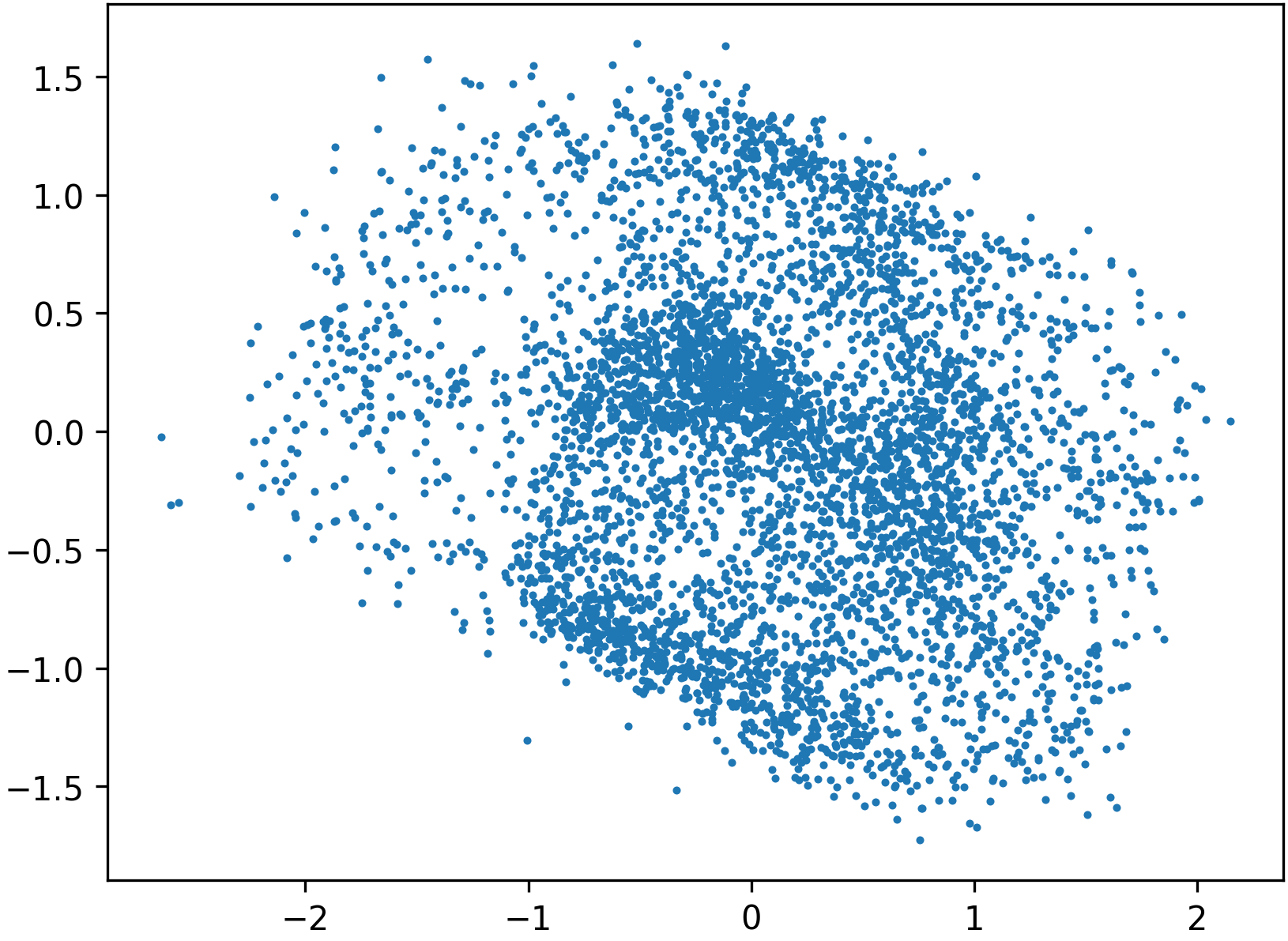} &
        \includegraphics[width=0.23\linewidth]{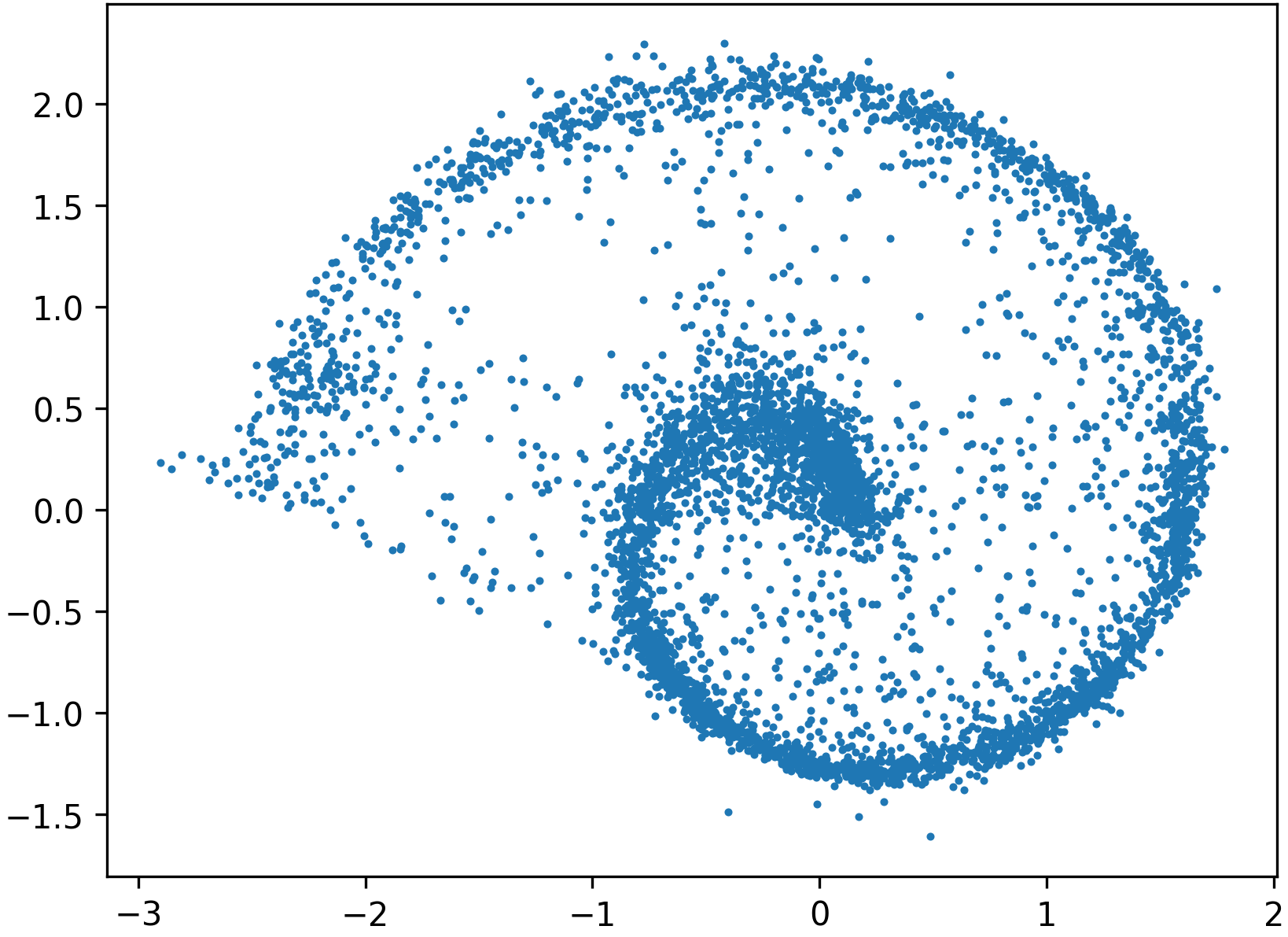} \\
        {\small (e) $\tilde x = \mathrm{Flow}_{\mathrm{lc}}(x_1)$} &
        {\small (f) $\hat{\tilde{x}}=\mathrm{Flow}(z)$} &
        {\small (g) $\tilde z=\mathrm{Flow}^{-1}(\tilde{x})$} &
        {\small (h) $\hat x = \mathrm{Flow}_{\mathrm{lc}}^{-1}(\hat{\tilde x})$}
    \end{tabular}
    \caption{
    \textbf{Flow matching with preconditioning.}
    \textbf{Top row: Normalizing flow preconditioning.}
    \textbf{(a)} The data $x_1$ (Swiss roll) is pushed forward by the normalizing flow preconditioner to $\tilde x$.
    \textbf{(b)} Random Gaussian samples are mapped by the learned downstream flow to the preconditioned data space.
    \textbf{(c)} For visualization, data in the preconditioned space can also be mapped backward through the learned flow toward Gaussian samples.
    \textbf{(d)} The points in the preconditioned space are mapped back to the original space using ${\cal P}_{\theta}^{-1}$.
    \textbf{Bottom row: Low-capacity flow preconditioning.}
    \textbf{(e)} The data $x_1$ is pushed forward by a low-capacity flow preconditioner to $\tilde x$.
    \textbf{(f)} Random Gaussian samples are mapped by the learned downstream flow to the preconditioned data space.
    \textbf{(g)} For visualization, data in the preconditioned space can also be mapped backward through the learned flow toward Gaussian samples.
    \textbf{(h)} Generated samples in the preconditioned space are mapped back using the inverse low-capacity flow.}
    \label{fig:precond_comparison}
\end{figure}

\textbf{Flow matching preconditioner.} In our second experiment, we use a low-capacity flow $\text{Flow}_\text{lc}$ with just 168 parameters as a preconditioner. As shown in \Cref{fig:precond_comparison} (bottom row), even such a lightweight model improves the performance of a downstream high-capacity flow, similar to the normalizing flow case in \Cref{fig:precond_comparison} (top row).

To evaluate the effect of preconditioning quantitatively, we compute the Sliced-Wasserstein distance between point clouds in both directions: $x_1 \rightarrow z$ (data to Gaussian) and $z \rightarrow x_1$ (Gaussian to data). Results in \Cref{tab:2dresults} show that both preconditioners improve upon the baseline, with normalizing flow performing the best. Its distinct transformation is likely to complement flow matching by reducing the effective null space of the standard flow.

\subsection{Controlled Gaussian-Mixture experiments: conditioning and low-eigenvalue recovery}
\label{appsec:gmm_experiments}

\begin{figure}[h]
    \centering
    \includegraphics[width=\linewidth]{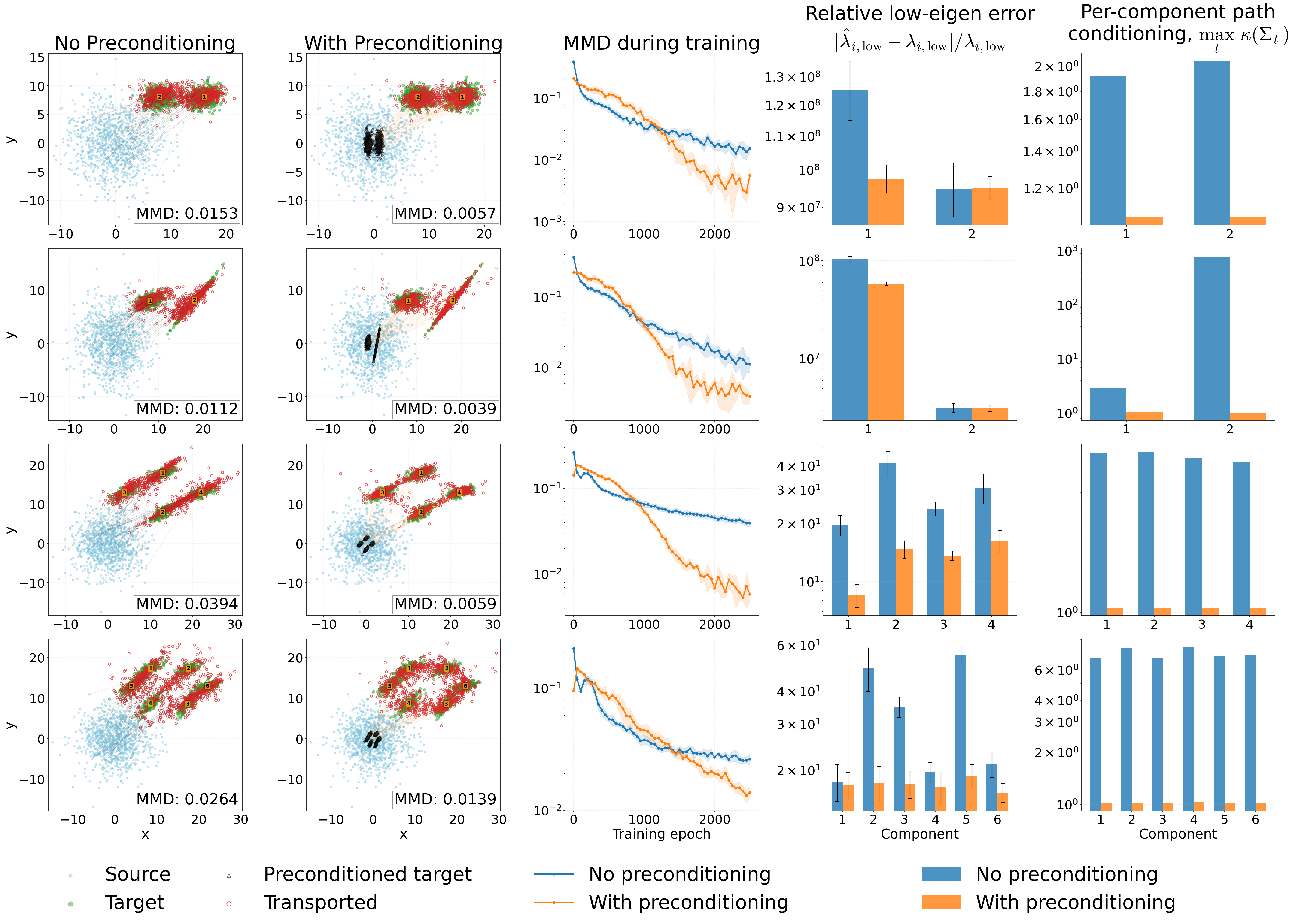}
    \caption{
    \textbf{Preconditioning improves Gaussian-mixture flow matching across controlled 2D configurations.} We compare standard flow matching and preconditioned flow matching on GMM targets with varying numbers of components, anisotropy levels, and transport distances.
    Each row shows one GMM configuration, while columns show: transported samples without preconditioning, transported samples with preconditioning, MMD during training, relative low-eigenvalue recovery error, and maximum per-component path condition number.
    Across all configurations, preconditioning produces better-aligned transported samples, lower final MMD, smaller low-eigenvalue error, and substantially reduced path conditioning.
    These diagnostics support the theoretical prediction that preconditioning improves flow matching optimization by reducing anisotropy along the interpolation path, especially for the hardest mixture components. Plots show the mean over 10 independent runs with different random seeds; error bars / shaded regions denote one standard deviation.}
    \label{fig:gmm_experiments_all_configs}
\end{figure}

Across all example GMM configurations shown in \Cref{fig:gmm_experiments_all_configs}, preconditioning improves the learned transport relative to the unpreconditioned baseline. In these cases, the unpreconditioned model often captures the high-variance direction of a component but fails to accurately recover its low-variance direction, producing visibly distorted samples and larger low-eigenvalue error.

The MMD curves show the same behavior from an optimization perspective (see column 3 in \Cref{fig:gmm_experiments_all_configs}). The unpreconditioned baseline often decreases the discrepancy early in training but then plateaus at a higher value. This matches the theoretical prediction that gradient-based training can make rapid progress along high-variance directions while making much slower progress along suppressed low-variance directions. Preconditioning reduces this imbalance, allowing training to continue improving the distributional fit.

The path-conditioning diagnostic provides the most direct link to the theory (see columns 4--5 in \Cref{fig:gmm_experiments_all_configs}). In the unpreconditioned case, the interpolation path inherits the anisotropy of the hardest mixture components, leading to large values of $\max_t \kappa(\Sigma_{t,k})$. After preconditioning, the corresponding path condition numbers are reduced across components. This explains why the preconditioned model better recovers low-eigenvalue directions and achieves lower final MMD. These results support the claim that preconditioning is not merely changing the visual shape of the generated samples, but is directly improving the geometry of the regression problems encountered during flow matching training.

\subsection{Condition number dynamics for MNIST and ImageNet-1k training}
\label{appsubsec:condition_number_mnist}
To understand how preconditioning impacts the geometry of the transport process, we measure the condition number $\kappa(\Sigma_t)$ of the covariance matrix of samples $x_t$ from the intermediate distributions $p_t$ at various time steps $t \in (0, 1)$ during the flow matching training for the MNIST and ImageNet-1k datasets. A higher condition number indicates stronger anisotropy, which typically hinders optimization during flow matching.

\textbf{Condition number estimation.} During training, for each time step $t$, we draw $8192$ samples from the intermediate distribution by forming
$x_t = s(t)x_1 + c(t)x_0$, where $x_0 \sim \mathcal{N}(0, I)$ and $x_1$ is drawn uniformly
from the training set (normalized latents for the baseline; preconditioner-transformed latents
$\tilde{x}_1 = \mathcal{P}(x_1)$ for the preconditioned variants).
We then compute the empirical covariance
$\hat{\Sigma}_t = \frac{1}{N-1}\sum_{i=1}^N (x_t^{(i)} - \bar{x}_t)(x_t^{(i)} - \bar{x}_t)^\top$
and report the condition number
$\kappa(\hat{\Sigma}_t) = \lambda_{\max}(\hat{\Sigma}_t) / \lambda_{\min}(\hat{\Sigma}_t)$,
regularized by adding $\epsilon I$ with $\epsilon = 10^{-6} \cdot (\operatorname{tr}(\hat{\Sigma}_t)/D)$
to avoid numerical degeneracies.
All covariances are computed on flattened latent vectors.


\Cref{fig:kappa_vs_t_compare_datasets} shows that the intermediate FM distributions become increasingly ill-conditioned as $t \to 1$ for both MNIST and ImageNet-1k latent representations. This indicates that the downstream regression problem inherits anisotropy from the target latent distribution. Both normalizing flow (NF) and flow matching (FM) preconditioners substantially reduce the condition numbers, especially near the data endpoint. On MNIST, the NF preconditioner gives the most isotropic path, consistent with its strong quantitative performance in \Cref{tab:mnist_table}. The ImageNet-1k results show that this conditioning improvement also holds in a higher-dimensional and more complex latent space, supporting the broader role of preconditioning as a geometric optimization tool for flow matching.

\begin{figure*}[h]
    \centering

    \begin{subfigure}[t]{0.48\linewidth}
        \centering
        \includegraphics[width=\linewidth]{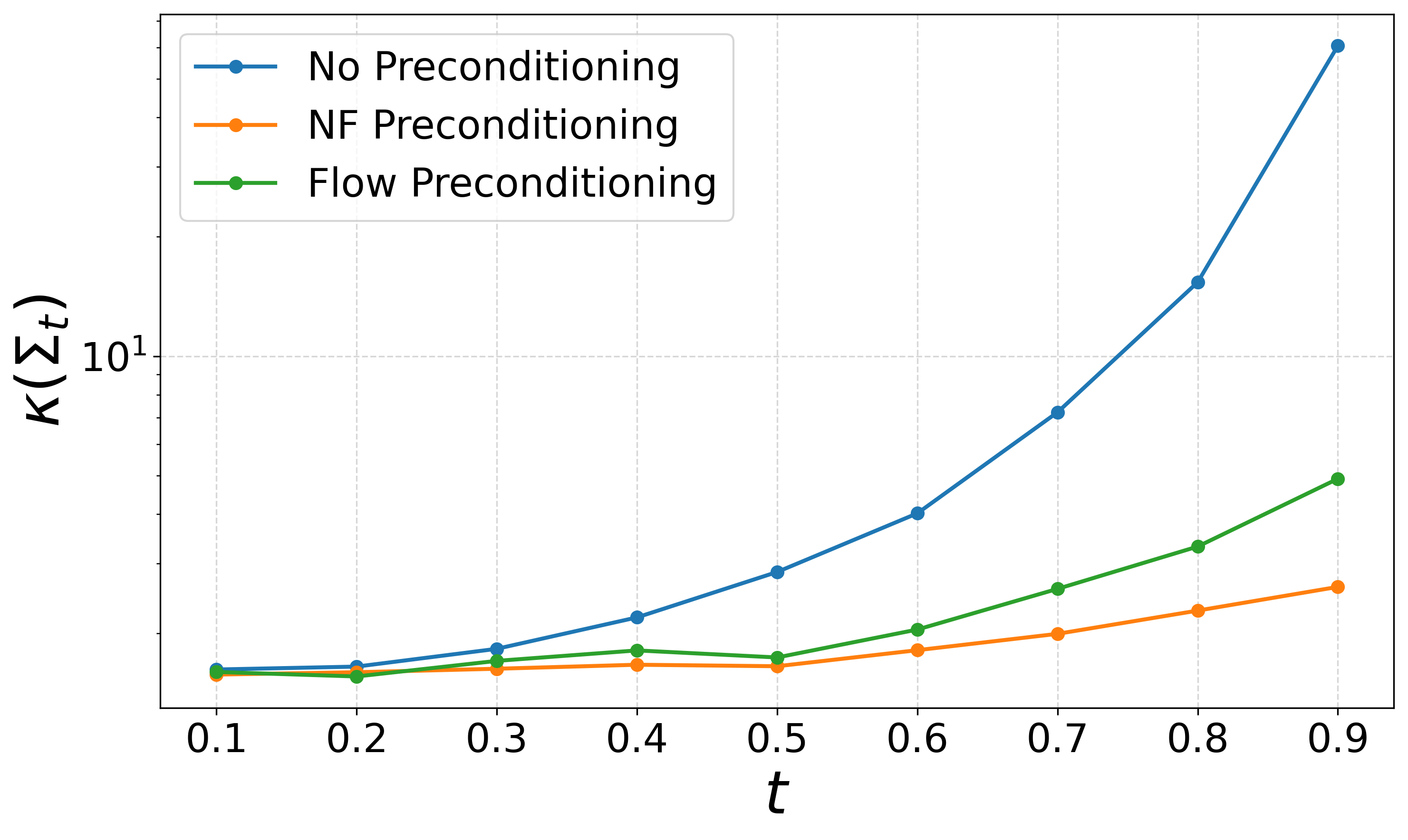}
        \caption{MNIST latent space}
        \label{fig:kappa_vs_t_mnist}
    \end{subfigure}
    \hfill
    \begin{subfigure}[t]{0.48\linewidth}
        \centering
        \includegraphics[width=\linewidth]{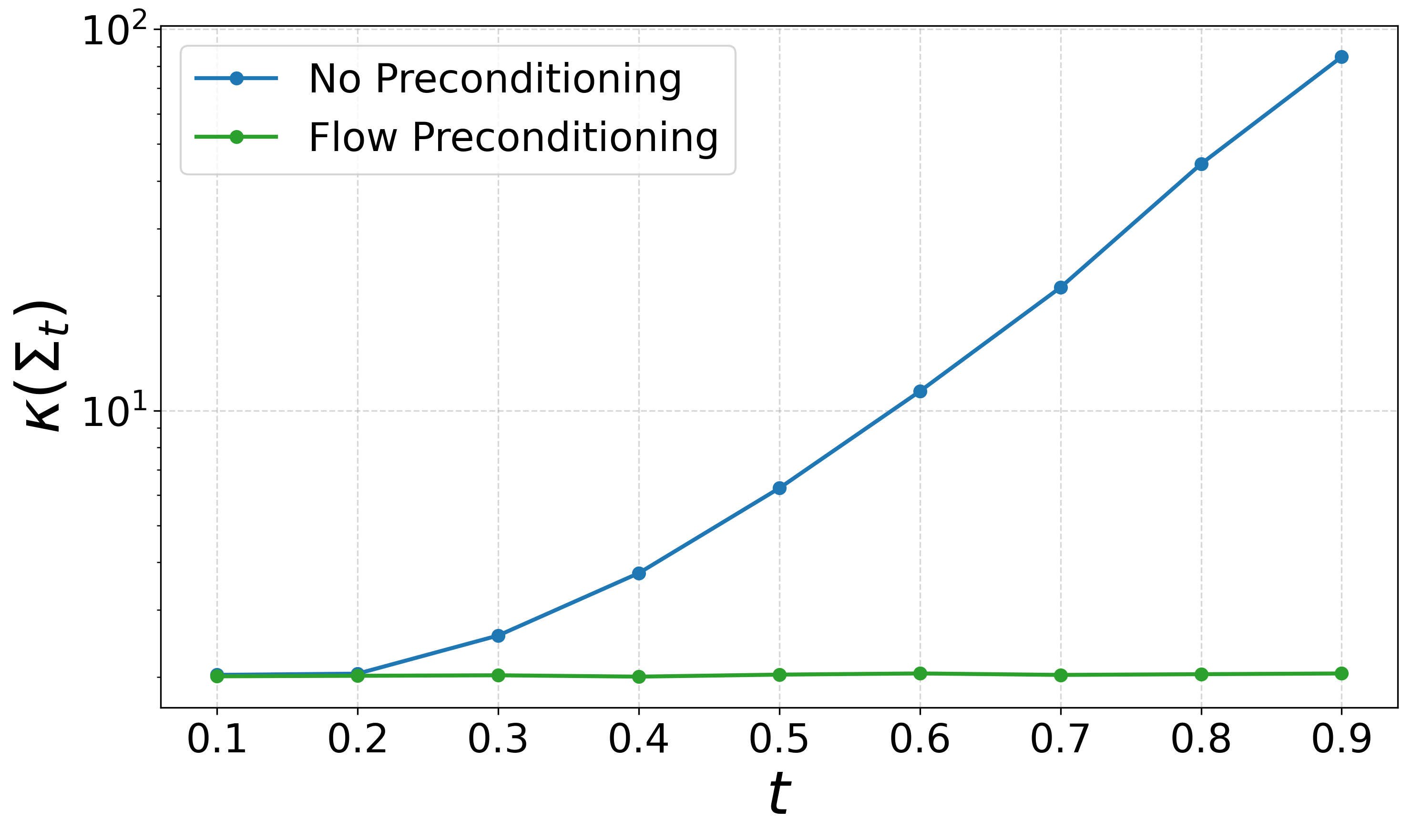}
        \caption{ImageNet-1k latent space}
        \label{fig:kappa_vs_t_imagenet1k}
    \end{subfigure}

    \caption{
    \textbf{Conditioning of intermediate flow matching distributions with and without preconditioning.}
    We plot the condition number $\kappa(\Sigma_t)$ of the intermediate covariance matrices $\Sigma_t$
    over time $t$ for \textbf{(a)} MNIST and \textbf{(b)} ImageNet-1k latent representations.
    Without preconditioning, the intermediate distributions become increasingly ill-conditioned as
    $t \to 1$, indicating that the downstream flow matching regression problem inherits strong
    anisotropy from the target representation. Both normalizing flow (NF) and flow matching (FM)
    preconditioning substantially reduce $\kappa(\Sigma_t)$, especially near the data endpoint,
    supporting the claim that preconditioning improves the geometry of the FM path rather than merely
    adding model capacity.
    }
    \label{fig:kappa_vs_t_compare_datasets}
\end{figure*}


\subsection{Preconditioner quality ablation: capacity and training iterations}
\label{appsec:preconditioner_quality_ablation}

We ablate the quality of the learned preconditioner on MNIST by varying two factors: 
(i) the capacity of the preconditioner, measured by its number of trainable parameters, and 
(ii) the amount of preconditioner training, measured by the number of training epochs. 
All evaluations are performed using 50k generated MNIST samples. 
The goal of this ablation is to test whether the gains from preconditioning require a highly accurate or highly overparameterized preconditioner, or whether a moderately trained, moderate-capacity preconditioner is already sufficient to improve downstream flow matching generation.

\begin{figure}[h]
    \centering
    \begin{minipage}{0.49\linewidth}
        \centering
        \includegraphics[width=\linewidth]{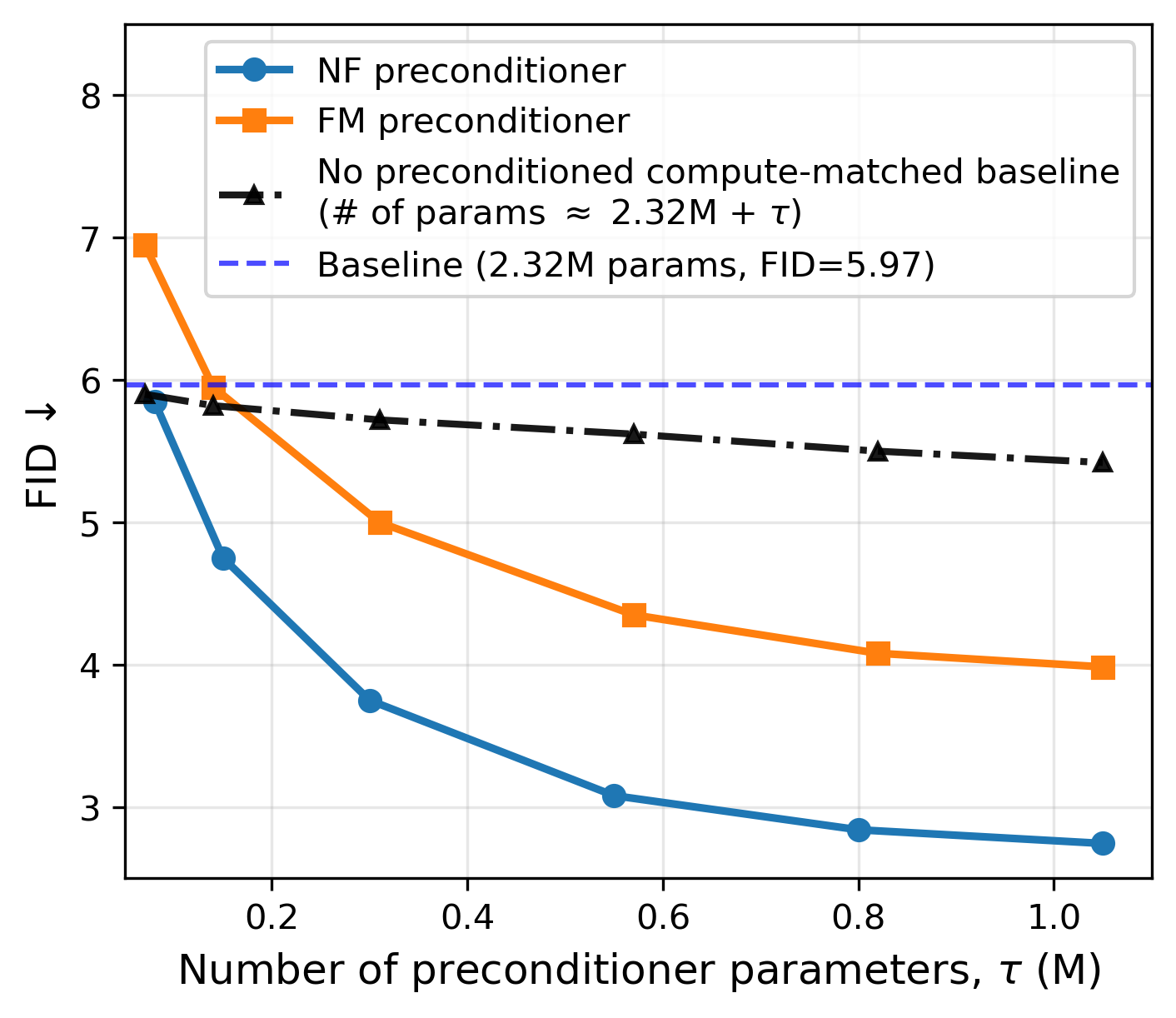}
        \vspace{-0.5em}
        \caption*{\small (a) Effect of preconditioner capacity.}
    \end{minipage}
    \hfill
    \begin{minipage}{0.49\linewidth}
        \centering
        \includegraphics[width=\linewidth]{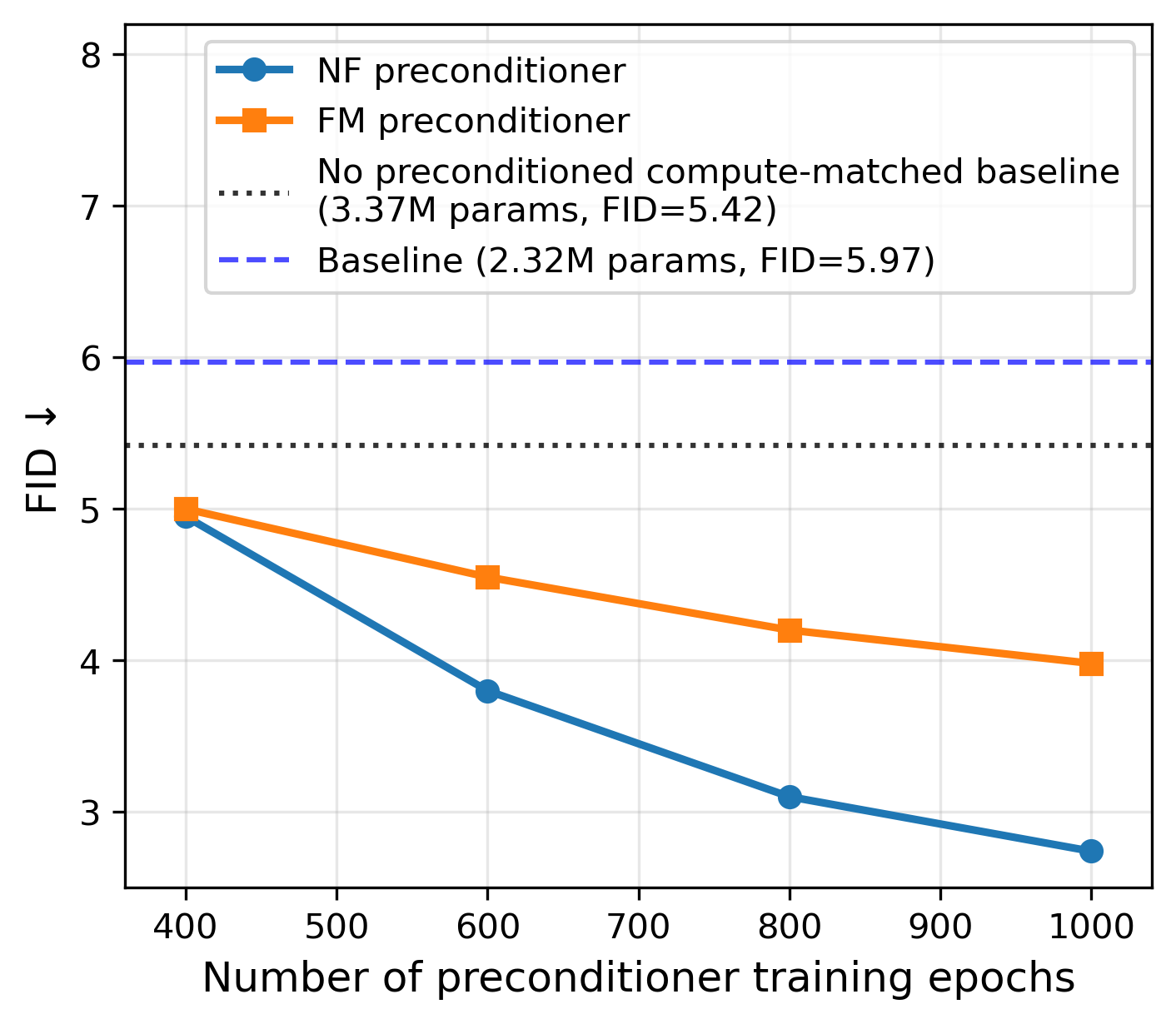}
        \vspace{-0.5em}
        \caption*{\small (b) Effect of preconditioner training iterations.}
    \end{minipage}
    \caption{
    \textbf{Preconditioner quality ablation on MNIST.}
    We evaluate image generation quality as a function of preconditioner capacity and training progress using FID computed on 50k generated samples.
    \textbf{(a)} Increasing the number of preconditioner parameters improves generation quality for both normalizing flow (NF) and flow matching (FM) preconditioners. The blue dashed line denotes the original unpreconditioned FM baseline with 2.32M parameters, and the black dash-dotted curve shows a compute-matched unpreconditioned baseline whose parameter count is increased by approximately the same number of parameters as the preconditioner. Preconditioning improves FID more substantially than simply allocating the same additional parameter budget to the unpreconditioned generator.
    \textbf{(b)} Increasing the number of preconditioner training epochs also improves FID, with the gains eventually beginning to saturate. The dotted horizontal line shows the compute-matched unpreconditioned baseline with 3.37M parameters, and the dashed horizontal line shows the original 2.32M-parameter baseline. Across both ablations, NF preconditioning yields the strongest gains, while FM preconditioning also consistently improves over the unpreconditioned baselines.
    }
    \label{fig:preconditioner_quality_ablation}
\end{figure}

\Cref{fig:preconditioner_quality_ablation} shows that the effectiveness of preconditioning depends on both the capacity and the training quality of the preconditioner. 
In \Cref{fig:preconditioner_quality_ablation} (a), increasing the number of preconditioner parameters steadily improves FID for both NF and FM preconditioners. For each preconditioner-capacity point, the preconditioner was trained to convergence using periodically evaluated FID10k as the stopping/selection criterion. We used the checkpoint with the best converged FID10k for the downstream FM model, so the ablation reflects the effect of preconditioner capacity rather than differences in training duration or undertraining. This suggests that a more expressive preconditioner can better reshape the data distribution into a geometry that is easier for the downstream flow matching model to learn. 
However, the improvement is not simply due to adding parameters: the compute-matched unpreconditioned baseline, which receives a comparable increase in parameter count, improves only marginally relative to the original baseline. 
Thus, the benefit comes primarily from the geometric effect of preconditioning rather than from model capacity alone.

\Cref{fig:preconditioner_quality_ablation} (b) shows a complementary trend when the preconditioner capacity is fixed (1.05M parameters) and the number of preconditioner training epochs is varied. 
Poorly trained preconditioners provide only moderate improvement, while better-trained preconditioners lead to substantially lower FID. 
The gains eventually begin to plateau, indicating that the downstream flow matching model does not require a perfectly converged preconditioner to benefit from the transformed geometry. 
This behavior is desirable in practice: preconditioning can improve generation even when the preconditioner is trained only to moderate accuracy.

Across both ablations, NF preconditioning achieves the lowest FID, while FM preconditioning still consistently improves over the unpreconditioned and compute-matched baselines. 
These results support the view that preconditioning is not merely an architectural enlargement of the generator. 
Instead, the preconditioner changes the geometry of the learning problem in a way that improves downstream flow matching optimization and sample quality.

\section{Qualitative Comparison for Flow-Based Preconditioning}
\label{qualititive_comparison_images}
\subsection{MNIST}
\label{qualitative_comparison_mnist}
We provide qualitative comparison for the flow matching in the latent space of an MNIST-trained VAE (latent dimension 64, see \Cref{appsubsec:mnist} for details) when the latent space was not preconditioned (left) (FID = 5.97), preconditioned using a normalizing flow (middle) (FID = 2.74), and using a flow matching model (right) (FID = 3.98) in \Cref{fig:mnist_grid_comparison}. The visual quality of the samples is consistent with the FID scores, with sharper and more coherent digits observed in the preconditioned cases. We also analyzed the condition number dynamics as a function of time $t$ during the training of these flow models, which are presented in \Cref{fig:kappa_vs_t_mnist}.

\begin{figure}[h]
    \centering
    \includegraphics[width=\textwidth]{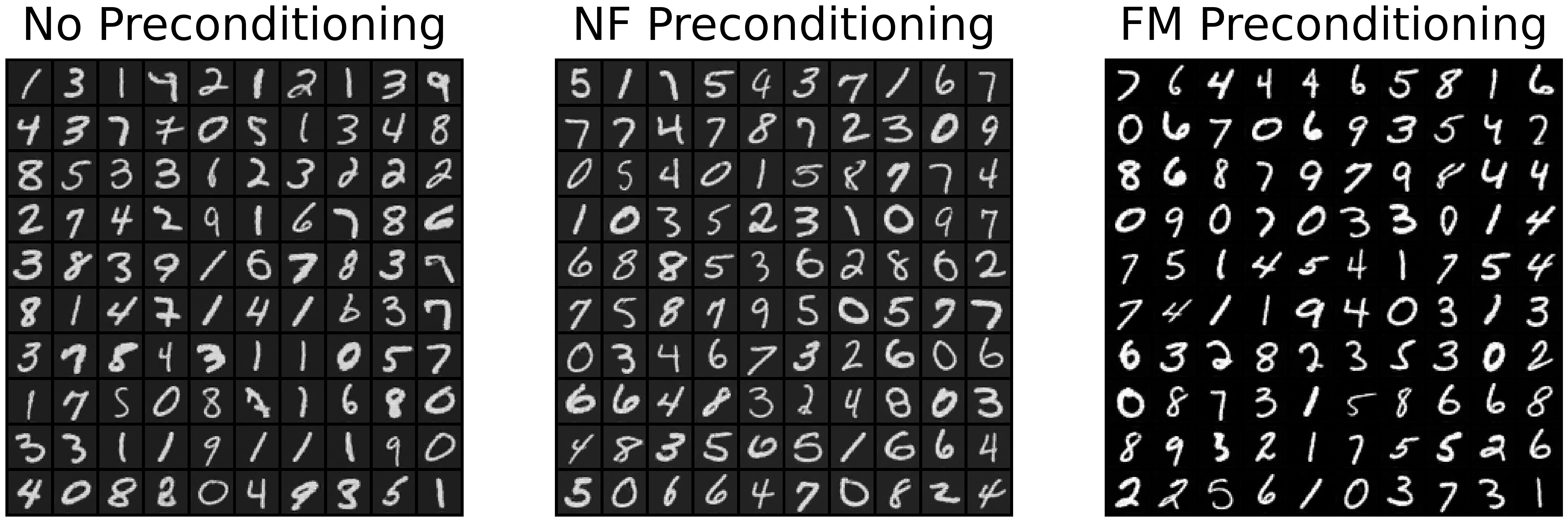}
    \caption{\textbf{MNIST qualitative comparison.} Samples generated by latent-space flow matching under different preconditioning strategies. \textbf{Left:} The no-preconditioning baseline produces recognizable digits but shows occasional shape inconsistencies and reduced sharpness (FID = 5.97). \textbf{Middle:} NF preconditioning improves sample fidelity and yields cleaner digit shapes (FID = 2.74). \textbf{Right:} FM preconditioning also improves over the baseline, producing more consistent and diverse digits (FID = 3.98). Overall, the qualitative trends are consistent with the quantitative FID improvements.}
    \label{fig:mnist_grid_comparison}
\end{figure}

\subsection{Image samples across resolutions}
\label{qualitative_comparison_highres_images}

\begin{figure*}[ht]
    \centering
    \includegraphics[width=\textwidth]{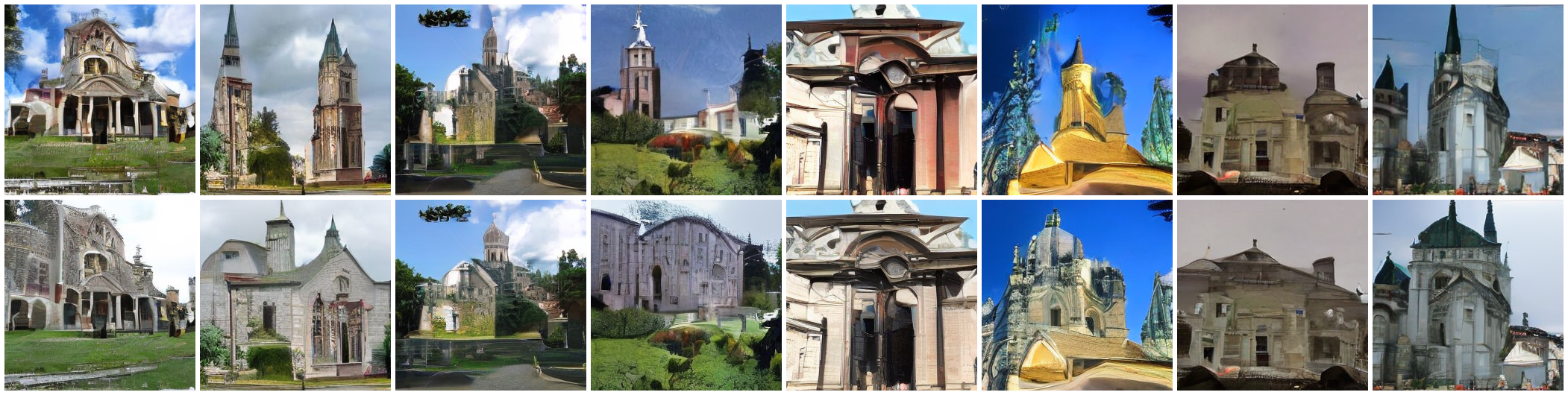}
    \caption{
        \textbf{LSUN Churches qualitative comparison.}
        Samples generated by standard flow matching (\emph{top}) and by flow-based preconditioning
        using an additional flow (\emph{bottom}). Preconditioning produces a sharper global structure
        and more stable spatial layouts.
    }
    \label{fig:churches_comparison}
\end{figure*}

\begin{figure*}[ht]
    \centering
    \includegraphics[width=\textwidth]{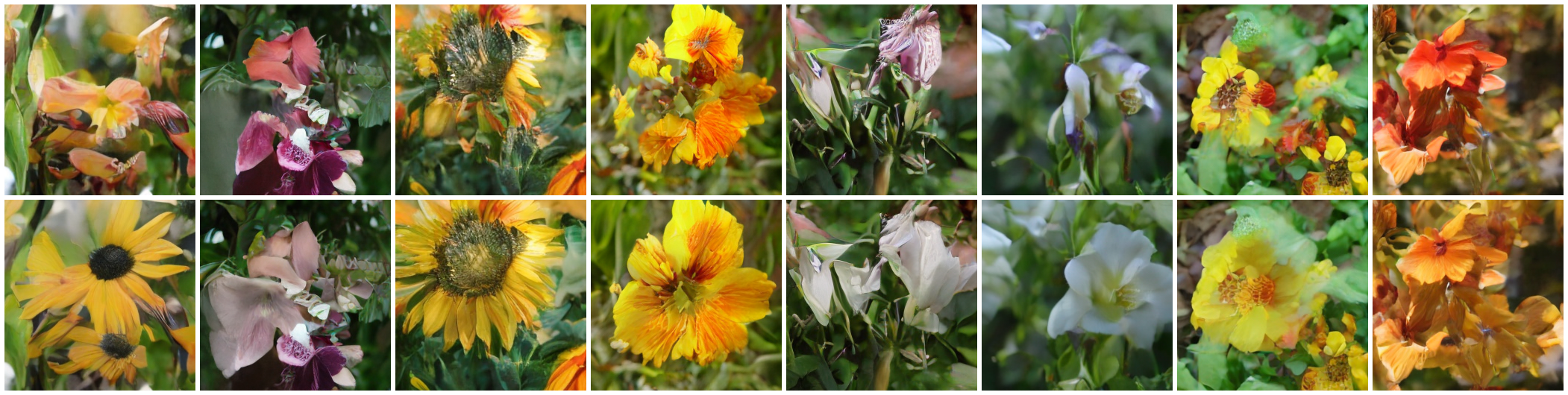}
    \caption{
        \textbf{Oxford Flowers-102 qualitative comparison.}
        Samples generated by standard flow matching (\emph{top}) and by flow-based preconditioning
        using an additional flow (\emph{bottom}). Preconditioning yields more coherent structure
        and improved visual consistency.
    }
    \label{fig:flowers_comparison}
\end{figure*}

\begin{figure*}[tbh!]
    \centering
    \includegraphics[width=\textwidth]{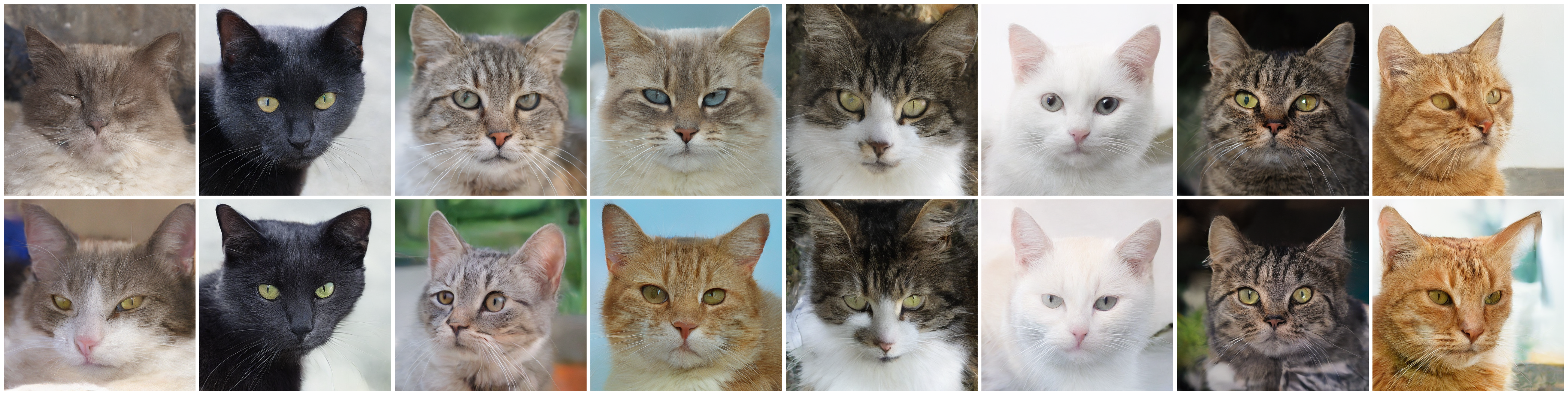}
    \caption{
        \textbf{AFHQ Cats qualitative comparison.}
        Samples generated by standard flow matching (\emph{top}) and by flow-based preconditioning
        using an additional flow (\emph{bottom}).
    }
    \label{fig:cats_comparison}
\end{figure*}

\begin{figure*}[tbh!]
    \centering
    \includegraphics[width=\textwidth]{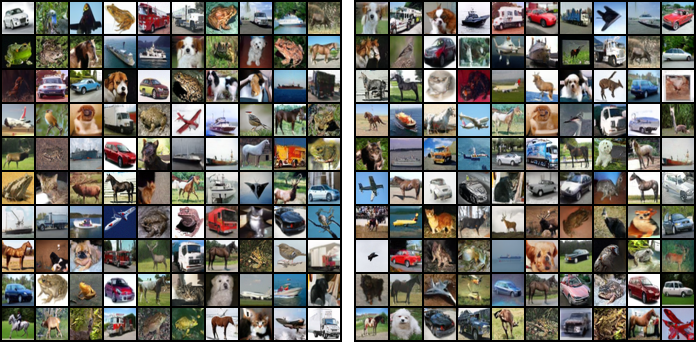}
    \caption{
        \textbf{CIFAR-10 qualitative results.}
        Samples generated by standard flow matching (\emph{left}) and by flow-based preconditioning using an additional flow (\emph{right}).
    }
    \label{fig:cifar10_comparison}
\end{figure*}

We provide qualitative comparisons illustrating the effect of flow-based preconditioning on sample quality for LSUN Churches \citep{yu2015lsun}, Oxford Flowers-102 \citep{nilsback2008automated}, AFHQ Cats \citep{choi2020starganv2}, CIFAR-10 \citep{krizhevsky2009learning} and ImageNet-1k \citep{chrabaszcz2017downsampled}. As shown in \Cref{fig:flowers_comparison,fig:churches_comparison,fig:cats_comparison,fig:cifar10_comparison,fig:imagenet64_qualitative_comparison_1,fig:imagenet64_qualitative_comparison_2}, these examples complement the quantitative improvements reported in \Cref{tab:high_res_image_synthesis_results_table} and illustrate how the additional flow improves sample consistency in cases where standard flow matching often remains visually imperfect.

\begin{figure}[tbh!]
    \centering

    \begin{subfigure}{0.65\linewidth}
        \centering
        \includegraphics[width=\linewidth]{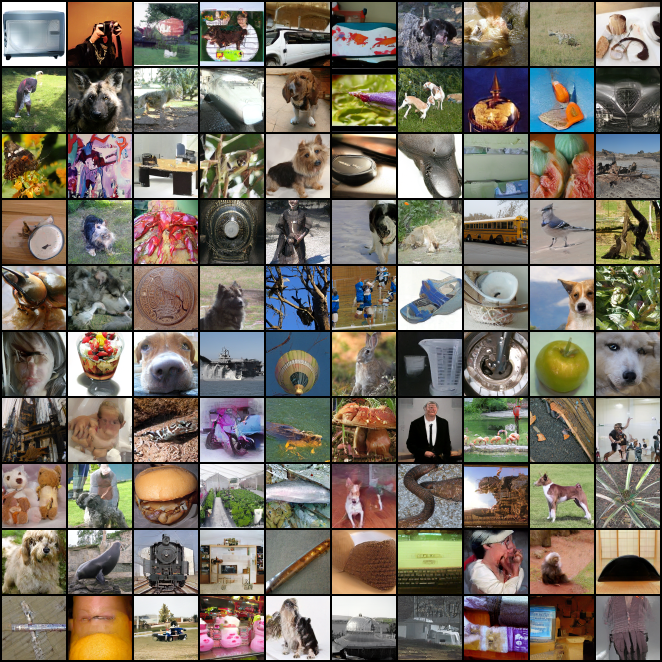}
        \label{fig:imagenet64_baseline_samples_1}
    \end{subfigure}

    \vspace{0.2em}

    \begin{subfigure}{0.65\linewidth}
        \centering
        \includegraphics[width=\linewidth]{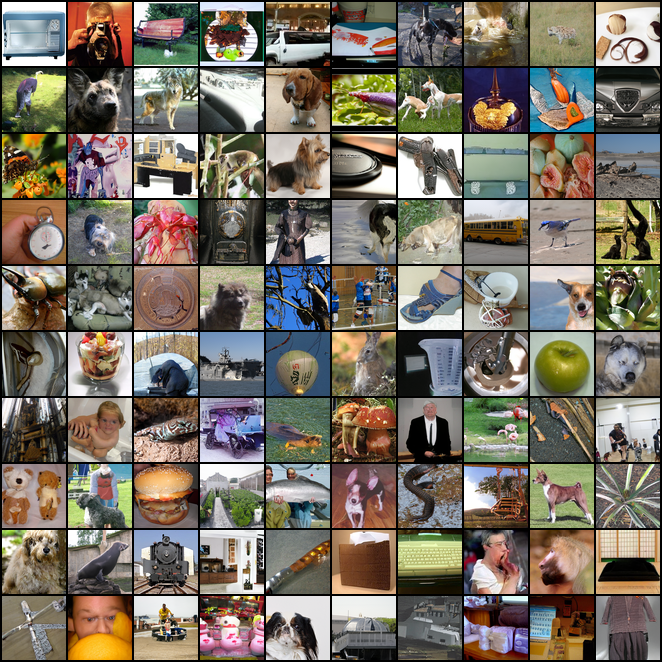}
        \label{fig:imagenet64_preconditioned_samples_1}
    \end{subfigure}

    \caption{\textbf{ImageNet-1k qualitative comparison.}
    Samples generated by standard flow matching \textit{(top)} and by flow-based preconditioning using an additional flow \textit{(bottom)}. Preconditioning improves visual consistency and sample fidelity while using the same downstream generation setting.}
    \label{fig:imagenet64_qualitative_comparison_1}
\end{figure}

\begin{figure}[tbh!]
    \centering

    \begin{subfigure}{0.65\linewidth}
        \centering
        \includegraphics[width=\linewidth]{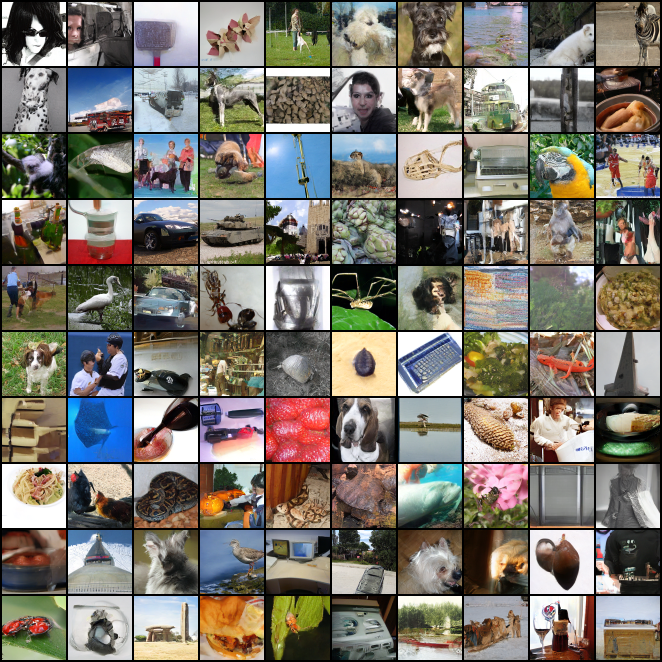}
        \label{fig:imagenet64_baseline_samples_2}
    \end{subfigure}

    \vspace{0.2em}

    \begin{subfigure}{0.65\linewidth}
        \centering
        \includegraphics[width=\linewidth]{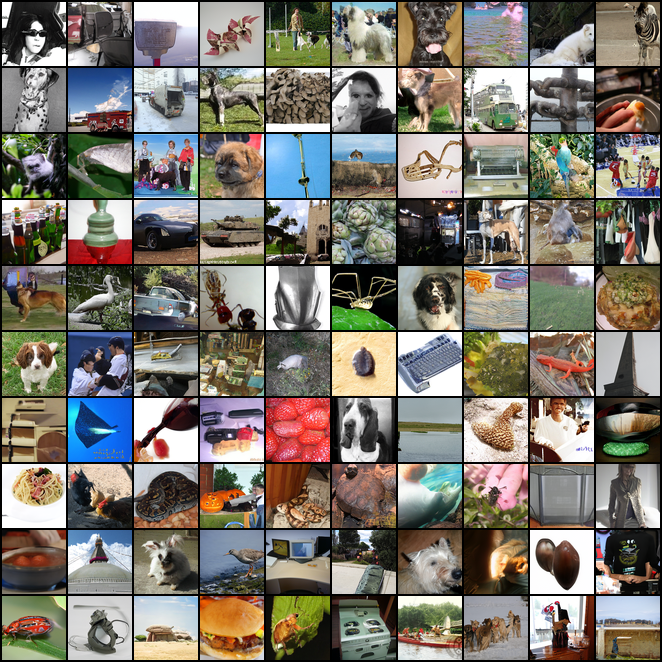}
        \label{fig:imagenet64_preconditioned_samples_2}
    \end{subfigure}

    \caption{\textbf{ImageNet-1k qualitative comparison.}
    Samples generated by standard flow matching \textit{(top)} and by flow-based preconditioning using an additional flow \textit{(bottom)}. Preconditioning improves visual consistency and sample fidelity while using the same downstream generation setting.}
    \label{fig:imagenet64_qualitative_comparison_2}
\end{figure}

\section{Experimental Details}
\label{experimental_settings}

\subsection{MNIST}
\label{appsubsec:mnist}
We evaluate preconditioned flow matching on MNIST dataset. All MNIST experiments are performed in the latent space of a trained VAE. This VAE consisted of a deep ResNet-based architecture with group normalization, SiLU activations, and was trained using a combination of MSE and perceptual loss. The encoder and decoder consist of multiple downsampling and upsampling ResBlocks, with a latent dimensionality of 64. VAE training used cosine learning rate decay, KL regularization with low $\beta = 0.05$. Generated latent samples are decoded back to image space using the VAE decoder for evaluation.

We compare three variants: standard latent flow matching with no preconditioning, normalizing flow (NF) preconditioning, and flow matching (FM) preconditioning. In all three cases, the main generator is a class-conditional CNN-based latent vector field with 256 hidden channels, 7 convolutional layers, and one-hot class conditioning through a label MLP. The main generator has approximately 2.32M parameters across all variants, so differences in performance are not caused by changing the main FM architecture.

The two preconditioned variants differ in how the auxiliary transformation is learned. For NF preconditioning, we use a conditional RealNVP model with affine coupling layers, hidden MLP width 256, and roughly 1.05M parameters. For FM preconditioning, we first train a smaller class-conditional CNN vector field with 169 hidden channels and roughly 1.05M parameters, then train the main FM generator on the transformed latent distribution. Thus, the preconditioning stage is substantially cheaper than the main generator while still reshaping the geometry of the downstream flow matching problem.

The exact architectures, training spaces, interpolation functions, latent dimensions, learning rates, batch sizes, training epochs, and model sizes are summarized in \Cref{tab:mnist_experimental_table}.

\begin{table}[h]
\centering
\caption{Experimental settings and hyperparameters for MNIST latent flow matching experiments.}
\label{tab:mnist_experimental_table}
\scriptsize
\resizebox{\linewidth}{!}{%
\begin{tabular}{@{}lccc@{}}
\toprule
\textbf{Setting}
& \begin{tabular}[c]{@{}c@{}}\textbf{No Preconditioning} \end{tabular}
& \begin{tabular}[c]{@{}c@{}}\textbf{NF Preconditioning}\\ \end{tabular}
& \begin{tabular}[c]{@{}c@{}}\textbf{FM Preconditioning}\\ \end{tabular} \\
\midrule

Training space
& Latent
& Latent
& Latent \\

Latent dim
& 64 
& 64 
& 64  \\

\multirow{2}{*}{Interpolation}
& $s(t)=\sin(\pi t / 2)$
& $s(t)=\sin(\pi t / 2)$
& $s(t)=\sin(\pi t / 2)$\\

& $c(t)=\cos(\pi t / 2)$
& $c(t)=\cos(\pi t / 2)$
& $c(t)=\cos(\pi t / 2)$\\

\midrule
\multicolumn{4}{@{}l}{\textbf{FM generator model}} \\
\midrule

Network
& CNN 
& CNN 
& CNN \\

Hidden channels
& 256
& 256
& 256 \\

Depth
& 7 conv layers
& 7 conv layers
& 7 conv layers \\

Class conditioning
& one-hot(10)+MLP
& one-hot(10)+MLP
& one-hot(10)+MLP \\

Approx. parameters
& 2.32M
& 2.32M
& 2.32M \\

Max epochs
& 1000 
& 1000 
& 1000 \\

Learning rate
& $2\times10^{-4}$
& $2\times10^{-4}$
& $1\times10^{-3}$ \\

Optimizer
& Adam
& Adam
& Adam \\

Batch size
& 4096
& 4096
& 1024 \\

\midrule
\multicolumn{4}{@{}l}{\textbf{Preconditioner model}} \\
\midrule

Network
& --
& RealNVP
& CNN \\

\multirow{2}{*}{Key architecture}
& --
& hidden MLP width = 256
& hidden channels = 169\\

& --
& coupling layers = 8
& depth = 7 conv layers\\

Class conditioning
& --
& one-hot(10)+MLP
& one-hot(10)+MLP \\

Approx. parameters
& --
& 1.05M
& 1.05M \\

Max epochs
& --
& 450
& 650 \\

Learning rate
& --
& $2\times10^{-4}$
& $1\times10^{-3}$ \\

Optimizer
& --
& Adam
& Adam \\

Batch size
& --
& 4096
& 1024 \\
\bottomrule
\end{tabular}%
}
\end{table}

\subsection{Image synthesis across resolutions}
\label{appsubsec:floweers_and_lsun_churches}

We evaluate preconditioned flow matching on five image datasets: LSUN Churches ($256 \times 256$) \citep{yu2015lsun}, Oxford Flowers-102 ($256 \times 256$) \citep{nilsback2008automated}, AFHQ Cats ($512 \times 512$) \citep{choi2020starganv2}, CIFAR-10 ($32 \times 32$) \citep{krizhevsky2009learning}, and ImageNet-1k ($64\times 64$) \citep{chrabaszcz2017downsampled}. For LSUN Churches, Oxford Flowers-102, AFHQ Cats, and ImageNet-1k, we train in the latent space of a pretrained Stable Diffusion VAE encoder--decoder \citep{rombach2022high}. This compresses images to $4\times32\times32$ latents for LSUN Churches and Oxford Flowers-102, $4\times64\times64$ latents for AFHQ Cats, and $4\times8\times8$ latents for ImageNet-1k. CIFAR-10 is trained directly in pixel space at its native $32\times32$ resolution.

The generator and preconditioner architectures are chosen according to the scale and structure of each dataset. For LSUN Churches and Oxford Flowers-102, both the main generator and the preconditioner use UNet backbones in the VAE latent space, with the preconditioner trained for fewer epochs than the main generator. For AFHQ Cats, both models operate in the VAE latent space: we use a UNet preconditioner and a DiT-L/2 backbone for the main flow matching generator. CIFAR-10 is modeled in pixel space using class-conditional UNets, with a smaller preconditioner obtained by reducing the base channel count. For ImageNet-1k, we use a DiT-B/2 generator and a smaller DiT-S/2 preconditioner in the VAE latent space.

For each dataset, we compare a standard flow matching generator against a preconditioned variant. The preconditioned variant first trains an auxiliary flow model, which is then used to transform the data distribution into a better-conditioned representation. The main flow matching generator is subsequently trained on this preconditioned distribution, and generated samples are mapped back through the inverse preconditioning flow at inference time. This design makes the preconditioning stage cheaper than the main generator, either by using a lower-capacity model or by training it to lower accuracy, while still improving the geometry of the downstream flow matching problem.

The exact architectures, training spaces, interpolation functions, latent dimensions, learning rates, batch sizes, training epochs, and model sizes are summarized in \Cref{tab:experimental_settings}.

\begin{table}[h]
\centering
\caption{Experimental settings and hyperparameters for LSUN Churches, Oxford Flowers-102, AFHQ Cats, CIFAR-10, and ImageNet-1k.}
\label{tab:experimental_settings}
\scriptsize
\resizebox{\linewidth}{!}{
\begin{tabular}{@{}lccccc@{}}
\toprule
\textbf{Setting}
& \textbf{LSUN Churches}
& \textbf{Oxford Flowers-102}
& \textbf{AFHQ Cats}
& \textbf{CIFAR-10}
& \textbf{ImageNet-1k} \\
\midrule

Image resolution 
& $256{\times}256$ 
& $256{\times}256$ 
& $512{\times}512$ 
& $32{\times}32$ 
& $64{\times}64$ \\

Training space 
& Latent 
& Latent 
& Latent 
& Pixel 
& Latent \\

VAE / latent encoder
& Stable Diffusion VAE 
& Stable Diffusion VAE 
& Stable Diffusion VAE 
& -- 
& Stable Diffusion VAE \\

Latent dim
& $4 \times 32 \times 32$
& $4 \times 32 \times 32$
& $4 \times 64 \times 64$
& -- 
&   $4 \times 8\times 8$\\

\multirow{2}{*}{Interpolation}
& $s(t)=\sin(\pi t / 2)$
& $s(t)=\sin(\pi t / 2)$
& $s(t)=\sin(\pi t / 2)$
& $s(t)=t$
& $s(t)=t$ \\

& $c(t)= \cos(\pi t / 2)$
& $c(t)= \cos(\pi t / 2)$
& $c(t)= \cos(\pi t / 2)$
& $c(t)=(1-t)$
& $c(t)=(1-t)$ \\

\midrule
\multicolumn{6}{@{}l}{\textbf{FM generator model}} \\
\midrule

Network 
& UNet 
& UNet 
& DiT-L/2 
& UNet
& DiT-B/2 \\

Base / hidden channels 
& 256 
& 256 
& 1024
& 256 
& 768 \\

Depth / residual blocks 
& 3 blocks 
& 3 blocks 
& 24 blocks 
& 3 blocks
& 12 blocks \\

Attention heads 
& 4 
& 4 
& 16 
& 4 
& 12 \\

Attention resolutions 
& $16{\times}16$   
& $16{\times}16$   
& --
& $16{\times}16$, $8{\times}8$ 
& -- \\

Patch size 
& -- 
& -- 
& $2{\times}2$ 
& -- 
& $2{\times}2$ \\

Class embedding dim
& --  
& --  
& -- 
& 128 
& 768 \\

CFG dropout 
& -- 
& --  
& -- 
& 0.1 
& 0.1 \\

Approx. parameters 
& 189M 
& 189M 
& 458M
& 214M
& 130M \\

Max epochs 
& 150 
& 300 
& 50  
& 1275 
& 350 \\

Learning rate 
& $5{\times}10^{-5}$ 
& $5{\times}10^{-5}$ 
& $1{\times}10^{-4}$  
& $2{\times}10^{-4}$  
& $1{\times}10^{-4}$ \\

Optimizer 
& AdamW 
& AdamW 
& AdamW 
& AdamW 
& AdamW \\

Batch size 
& 32 
& 32 
& 32 $(16{\times}2)$ 
& 512 
& 1536 \\

\midrule
\multicolumn{6}{@{}l}{\textbf{Preconditioner model}} \\
\midrule

Network 
& UNet 
& UNet 
& UNet 
& UNet
& DiT-S/2 \\

Base / hidden channels 
& 256 
& 256 
& 192 
& 128 
& 384 \\

Depth / residual blocks 
& 3 blocks 
& 3 blocks 
& 2 blocks 
& 3 blocks
& 12 blocks \\

Attention heads 
& 4 
& 4 
& 12--24 
& 4 
& 6 \\

Attention resolutions 
& $16{\times}16$   
& $16{\times}16$    
& $16{\times}16$, $8{\times}8$ 
& $16{\times}16$, $8{\times}8$ 
& -- \\

Patch size 
& -- 
& -- 
& $2{\times}2$ 
& -- 
& $2{\times}2$ \\

Class embedding dim 
& --   
& --   
& --   
& 128 
& 384 \\

Approx. parameters 
& 189M 
& 189M 
& 148M 
& 54M 
& 33M \\

Max epochs 
& 50 
& 100 
& 200   
& 200 
& 200 \\

Learning rate 
& $5{\times}10^{-5}$ 
& $5{\times}10^{-5}$ 
& $1{\times}10^{-4}$ 
& $1{\times}10^{-4}$
& $1{\times}10^{-4}$ \\

Optimizer 
& AdamW 
& AdamW 
& AdamW 
& AdamW 
& AdamW \\

Batch size 
& 32 
& 32 
& 32 $(16{\times}2)$ 
& 128
& 1536 \\

\bottomrule
\end{tabular}
}
\end{table}

\section{Computational Resources Used}
\label{app:computational_resources}

The experiments in this paper were run on a single compute server with three NVIDIA RTX A6000 GPUs, each providing 48\,GB of memory. The host machine further comprised two Intel Xeon Gold 5317 CPUs (24 physical cores / 48 logical CPUs total) and 1\,TB of RAM, and ran AlmaLinux 9.6.

All model training, sampling, and evaluation were performed on this system. Where applicable, we used distributed data-parallel training across the three GPUs. This hardware configuration was sufficient for all experiments reported in the paper.

\section{Existing Assets and Licenses}
\label{app:assets_licenses}

The experiments use existing public datasets and standard model components. We cite the original sources for all datasets used in the paper, including MNIST, CIFAR-10, ImageNet-1k, LSUN Churches, Oxford Flowers-102, and AFHQ Cats. We use these assets only for research and evaluation purposes and respect their associated terms of use. We also cite the Stable Diffusion VAE (CC-BY 4.0) checkpoint which was used to encode our images to the latent space for higher-resolution images. No new dataset is introduced or redistributed as part of this work. Any released code will include attribution to external codebases or libraries used, together with their corresponding licenses.




\end{document}